\theoremstyle{plain}
\newtheorem{theorem}{Theorem}[section]
\newtheorem{corollary}{Corollary}[section]
\newtheorem{lemma}{Lemma}[section]
\newtheorem{proposition}{Proposition}[section]
\newtheorem{definition}{Definition}[section]
\theoremstyle{definition}
\newtheorem{example}{Example}[section]
\newtheorem{remark}{Remark}[section]
\def\XX{{\mathbb X}}
\def\BB{{\mathbb B}}
\def\TT{{\mathbb T}}
\def\RR{{\mathbb R}}
\def\ZZ{{\mathbb Z}}
\def\MM{\mathbb{M}}
\def\PPI{{{\rm I}\kern-1pt\Pi}}
\def\SS{{\mathbb S}}
\newcommand\abs[1]{\left|#1\right|}
\newcommand\dist{\operatorname{dist}}
\newcommand\argmax{\operatorname*{argmax}}
\newcommand{\be}{\begin{equation}}
\newcommand{\ee}{\end{equation}}
\newcommand{\ba}{\begin{aligned}}
\newcommand{\ea}{\end{aligned}}
\newcommand{\ben}{\begin{enumerate}}
\newcommand{\een}{\end{enumerate}}
\newcommand{\bit}{\begin{itemize}}
\newcommand{\eit}{\end{itemize}}
\def\ls{\lesssim}
\def\D{{\mathcal D}}
\def\gs{\gtrsim}
\def\disp{\displaystyle}
\newcommand{\yadi}{\nomenclature}
\title{A signal separation view of classification}
\author{H. N. Mhaskar\thanks{
Institute of Mathematical Sciences, Claremont Graduate University, Claremont, CA 91711. 
\textsf{email:} hrushikesh.mhaskar@cgu.edu.
The research is  supported in part by  ONR grants N00014-23-1-2394, N00014-23-1-2790.} \and Ryan O'Dowd\thanks{Institute of Mathematical Sciences, Claremont Graduate University, Claremont, CA 91711. 
\textsf{email:} ryan.o'dowd@cgu.edu.}}
\date{\today}
\numberwithin{equation}{section}
\begin{document}
\maketitle


\begin{abstract}
The problem of classification in machine learning has often been approached in terms of function approximation. 
In this paper, we propose an alternative approach for classification in arbitrary compact metric spaces which, in theory, yields both the number of classes, and a perfect classification using a minimal number of queried labels.
Our approach uses localized trigonometric polynomial kernels initially developed for the point source signal separation problem in signal processing. 
Rather than point sources, we argue that the various classes come from different probability measures.
The localized kernel technique developed for separating point sources is then shown to separate the supports of these measures.
This is done in a hierarchical manner in our MASC algorithm to accommodate touching/overlapping class boundaries.
We illustrate our theory on several simulated and real life datasets, including the Salinas and Indian Pines hyperspectral datasets and a document dataset.
\end{abstract}

\section{Introduction}
\label{sec:intro}
A fundamental problem in machine learning is the following. Let $\{(x_j,y_j)\}_{j=1}^M$ be random samples from an \textbf{unknown} probability measure $\tau$. 
The problem is to approximate the conditional expectation $f(x)=\mathbb{E}_\tau(y|x)$ as a function of $x$. 
Naturally, there is a huge amount of literature studying function approximation by commonly used tools in machine learning such as neural and kernel based networks. 
For example, the universal approximation theorem gives conditions under which a neural network can approximate an arbitrary \textbf{continuous} function on an arbitrary compact subset of the ambient Euclidean space \cite{chuili1992,cybenko-net}.
The estimation of the complexity of the approximation process typically assumes some smoothness conditions on $f$, examples of which include, the number of derivatives, membership in various classes such as Besov spaces, Barron spaces, variation spaces, etc \cite{barron-universal,mhaskar-multilayer,mhaskar-net}.

A very important problem is one of classification.
Here the values of $y_j$ can take only finitely many (say $K$) values, known as the class labels. 
In this case, it is fruitful to approximate the classification function, defined by $f(x)=\argmax_k \mathsf{Prob}(k|x)$ \cite{murphy2022probabilistic}. 
Obviously, this function is only piecewise continuous, so that the universal approximation theorem does not apply directly. 
In the case when the classes are supported on well-separated sets, one may refer to extension theorems such as Stein extension theorems \cite{stein} in order to justify the use of the various approximation theorems to this problem.

While these arguments are sufficient for pure existence theorems, they also create difficulties in an actual implementation, in particular, because these extensions are not easy to construct. 
In fact, this would be impossible if the classes are not well-separated, and might even overlap.
Even if the classes are well-separated, and  each class represents a Euclidean domain, any lack of smoothness in the boundaries of these domains is a problem.
Some recent efforts, for example,  by Petersen and Voigtl\"ander  \cite{petersen-relu} deal with the question of accuracy in approximation when the class boundaries are not smooth. 
However, a popular assumption in the last twenty years or so is that the data is distributed according a probability measure supported on a low dimensional manifold of a high dimensional ambient Euclidean space.
In this case, the classes have boundary of measure $0$ with respect to the Lebesgue measure on the ambient space.
Finally, approximation algorithms, especially with deep networks, utilize a great deal of labeled data.

In this paper, we propose a different approach as advocated in \cite{cloningercluster}. 
Thus, we do not assume that $\mathsf{Prob}(k|x)$ is a function, but assume instead that the points $x_j$ in class $k$ comprise the support of a probability measure $\mu_k$. 
The marginal distribution $\mu$ 
\yadi{${\mu}$}{Data measure of the input} 
of $\tau$ along $x$ is then a convex combination of the measures $\mu_k$\yadi{$\mu_k$}{Probability measure for class $k$}.
The fundamental idea is to determine the \textbf{supports} of the measures $\mu_k$ rather than approximating $\mu_k$'s themselves\footnote{If $\nu$ is a positive measure on a metric space $\MM$, we define the support of  any positive measure $\nu$ by $\mathsf{supp}(\nu)=\{x\in\MM, \nu(\BB(x,r))>0 \mbox{ for all } r>0\}$, where $\BB(x,r)$ is the ball of radius $r$ centered at $x$.}.  \yadi{$\mathsf{supp}$}{Support of a measure}
This is done in an unsupervised manner, based only on the $x_j$'s with no label information.
Having done so, we may then query an oracle for the label of one point in the support of each measure, which is then necessarily the label for every other point in the support. 
Thus, we aim to achieve in theory a  perfect classification using a minimal amount of judiciously chosen labeled data.

In order to address the problem of overlapping classes, we take a hierarchical multiscale approach motivated by a paper \cite{dasgupta2010} of Chaudhury and Dasgupta.
Thus, for each value $\eta$ of the minimal separation among classes, we assume that the support of $\mu$ is a disjoint union of $K_\eta$ subsets, each representing one of $K_\eta$ classes, leaving an extra set, representing the overlapping region. 
When we decrease $\eta$, we may eventually capture all the classes, leaving only a negligible overlapping region (ideally with $\mu$-probability $0$).

In \cite{cloningercluster}, we explored a new insight that the problem is analogous to the problem of point source signal separation. 
If each $\mu_k$ were a Dirac delta measure supported at say $\omega_k$, the point source signal separation problem is to find these point sources from finitely many observations of the Fourier transform of $\mu$. 
In the classification problem we do not have point sources and the information comprises samples from $\mu$ rather than its Fourier transform.
Nevertheless, we observed in \cite{cloningercluster} that the techniques developed for the point source signal separation problem can be adapted to the classification problem viewed as the the problem of separation of the supports of $\mu_k$.
In that paper, we assumed only that the data is supported on a compact subset of a Eulidean space, and used a specially designed localized kernel based on Hermite polynomials \cite{chuigaussian} for this purpose. 
Since Hermite polynomials are intrinsically defined on the whole Euclidean space, this creates both numerical and theoretical difficulties.
In this paper, we allow the data to come from an arbitrary compact metric space, and use localized trigonometric polynomial kernels instead.
We feel that this leads to a more satisfactory theory, although one of the accomplishments of this paper is to resolve the technical difficulties required to achieve this generalization.

Our work belongs to the general theory of active learning. In the active learning paradigm for machine learning, one is only given $\mathcal{D}\coloneqq \{x_j\}_{j=1}^M$. However, for any $x_j$ one is allowed to query an oracle for the true value $y_j$ associated with it. The understanding is that this process is costly, so one only wants to query as few points as possible to accurately attain the $y$ values for the rest of the data set. In this way, active learning exists as a bridge between unsupervised learning (where no $y_j$ values are available) and semi-supervised learning (where the $y_j$ values are available only on a preselected set of points $x_j$). The critical problem of active learning is to decide on which points to query. One wishes to query points that give as much information on the rest of the data as possible. We list the survey by Tharwat and Schenck \cite{tharwat-survey} as a resource for recent developments in active learning.

Our main theorem \ref{thm:class_separation} concerns a method to estimate, based on the finite data $\mathcal{D}$, the supports of a measure corresponding to different class labels. To aid in this estimation, we assume that the data lies on an unknown subset of a known compact metric space $\mathbb{M}$ with metric $\rho$. The idea of the theorem is shown as a simple visual in Figure~\ref{fig:thmviz}. In this figure, we see that the true classes (shown at the top) have no minimal separation. Supposing we can define partitions $\mathbf{S}_{1,\eta},\mathbf{S}_{2,\eta},\mathbf{S}_{3,\eta}$ where $\mathbf{S}_{3,\eta}$ is small ($\mu(\mathbf{S}_{3,\eta})\to 0$ as $\eta\to 0$) and $\mathbf{S}_{1,\eta},\mathbf{S}_{2,\eta}$ correspond to the original class labels and have separation $2\eta$, then our theorem gives conditions (based on some parameters $n,\Theta$ and size of the set $\mathcal{D}$) for when our support estimation sets $\mathcal{G}_{1,\eta,n}(\Theta),\mathcal{G}_{2,\eta,n}(\Theta)$ have separation $\eta$ and closely estimate $\mathbf{S}_{1,\eta},\mathbf{S}_{2,\eta}$. In this paper, we consider measures which satisfy this partitioning property for any sufficiently small $\eta$. This allows us to take $\eta\to 0$ so that $\mathbf{S}_{1,\eta},\mathbf{S}_{2,\eta}$ approach the true classes. This framework  allows us to consider the machine learning classification problem even for classes that may have no minimal separation.

\begin{figure}[!ht]
\begin{center}
\includegraphics[width=.6\textwidth]{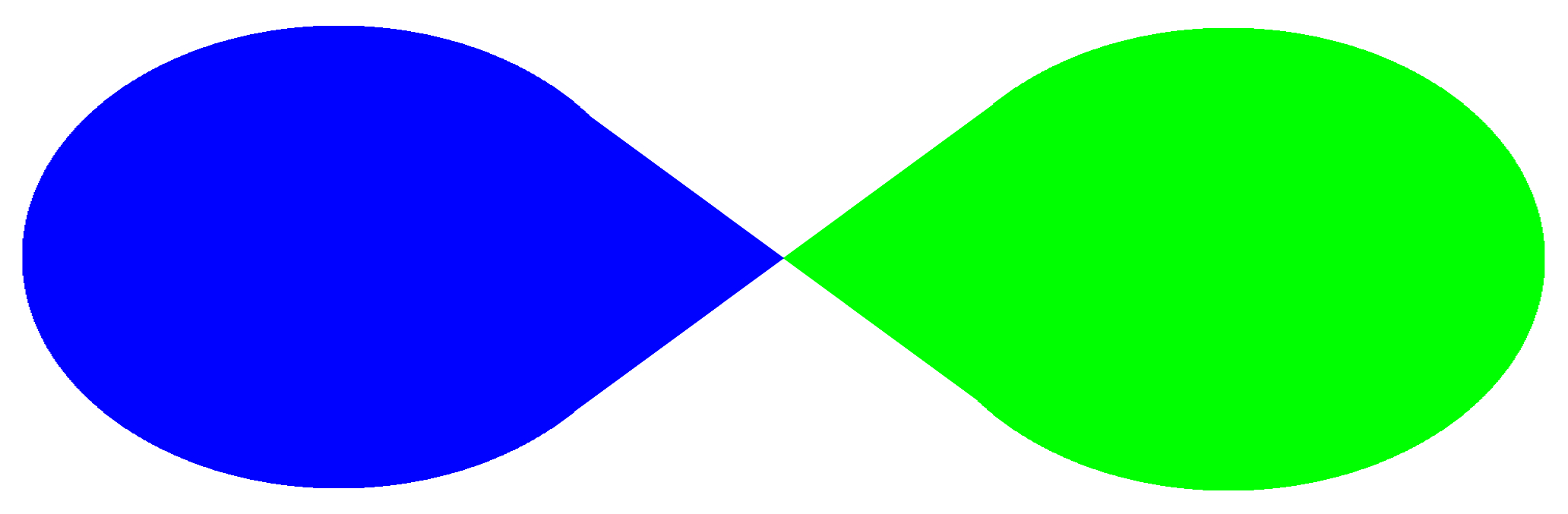}\\
\includegraphics[width=.7\textwidth]{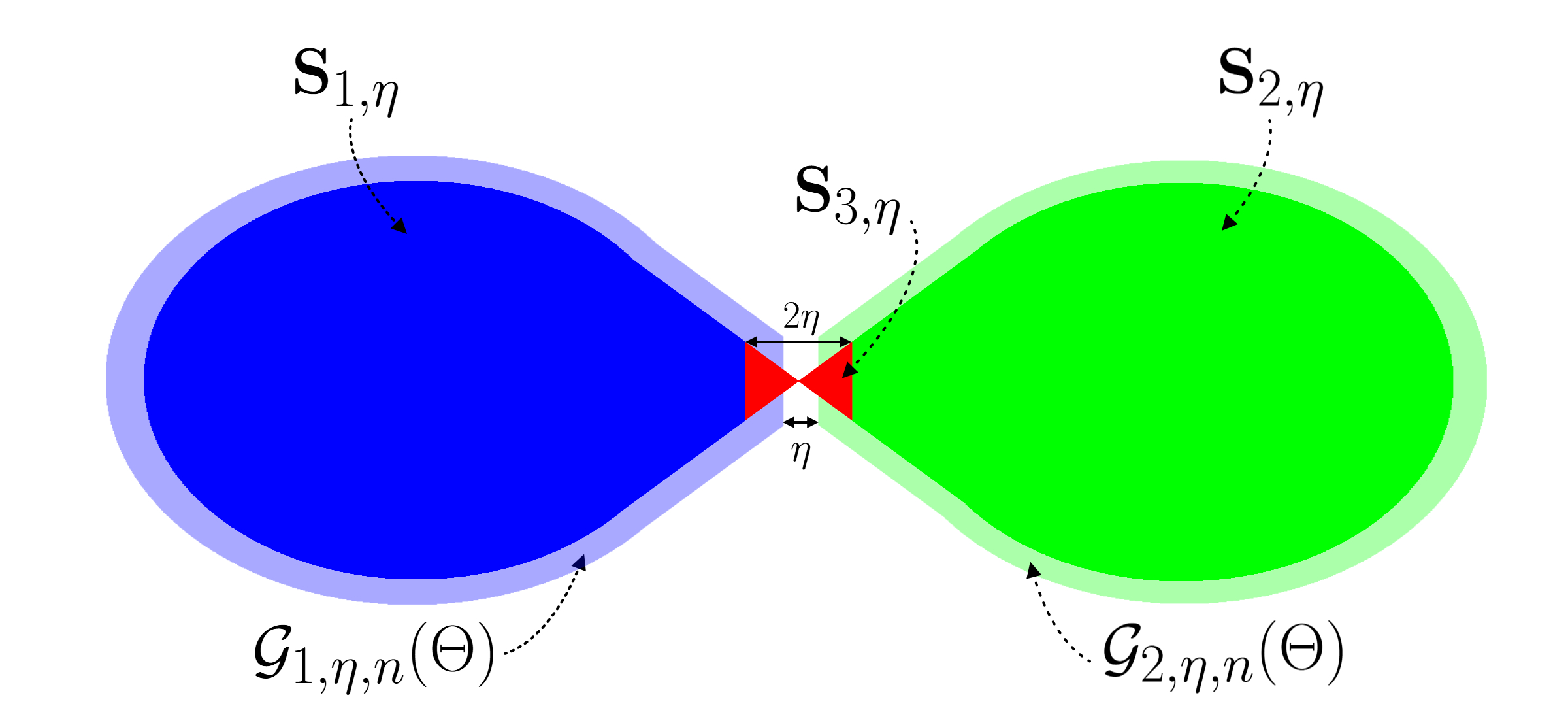}
\end{center}
\caption{Visualization of our main theorem. Top: Supports of two classes with no minimal separation. Bottom: The two classes are separated into sets $\mathbf{S}_{1,\eta},\mathbf{S}_{2,\eta}$ (blue and green) with separation $2\eta$ by removing a remainder set $\mathbf{S}_{3,\eta}$ (red). Our theorem gives conditions for when our support estimation sets $\mathcal{G}_{1,\eta,n}(\Theta),\mathcal{G}_{1,\eta,n}$ (light blue and light green) have separation $\eta$ and are close estimations of $\mathbf{S}_{1,\eta},\mathbf{S}_{2,\eta}$ respectively.}
\label{fig:thmviz}
\end{figure}

Our Algorithm~\ref{alg:MASC}, which has computational improvements and demonstrated accuracy improvements from our prior work in \cite{cloningercluster}, uses the theorem as a starting point to classify data sets in a multiscale active learning fashion. We start by thresholding away low-density points using a measure support estimator funtion (defined in \eqref{eq:support_estimator_def}). Then, we iteratively create successively larger discconnected graphs among the high-density points to extend queried labels in a cautious manner. By ``cautious", we mean that the process is halted locally in the event of conflicting labels belonging to a single graph component. For any graph component where we have not yet queried a point for a label, we do so by choosing the point in the component which maximizes the same support estimator function.

To summarize, the main accomplishments of this paper are:
\begin{itemize}

\item We deal with the classification of data coming from an arbitrary metric space with no further structure, such as the manifold structure.
\item We provide a unified approach to signal separation problems and classification problems.
\item Our results suggest a multiscale approach which does not assume any constraints on class boundaries, including that the classes not overlap.
\item In theory, the number of classes at each scale is an output of the theorem rather than a prior assumption.
\item We develop an algorithm to illustrate the theory, especially in the context of active learning on hyperspectral imaging data.
\end{itemize}

In Section~\ref{sec:related}, we review some literature in this area which is somewhat related to the present work.
In Section~\ref{sec:signals}, we give a brief discussion of the point source signal separation problem and the use of localized trigonometric polynomial kernels to solve it.
In Section~\ref{sec:background}, we describe the background needed to formulate our theorems, which are given in Section~\ref{sec:mainresults}.
The algorithm MASC to implement these results in practice is given in Section~\ref{sec:alg}, and illustrated in the context of a simulated circle and ellipse data set, a document dataset, and two hyperspectral datasets.
The proofs of the results in Section~\ref{sec:mainresults} are given in Section~\ref{sec:proofs}.

\section{Related works}\label{sec:related}

Perhaps the most relevant work to this paper is that of \cite{cloningercluster}. 
That paper also outlines the theory and an algorithm for a classification procedure using a thresholding set based on a localized kernel. There are three major improvements we have made relative to that work in this paper:
\ben
\item We have constructed the kernel in this paper in terms of trigonometric functions, whereas in \cite{cloningercluster} the kernel was constructed from Hermite polynomials. The trigonometric kernel is much faster in implementations for two reasons: 1) each individual polynomial is extremely quick to compute and 2) the trigonometric kernel deals only with trigonometric polynomials up to degree $n$, whereas the Hermite polynomial based kernel needs polynomials up to degree $n^2$ to achieve the same support estimation bounds.

\item This paper deals with arbitrary compact metric spaces (allowing for a rescaling of the data so that the maximum distance between values is $\leq \pi$), whereas \cite{cloningercluster} dealt with compact subsets of the Euclidean space and had a requirement on the degree of the kernel dependent upon the diameter of the data in terms of Euclidean distance.

\item In \cite{cloningercluster}, an algorithm known as Cautious Active Clustering (CAC) was developed. In this paper we present a new algorithm with several implementation advantages over CAC. 
We discuss this topic in more depth in Section~\ref{subsec:CACcomparison}.
\een

An important aspect of active learning is how samples are queried to minimize uncertainty. A study of two types of uncertainty in active learning problems is discussed in \cite{sharma-active}. The two critical types of uncertainty are 1) a data point is likely to belong to multiple labels, 2) a data point is not likely to belong to any label. In our algorithm, we deal with uncertain points after we have finished querying and extending the graph components. At this point, the only unlabeled points will be those that belonged to clusters with conflicting queried labels (uncertainty of the first type) or thresholded out at the beginning (uncertainty of the second type). We utilize all of the information from the iterative portion of our algorithm to then assign these points labels in a semi-supervised fashion.

One difficulty that algorithms may face in the active learning setting is the presence of highly imbalanced data (i.e. where data associated with some class labels is much more plentiful than others). In \cite{tharwat-active}, the authors discuss two concepts that an active learning algorithm should employ to be successful: exploration and exploitation. During the exploration phase, their algorithm seeks out points to query in low-sought regions. During the exploitation phase, their algorithm seeks to query points in the most critical explored regions. Our algorithm balances these principles in a different way: by querying points which we believe to be in high-density portions of a label's support (exploitation) and extending the label to nearby points until it ``bumps" against points which may belong to another label (exploration).

There are many other approaches to active learning that we categorize broadly into three groups: diffusion geometry, graph-based approaches, and neural networks. Our work has significant overlap with diffusion geometry and graph-based approaches, but no direct ties to neural networks. We list \cite{tharwat-survey} as a resourceful survey of recent developments in active learning.\\

\textbf{Graph-based algorithms:}

Graph-based active learning is a broad category of active learning algorithms which leverage graph structures for clustering. The broad approach for many of these algorithms is two-part: 1) to come up with some weighting between unlabeled data points based on the known labels to construct a weighted graph 2) minimizing or maximizing some function over the nodes of the graph to decide on the next point to query. The algorithms will switch between steps 1 and 2 iteratively to query more points and improve the classification. In \cite{calder-laplace,chen-batch,miller-uncertainty,bertozzi-model-change}, for instance, weight functions based on the graph Laplacian are applied to the data and minimized among the unlabeled data to choose successive query points. The weight function is chosen so as to decay away from the labeled data in a manner to quantify the uncertainty or information-gain among the unlabeled data points. In \cite{miller-dirichlet}, the function is chosen to be a \textit{Dirichlet variance} with the purpose of representing Bayesian information regarding plausible labels for the unlabeled data. These approaches use successive querying schemes, where each new query point requires a new calculation across all of the unlabeled data.

Our method also uses graph clustering. In contrast to many algorithms in this setting though, we intentionally look at disconnected graphs and, in particular, extend queried labels to all of the points in each of the graph components. Since our work assumes a known metric $\rho$, the distance between points by this metric works as a similarity scheme for the purpose of graph construction. Furthermore, we use the graph structure only as a clustering tool, and do not decide on points to query via any weights defined on edges. Rather, we use a density estimator to query high density points in each of the clusters we find in a hierarchical manner. \\

\textbf{Diffusion Geometry:}

Diffusion geometry active learning can be considered as a subset of graph-based active learning, where the edge weights are decided by a \textit{diffusion distance}, given in terms of a Markov transition matrix. Unlike the graph-based algorithms above which iteratively query points in a two-step process, the diffusion geometry methods mentioned here create an ordering of the data points based on the weights and do the querying based on this ordering all in one step. In Section~\ref{sec:numerical}, we have compared our algorithm with  two state-of-the-art methods: Learning by Active Nonlinear Diffusion (LAND) and Learning by Evolving Nonlinear Diffusion (LEND) algorithms \cite{murphy-land,murphy-lend}. We note that there also exist results from a diffusion geometry perspective focused on unsupervised learning \cite{murphy-unsupervised,polk-multiscale}, but both works show the potential improvements to be gained from an active learning framework.
Like the present work, diffusion geometry approaches use a kernel-based density estimation when deciding points to query. However, LAND and LEND both use a Gaussian kernel applied on $k$ neighbors for the density estimation and weight it by a diffusion value. 
Then, the queried points are simply those with the highest of the combined weights. 
The diffusion value corresponds to a minimal diffusion distance among points with a higher density estimation. 
For the point with the maximal density estimation, a maximal diffusion distance among other data points is taken as the weight. 
This extra weighting procedure is absent from our theory and algorithm, which uses an  estimation approach based purely on a localized kernel to decide on points to query. In our algorithm, we take a multiscale approach and decide on query points at each level instead of a global listing of the data points.\\

\textbf{Neural Networks:}

Neural network approaches broadly decide on points to query by inputting the data into a neural network. Although our approach is disparate from neural networks, we highlight a few recent advances in this area. In \cite{nowak-network}, an active learning approach using neural networks is developed. This work focuses on binary classification and developing models using a neural network framework such that a sufficient number of queries will achieve a desired accuracy. In \cite{schroder-network}, transformers are utilized in the active learning process, with particular emphasis on text document classification applications. In \cite{kim-network}, a method to effectively augment data sets with additional ``labeled" data points is investigated for deep active learning.

\printnomenclature

\section{Point source signal separation}\label{sec:signals}

The problem of signal separation goes back to early work of de Prony \cite{prony}, and can be stated as: estimate the coefficients $a_k$ and locations $\omega_k$ constituting $\mu=\sum_{k=1}^K a_k\delta_{\omega_k}$, from observations of the form
\be\label{eq:signal}
\hat{\mu}(x)=\hat{\mu}(x)=\sum_{k} a_k e^{-i\omega_k x}, \qquad x\in\RR.
\ee
There is much literature on methods to approach this problem, and we cite \cite{plonkafourier} as a text one can use to familiarize themselves with the topic. If we assume $\omega_k=k\Delta$ for some $\Delta\in\mathbb{R}^+$ and allow measurements for any $x\in [-\Omega,\Omega]$ for some $\Omega\in \mathbb{R}^+$, then recovery is possible so long as we are above the Rayleigh threshold, i.e. $\Omega\geq \pi/\Delta$ \cite{donoho-superres}. The case where this threshold is not satisfied is known as super-resolution. Much further research has gone on to investigate the super-resolution problem, such as \cite{batenkov-superres,candes-superres,li-superres}.

We now introduce a particular method of interest for signal separation from \cite{loctrigwave} and further developed in \cite{mhaskar-kitimoon-raj}. The method takes the following approach to estimate the coefficients and locations of $\mu$, without the assumption that the $\omega_k$'s should be at grid points, and the additional restriction that only \textbf{finitely many} integer values of $x$ are allowed. 
We start with the \textbf{trigonometric moments} of $\mu$:
$$
\hat{\mu}(\ell)=\sum_{k} a_k e^{-i\omega_k \ell}, \qquad |\ell|<n,
$$
where $n\ge 1$ is an integer.
Clearly, the quantities $\hat{\mu}(\ell)$ remain the same if any $\omega_k$ is replaced by $\omega_k$ plus an integer multiple of $2\pi$. Therefore, this problem is properly treated as the recuperation of a periodic measure $\mu$ from its Fourier coefficients rather than the recuperation of a measure defined on $\RR$ from its Fourier transform.
Accordingly, we define the quotient space $\mathbb{T}=\mathbb{R}/(2\pi\ZZ)$, and denote in this context, $|x-y|=|(x-y)\mbox{ mod } 2\pi|$.\yadi{$\TT$}{Quotient space $\RR/(2\pi\ZZ)$, sometimes referred to as the unit circle}
Here and in the rest of this paper, we consider a smooth band pass filter $h$; i.e., an even function $h\in C^\infty(\mathbb{R})$ such that $h(u)=1$ for $|u|\leq 1/2$ and $h(u)=0$ for $|u|\geq 1$. \yadi{$h$}{Smooth, bandpass filter}
We then define
\yadi{$\sigma_n$}{Reconstruction operator in different settings, \eqref{eq:sigmaintro} and \eqref{eq:sigmametricdef}}
\be\label{eq:sigmaintro}
\sigma_n(\mu)(x)\coloneqq \sum_{|\ell|<n} h\left(\frac{\ell}{n}\right)\hat{\mu}(\ell)e^{i\ell x}, \qquad x\in \mathbb{T}.
\ee
With the kernel defined by \yadi{$\Phi_n$}{Localized trigonometric polynomial kernel, \eqref{eq:trigkerndef}}
\be\label{eq:trigkerndef}
\Phi_n(t)\coloneqq \sum_{|k|<n} h\left(\frac{k}{n}\right)e^{ikt}, \qquad t\in \mathbb{T},
\ee
it is easy to deduce that
\be\label{eq:trigconv}
\sigma_n(\mu)(x)=\frac{1}{2\pi}\int_\TT \Phi_n(x-t)d\mu(t)=\sum_{k} a_k \Phi_n(x-\omega_k).
\ee
A key property of $\Phi_n$ is the \textbf{localization property} (cf. \cite{mhaskar-prestin-filbir, kitimoon2025localized}, where the notation is different): For any integer $S\ge 3$,
\be\label{eq:kernloc}
|\Phi_n(t)|\le 7\sqrt{\frac{\pi}{2}}\left\{\int_{-1}^1 |h^{(S)}(t)|dt \right\}\frac{n}{\max(1, (n|t|)^S)}.
\ee
Together with the fact that $h=1$ on $[-1/2,1/2]$, this implies that $\Phi_n$ is approximately a Dirac delta supported at $0$; in particular,
$$
\sigma_n(\mu)(x)\approx \sum_k a_k\delta_{\omega_k}(x).
$$
The theoretical details of this sentiment are described more rigorously in \cite{mhaskar-prestin-filbir, kitimoon2025localized}. 
Here, we only give two examples to illustrate.

\begin{example}\label{uda:pointsource}
{\rm
We consider the measure 
\be\label{eq:udapointsource}
\mu=5\delta_{-1}+30\delta_{2}+20\delta_{2.05},
\ee
so that the data is
\be\label{eq:udamoments}
\hat{\mu}(\ell)=5\exp(i\ell)+30\exp(-2i\ell)+20\exp(-2.05i\ell), \qquad |\ell|<n.
\ee
In Figure~\ref{fig:pointsource}, we show the graphs of the ``power spectrum'' $|\sigma_n(\mu)(x)|$ for $n=64$ and $n=256$.

\begin{figure}[h]
\begin{center}
\begin{minipage}{0.45\textwidth}
\includegraphics[width=\textwidth]{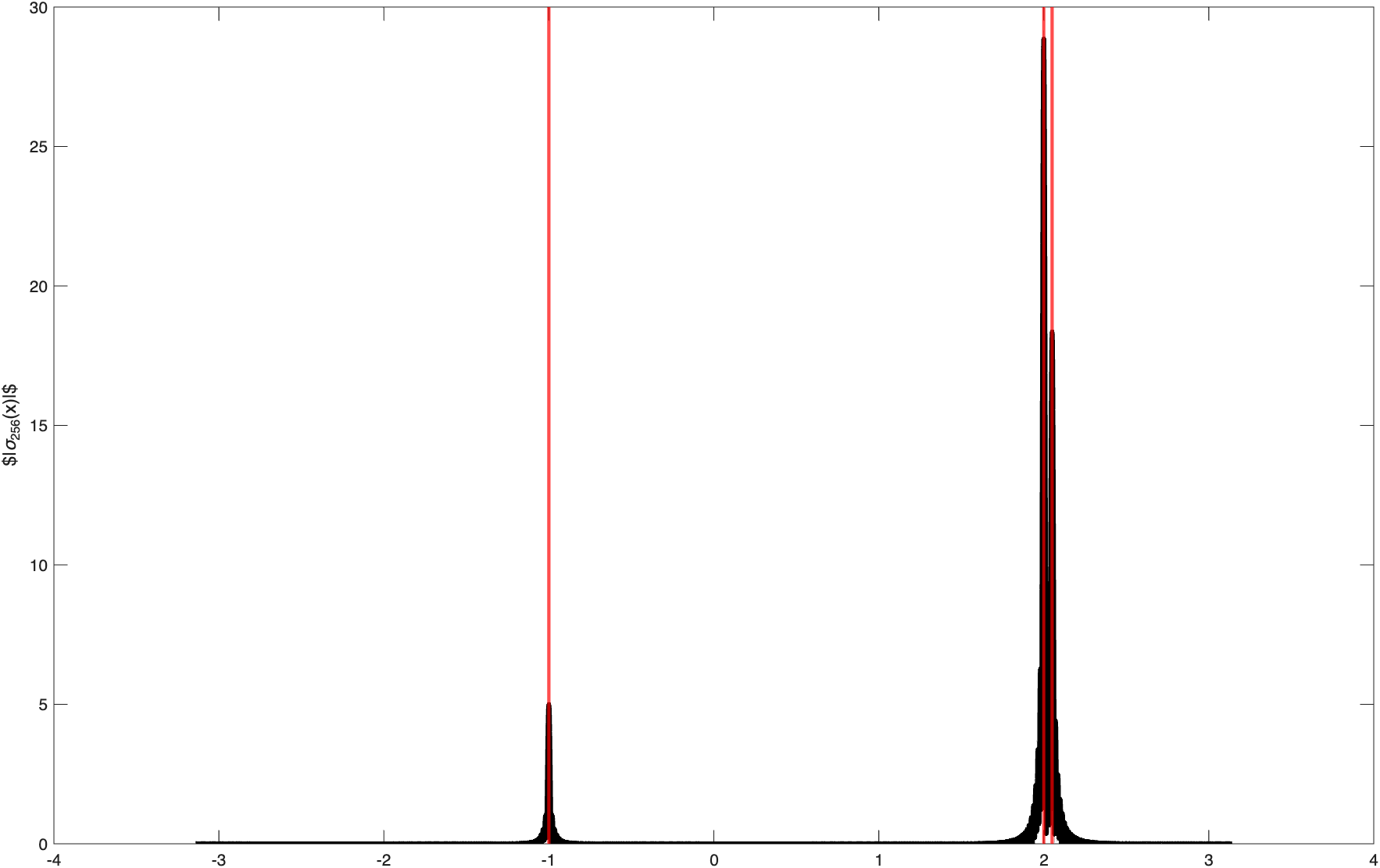} 
\end{minipage}
\begin{minipage}{0.45\textwidth}
\includegraphics[width=\textwidth]{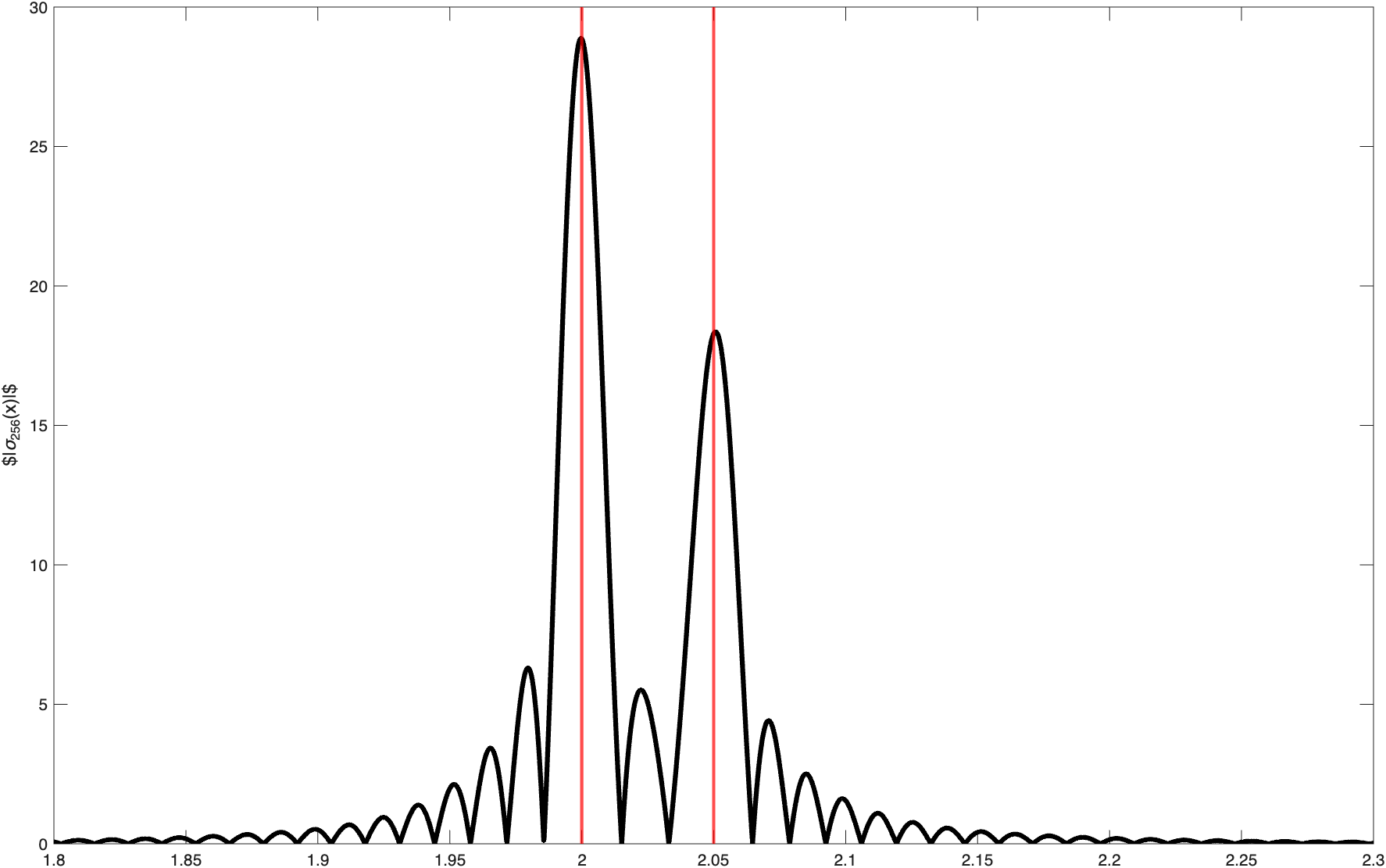} 
\end{minipage}
\begin{minipage}{0.45\textwidth}
\includegraphics[width=\textwidth]{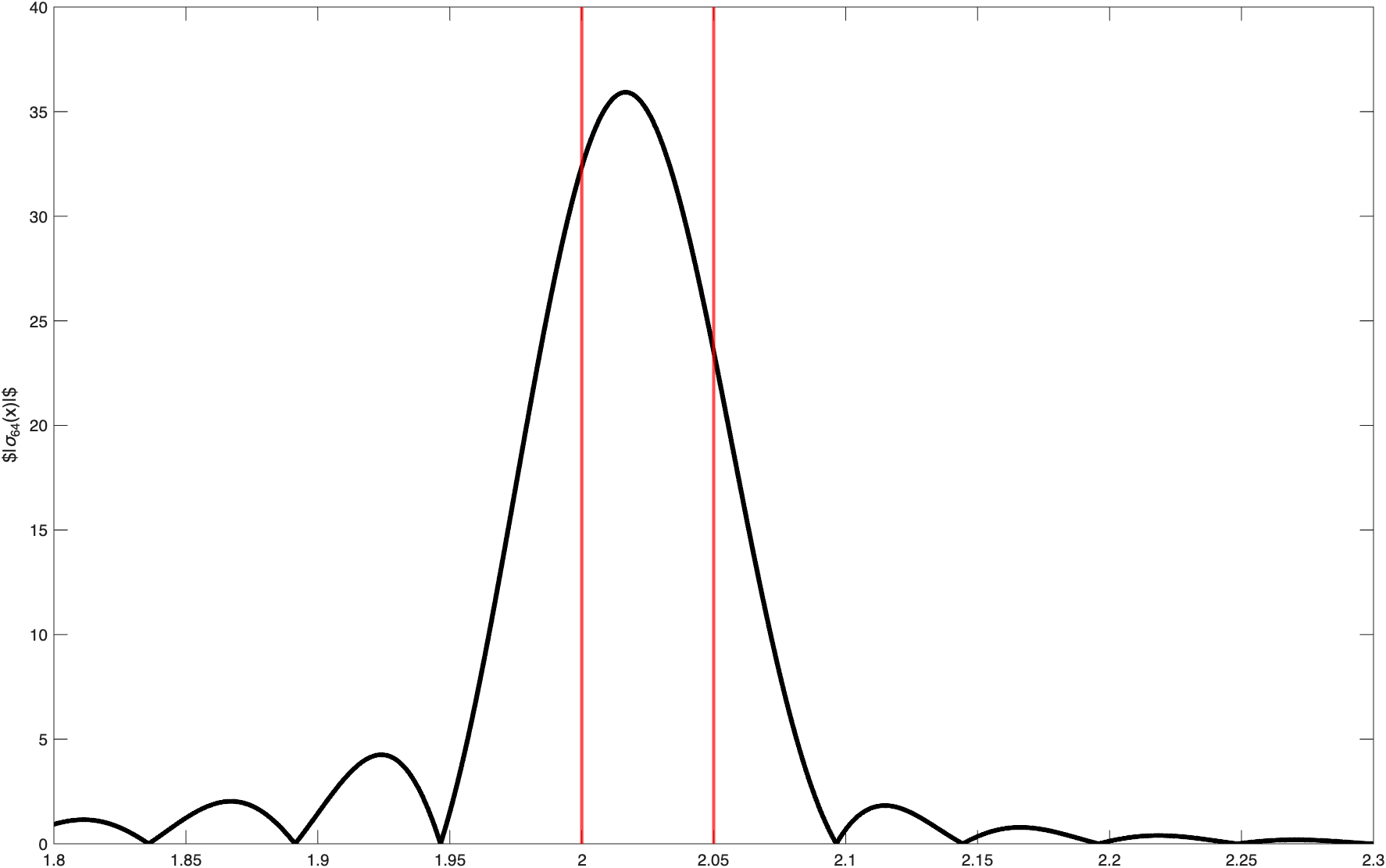} 
\end{minipage}
\end{center}
\caption{Left: $|\sigma_{256}(\mu)(x)|$ has peaks at the points $-1, 2, 2.05$, and is small everywhere else. Vertical red lines indicate the positions of these points. Right: A close-up view of $|\sigma_{256}(\mu)(x)|$ near $x=2$ to show an accurate detection of the close-by points $2, 2.05$. Bottom: A close-up view of $|\sigma_{64}(\mu)(x)|$ near $x=2$ to show the non-detection of the close-by points $2, 2.05$. }
\label{fig:pointsource}
\end{figure}

We see from the figure on the left that $\abs{\sigma_{256}}$ has peaks at approximately $-1, 2, 2.05$, and is very small everywhere else on $[-\pi,\pi]$. The figure on the right is a close-up view to highlight an accurate detection of the close-by point sources $2, 2.05$. The figure on the bottom shows that with $n=64$, such a resolution is not possible. 
When we wish to automate this, we need to figure out a threshold so that we should look only at peaks above the threshold. As the middle figure shows, there are sidelobes around each peak (and in fact, small sidelobes at many other places on $[-\pi,\pi]$).
In theory, this threshold is $\min_k |a_k|/2$, which we do not know in practice. If we set it too low, then we might  ``detect'' non-existent point sources near $2, 2.05$.
On the other hand, if we set it too high, then we would lose the low amplitude point source at $-1$. 
Some ideas on how to set an appropriate threshold, especially in the presence of noise are discussed in \cite{kitimoon2025localized}.
Another important quantity is the minimal separation $\eta=\min_{k\neq j}|(\omega_k-\omega_j) \mbox{ mod } 2\pi|$ among the point sources. 
As the middle and right figures show, the detection of point sources which are very close-by requires the knowledge of a larger number of moments.
It is shown in \cite{beyondsuper} that one must have $n\gtrsim \eta^{-1}$ in order to have sufficient resolution to recover the point sources in a stable manner.
\qed}
\end{example}

\begin{remark}\label{rem:minseparation}
{\rm
We make some remarks about the notion of minimal separation introduced in Example~\ref{uda:pointsource}. In the case of uniform sampling, the minimal separation takes on the same value as the sampling rate $\Delta$ from before. However, this perspective of minimal separation also allows one to consider the case of non-uniform sampling.
The analogue of the Rayleigh limit is a theorem in \cite{beyondsuper} showing that a stable recovery of the signal parameters require at least $\Omega(\eta^{-1})$ Fourier coefficients.
A relevant concept is that of the finite rate of innovation \cite{vetterlisignal}. 
This is defined in terms of the number of parameters involved in a reconstruction formula for the signal in the form
\be\label{eq:shannon}
\sum_{n\in\ZZ}\sum_{r=0}^{R-1} c_{r,n}\psi_n((x-t_n)/T).
\ee
The time points $t_n$ at which the signal is sampled and the coefficients $c_{r,n}$ are called the degrees of freedom in the signal. The rate of innovation is then defined as a density of these degrees of freedom in the interval over which the signal is sampled.
In our formulation, we may imagine that the signal is $\hat{\mu}$ and it is sampled at the time points $\ell$.
However, we have not taken the viewpoint that we need to use any reconstruction formula analogous to \eqref{eq:shannon}. 
Indeed, we do not even think of $\hat{\mu}$ is the signal.
We think of this as the Fourier coefficients of the signal $\mu$.
Therefore, the idea of innovation rate is not applicable in our setting.
\qed}
\end{remark}
Our next example is a precursor of the main results of this paper.

\begin{example}\label{uda:measureseparation} {\rm
We define a probability measure $\mu$ on $\TT$ as a convex combination of:
\bit
\item A sum of two uniform distributions each supported on $ [-0.6,-0.4]$, with a weight of $1200/3900$.

\item A normal distribution with mean $0.05$ and variance $0.04$, with a weight of $2400/3900$.


\item Three point-mass measures at $-2,0.4,1.5$, with weights of $60/3900, 120/3900, 120/3900$ respectively (anomaly).
\eit

We take 3900 samples from this measure (the number of points from each part of the measure corresponding to the numerator of the weight). 
The samples from the measure are visualized in Figure~\ref{fig:histogram} (a) as a normalized histogram. 
Then, we apply $\sigma_{128}$ to get an estimation of the support, as seen in Figure~\ref{fig:histogram} (b). 
Not only do we get an idea of the support of the measure by looking at $\sigma_{128}$, but also the amplitudes of the non-atomic components of the measure. 
 Since we are dealing with finitely many samples, we in fact are only estimating the integral in the definition of $\sigma_n$ as a Monte-Carlo type summation. That is, with data $\{u_j\}_{j=1}^M$ sampled randomly from $\mu$, we estimate
$$
\sigma_n(t)\approx \frac{1}{M}\sum_{j=1}^M\Phi_n(t- u_j).
$$

\begin{figure}[!ht]
\begin{center}
\begin{subfigure}[t]{.45\textwidth}
\begin{center}
\includegraphics[width=\textwidth]{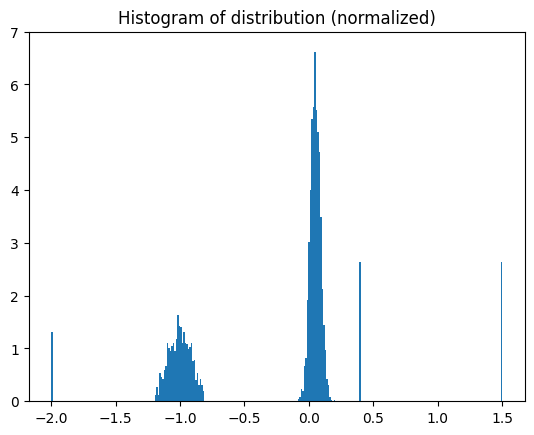}
\caption{Histogram.}
\end{center}
\end{subfigure}
\begin{subfigure}[t]{.45\textwidth}
\begin{center}
\includegraphics[width=\textwidth]{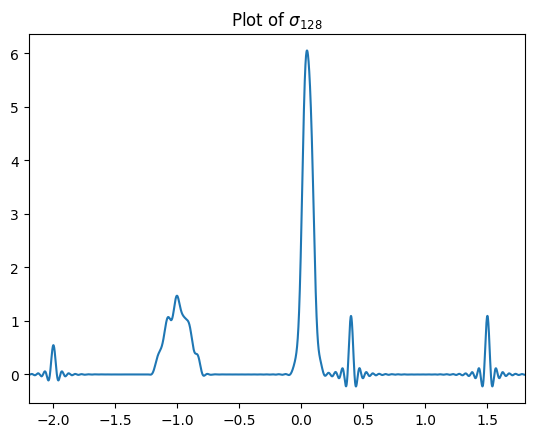}
\caption{Plot of $\sigma_{128}$.}
\end{center}
\end{subfigure}
\caption{Normalized histogram of the density of interest (left), paired with our density estimation by $\sigma_{128}$ based on $3900$ samples (right).} 
\label{fig:histogram}
\end{center}
\end{figure}
\qed}
\end{example}

In this paper, we will use similar ideas to separate the support of probability measures $\mu_k$ (rather than $\delta_{\omega_k}$), based on random samples taken from the convex combination $\mu=\sum_k a_k\mu_k$ (rather than Fourier coefficients of a general linear combination), where all the measures are supported on a compact metric measure space. 
Table~\ref{tab:supervsclass} summarizes the similarities and differences between the signal separation problem and classification problem as studied in this paper.

\begin{table}
\begin{center}
\begin{tabular}{|c|c|c|}
\hline
 & Signal separation & Classification\\
\hline
Measure: & $\mu=\sum_{k} a_k \delta_{\omega_k}$ & $\mu=\sum_{k}a_k\mu_k$\\
\hline
Domain: & $\mathbb{R}/(2\pi\ZZ)$ & unknown subset of a metric space\\
\hline
Data: & Fourier moments & samples from $\mu$\\
\hline
Key quantity: & $\min_{j\neq k}\abs{(\omega_j-\omega_k) \mod 2\pi}$ & $\min_{j\neq k}\ \operatorname{dist}(\operatorname{supp}(\mu_j),\operatorname{supp}(\mu_k))$\\
\hline
\end{tabular}
\caption{Comparison between traditional signal separation and our approach to machine learning classification.}
\label{tab:supervsclass}
\end{center}
\end{table}

\section{Background}
\label{sec:background}

In this section, we introduce many common notations and definitions used throughout the rest of this paper.

Let $\mathbb{M}$ be a compact metric space with metric $\rho$, normalized so that $\mathsf{diam}(\mathbb{M})=\max_{x,y\in\MM}\rho(x,y) =\pi$.
\yadi{$\MM$}{Ambient space}
\yadi{$\rho$}{Metric on $\MM$}
\yadi{$\XX$}{Support of the data measure $\mu$}
This normalization facilitates our use of the $2\pi$-periodic kernel $\Phi_n$, while avoiding the possibility that points $x,y$ with $\rho(x,y)\approx 2m\pi$ for some integer $m$ would be considered close to each other. 
It is well known in approximation theory that positive kernels leads to a saturation and, hence, would not be appropriate for approximating probability measures as is commonly done.
However, in this paper our main interest is to find supports of the measures rather than approximating the measures themselves.
So, in order to avoid cancellations, we prefer to deal with a positive kernel defined by
\be\label{eq:metrickerndef}
\Psi_n(x,y)=\Phi_n(\rho(x,y))^2,
\ee
where $\Phi_n$ is the kernel defined in \eqref{eq:trigkerndef}. \yadi{$\Psi_n$}{Localized kernel on $\MM$, \eqref{eq:metrickerndef}}
The localization property \eqref{eq:kernloc}, used with $\lceil S/2\rceil$ in place of $S$, implies that
\be\label{eq:metrickernloc}
\Psi_n(x,y) \le c\frac{n^2}{\max(1, (n\rho(x,y))^S)},
\ee
where $c>0$ is a constant depending only on $h$ and $S$.

At this point we would like to simplify notation by using the following constant convention.\\

\textbf{The constant convention}\\
\textit{In the sequel, $c, c_1,\cdots$ will denote generic positive constants depending upon the fixed quantities in the discussion, such as the metric space, $\rho$, and the various parameters such as $S$ and $\alpha$ (to be introduced below). 
Their values may be different at different occurrences, even within a single formula.
The notation $A\lesssim B$ means $A\le cB$, $A\gtrsim B$ means $B\lesssim A$, and $A\sim B$ means $A\lesssim B\lesssim A$.
In some cases where we believe it may be otherwise unclear, we will clarify which values a constant may depend on, which may appear in the subscript of the above-mentioned symbols. For example, $A\lesssim_d B$ means there exists $c(d)>0$ such that $A\leq c(d)B$.}\\

For any point $x\in\MM$, and any sets $A,B\subseteq \MM$, we define the following notation for the balls and neighborhoods.
\be\label{eq:balldef}
\ba
\dist(x,A)=&\inf_{y\in A} \rho(x,y),& \mathbb{B}(x,r)=&\{y\in\MM:\rho(x,y)\leq r\},\\
\dist(A,B)=&\inf_{y\in A}\dist(y,B),& \mathbb{B}(A,r)=&\{x\in\MM: \dist(x,A)\leq r\}.
\ea
\ee
\yadi{$\mathbb{B}(x,r)$}{Ball of radius $r$ centered at $x$}
\yadi{$\mathbb{B}(A,r)$}{Set of $r$-neighborhood around $A$, \eqref{eq:balldef}}
For any $A\subseteq\mathbb{M}$, we define $\operatorname{diam}(A)=\sup_{x,y\in A} \rho(x,y)$. \yadi{$\operatorname{diam}$}{Diameter of a subset of a metric space}

\subsection{Measures}
\label{subsec:measuredefs}

Let $\mu$ be a positive, Borel, probability measure on $\mathbb{M}$ (i.e. $\int_{\mathbb{M}}d\mu(y)=1$). 
We denote $\mathbb{X}\coloneqq \mathsf{supp}(\mu)$.
Much of this paper focuses on $\XX$. 
However, we wish to treat $\XX$ as an \textbf{unknown} subset of a known ambient space $\MM$ rather than treating it as a metric space in its own right. 
In particular, this emphasizes the fact the data measure $\mu$  may not have a density, and may not be supported on the entire ambient space.

In the case of signal separation, we have seen that if the minimal amplitude for a certain point source is sufficiently small, we may not be able to detect that point source. 
Likewise, if the measure $\mu$ is too small on parts of $\XX$, we may not be able to detect those parts.
For this reason, we make some assumptions on the measure $\mu$ as in  \cite{cloningercluster}. 
The first property, \textit{detectability}, determines the rate of growth of the measure locally around each point in the support. 
The second property, \textit{fine-structure}, relates the measure to the classification problem by equipping the support with some well-separated (except maybe for some subset of relatively small measure) partition which may correspond to some different class labels in the data. 

\begin{definition}
\label{def:detectable}
We say a measure $\mu$ on $\mathbb{M}$ is \textbf{detectable} if there exist $\alpha\geq 0, \kappa_1, \kappa_2>0$ such that 
\be\label{eq:ballmeasurecon}
\mu(\mathbb{B}(x,r))\le \kappa_1 r^\alpha,\qquad x\in \mathbb{M},\ r>0,
\ee
and there exists $r_0>0$ such that
\be\label{eq:ballmeasurelow}
\mu(\mathbb{B}(x,r))\ge \kappa_2 r^\alpha,\qquad x\in\mathbb{X}, \ 0<r\le r_0.
\ee
\end{definition}

\begin{definition}
\label{def:finestructure}
We say a measure $\mu$ has a \textbf{fine structure} if there exists an $\eta_0$ such that for every $\eta\in (0,\eta_0]$ there is an integer $K_\eta$ and a partition $\mathbf{S}_\eta\coloneqq\{\mathbf{S}_{k,\eta}\}_{k=1}^{K_\eta+1}$ of $\mathbb{X}$ where both of the following are satisfied.
\ben
\item (\textbf{Cluster Minimal Separation}) For any $j,k=1,2,\dots,K_\eta$ with $j\neq k$ we have
\be\label{eq:minsep}
\operatorname{dist}(\mathbf{S}_{j,\eta},\mathbf{S}_{k,\eta})\geq 2\eta.
\ee
\yadi{$\eta$}{Minimal separation among classes at different levels}
\yadi{$K_\eta$}{Number of partition elements of $\XX$ separated by $2\eta$, Definition~\ref{def:finestructure}}
\item (\textbf{Exhaustion Condition}) We have
\be\label{eq:exhaustion}
\lim_{\eta\to 0^+} \mu(\mathbf{S}_{K_\eta+1,\eta})=0.
\ee
\een
We will say that $\mu$ has a \textbf{fine structure in the classical sense} if $\mu=\sum_{k=1}^K a_k\mu_k$ for some probability measures $\mu_k$, $a_k$'s are $>0$ and $\sum_k a_k=1$, and the compact subsets $\mathbf{S}_k\coloneqq \mathsf{supp}(\mu_k)$ are disjoint. 
In this case $\eta$ is the minimal separation among the supports and there is no overlap.
 
\end{definition}

\begin{remark}\label{rem:exception}
{\rm
It is possible to require the condition \eqref{eq:ballmeasurelow} on a subset of $\XX$ having measure converging to $0$ with $r$. 
This will add some difficulties in our proof of \eqref{eq:Jn} and Lemma~\ref{lemma:discprelim}. However, in the case when $\mu$ has a fine structure, this exceptional set can be absorbed in $\mathbf{S}_{K_\eta+1}$ with appropriate assumptions. 
We do not find it worthwhile to explore this further in this paper.
\qed}
\end{remark}

\begin{example}
{\rm
Supposing that $\mu=\sum_{k=1}^K a_k \delta_{\omega_k}$ as in the signal separation problem, then we see that $\mu$ is detectable with $\alpha=0$, $\kappa_1=\sum_k |a_k|$, $\kappa_2=\min_k |a_k|$. In this context, the value $\kappa_2$ is known as the minimum weight  and plays an important role in signal recovery (recall the threshold $\min_{k}\abs{a_k}/2$ from Example~\ref{uda:pointsource}). The measure $\mu$ also has fine structure in the classical sense whenever $\eta<\min_{j\neq k} |\omega_j-\omega_k|$. In this sense, the theory presented in this paper is a generalization of results for signal separation in this regime.
\qed} 
\end{example}

\begin{example}
{\rm
If $\mathbb{X}$ is a $\alpha$-dimensional, compact, connected, Riemannian manifold, then the normalized Riemannian volume measure is detectable with parameter $\alpha$. 
\qed}
\end{example}

\subsection{F-score}\label{sec:fscore}

We will give results on the theoretical performance of our measure estimating procedure by giving an asymptotic result involving the so-called F-score. 
The F-score for binary classification (true/false) problems is a measure of classification accuracy taking the form of the harmonic mean between precision and recall. 
In a predictive model, precision is defined as the fraction of true positive outputs over all the positive outputs of the model. 
Recall is the fraction of true positive outputs over all the actual positives. 
In a multi-class problem, we extend this definition as follows (cf. \cite{diagraphcluster_ohiostate_2011}). 
If $\{C_1,\dots,C_N\}$ is a partition of $\{x_j\}_{j=1}^M$ indicating the predicted output labels of a model and $\{L_1,\dots,L_K\}$ is the ground-truth partition of the data, then 
one can define the precision of $C_j$ against the true label $L_k$ by $|C_j\cap L_k|/|C_j|$ and the corresponding recall by $|C_j\cap L_k|/|L_k|$.
Taking the maximum of the harmonic means of the precisions and recalls with respect to all the ground truth labels leads to 
\be
F(C_j)=2\max_{k\in \{1,\dots,K\}} \frac{\abs{C_j\cap L_k}}{\abs{C_j}+\abs{L_k}}.
\ee
Then the F-score is given by
\be
F\left(\{C_j\}_{j=1}^N\right)=\frac{\sum_{j=1}^N \abs{C_j}F(C_j)}{\sum_{j=1}^N \abs{C_j}}.
\ee
Since we are treating the data as samples from a measure $\mu$, we replace cardinality in the above formulas with measure. Our fine structure condition gives us the true supports as $\{\mathbf{S}_{k,\eta}\}_{k=1}^{K_\eta}$ for any valid $\eta$, so we can define the F-score for the support estimation clusters $\{\mathcal{C}_{j,\eta}\}_{j=1}^N$ by
\be\label{eq:class_fscore}
\mathcal{F}_\eta(\mathcal{C}_{j,\eta})=2\max_{k\in \{1,\dots,K\}}\frac{\mu(\mathcal{C}_{j,\eta}\cap \mathbf{S}_{k,\eta})}{\mu(\mathcal{C}_{j,\eta})+\mu(\mathbf{S}_{k,\eta})},
\ee
and \yadi{$\mathcal{F}_\eta$}{$F$-score at separation $\eta$, \eqref{eq:fscoredef}}
\be\label{eq:fscoredef}
\mathcal{F}_\eta\left(\{\mathcal{C}_{j,\eta}\}_{j=1}^N\right)=\frac{\sum_{j=1}^N \mu(\mathcal{C}_{j,\eta})\mathcal{F}_\eta(\mathcal{C}_{j,\eta})}{\mu\left(\bigcup_{j=1}^N \mathcal{C}_{j,\eta}\right)}.
\ee
\begin{remark}\label{rem:fscore}
We observe that 
$$
1-2\frac{\mu(\mathcal{C}_{j,\eta}\cap \mathbf{S}_{k,\eta})}{\mu(\mathcal{C}_{j,\eta})+\mu(\mathbf{S}_{k,\eta})}=\frac{\mu(\mathcal{C}_{j,\eta}\Delta \mathbf{S}_{k,\eta})}{\mu(\mathcal{C}_{j,\eta})+\mu(\mathbf{S}_{k,\eta})},
$$
where in this remark only, $\Delta$ denotes the symmetric difference.
It follows that $0\le \mathcal{F}_\eta\le 1$.
If we estimate each support perfectly so $C_{j,\eta}=\mathbf{S}_{j,\eta}$ for all $j$ and each $C_{j,\eta}$ is $\eta$-separated from any other, then we see that $\mathcal{F}_\eta\left(\{\mathcal{C}_{j,\eta}\}_{j=1}^N\right)=1$. Otherwise, we will attain an F-score strictly lower than $1$. \qed
\end{remark}

\section{Main results}
\label{sec:mainresults}

In this section we introduce the main theorems of this paper, which involve the recovery of supports of a measure from finitely many samples. Theorem~\ref{thm:fullsuportdet}
 pertains to the case where we only assume the detectability of the measure. Theorem~\ref{thm:class_separation} pertains to the case where we additionally assume the fine structure condition. Before stating the results, we must introduce our discrete measure support estimator and support estimation sets. We define our  \textbf{data-based measure support estimator} by \yadi{$F_n$}{Measure support estimator, \eqref{eq:support_estimator_def}}
\be\label{eq:support_estimator_def}
F_n(x)\coloneqq \frac{1}{M}\sum_{j=1}^M \Psi_n(x,x_j).
\ee
This definition is then used directly in the construction of our \textbf{data-based support estimation sets}, given by \yadi{$\mathcal{G}_n$}{Support estimation set, \eqref{eq:support_est_set_def}}
\be\label{eq:support_est_set_def}
\mathcal{G}_n(\Theta)\coloneqq \left\{x\in \mathbb{M}: F_n(x)\geq \Theta \max_{1\leq k\leq M} F_n(x_k)\right\}.
\ee
Intuitively, due to the localization \eqref{eq:metrickernloc} of $\Psi$, we expect $F_n$ to be large when nearby any of the data points $x_j$, and small otherwise. With this understanding, $\mathcal{G}_n$ then thresholds points in the metric space where $F_n$ is sufficiently large. This gives an estimation for the support of the data. The thresholding value $\Theta$ decides the cutoff based on the maximal value that $F_n$ attains over the data. If $\Theta$ is too large we would expect to get an underestimation of the support set $\mathbb{X}$, whereas if it is too small we will get an overestimation. We formalize this intuition in our first theorem.

\begin{remark}\label{rem:kde}
{\rm
The function $F_n$ resembles a kernel density estimator, and it would be if the data measure $\mu$ was assumed to be absolutely continuous with respect to some base measure on the ambient metric space $\mathbb{M}$.
However, we do not assume a base measure on the ambient metric space.
Moreover, even if the metric space were the Euclidean space or a sphere as we have used in Section~\ref{sec:numerical}, we do not require $\mu$ to have a density with respect to the natural base measure on these spaces; indeed, we are interested in the cases when $\mu$ is a singular measure. 
Finally, it is not our goal to approximate the measure itself, but merely to approximate its support.
\qed}
\end{remark}

\begin{theorem}\label{thm:fullsuportdet}
Let $\mu$ be detectable and suppose $M\gtrsim n^{\alpha}\log(n)$. Let $\{x_1,x_2,\dots,x_M\}$ be independent samples from $\mu$. There exists a constant $C>0$ such that if $\Theta<C<1$, then there exists $r(\Theta)\sim \Theta^{-1/(S-\alpha)}$ (recall $S$ is the localization parameter of the kernel, given in \eqref{eq:kernloc}~and~\eqref{eq:metrickernloc}) such that with probability at least $1-c_1/M^{c_2}$ we have
\be\label{eq:thm1}
\mathbb{X}\subseteq \mathcal{G}_n(\Theta)\subseteq \mathbb{B}\left(\mathbb{X},r(\Theta)/n\right).
\ee
\end{theorem}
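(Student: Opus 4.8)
The plan is to separate the analysis into a deterministic part and a probabilistic part. For the deterministic part, I would work with the expectation $\mathbb{E}[F_n(x)]=\int_\MM \Psi_n(x,y)\,d\mu(y)$ and establish a two-sided estimate: a uniform lower bound $\mathbb{E}[F_n(x)]\gtrsim n^{2-\alpha}$ for $x\in\XX$, and a decaying upper bound $\mathbb{E}[F_n(x)]\lesssim n^{2-\alpha}\,(n\,\dist(x,\XX))^{\alpha-S}$ for $\dist(x,\XX)\ge 1/n$. The probabilistic part would then show that $F_n$ concentrates around $\mathbb{E}[F_n]$ uniformly (in the relevant sense), under the sampling hypothesis $M\gtrsim n^\alpha\log(n)$, so that the same two-sided estimate holds for $F_n$ itself with probability at least $1-c_1/M^{c_2}$. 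Given these, both inclusions in \eqref{eq:thm1} follow by comparing $F_n$ against the threshold $\Theta\max_k F_n(x_k)$.

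For the deterministic estimates I would use the localization bound \eqref{eq:metrickernloc} together with detectability. The lower bound comes from the fact that $\Phi_n(t)\gtrsim n$ (hence $\Psi_n(x,y)\gtrsim n^2$) for $\rho(x,y)\le c_0/n$, which follows from $h\equiv 1$ on $[-1/2,1/2]$ and a crude estimate on $\Phi_n(t)-\Phi_n(0)$; integrating against $\mu$ over $\BB(x,c_0/n)$ and applying the lower detectability \eqref{eq:ballmeasurelow} gives $\mathbb{E}[F_n(x)]\gtrsim n^2\mu(\BB(x,c_0/n))\gtrsim n^{2-\alpha}$ for $x\in\XX$. The upper bound comes from a dyadic annular decomposition of $\MM$ around $x$: on the annulus $2^i d\le\rho(x,y)<2^{i+1}d$ with $d=\dist(x,\XX)$ one has $\Psi_n\lesssim n^{2-S}(2^id)^{-S}$ and $\mu\lesssim\kappa_1(2^{i+1}d)^\alpha$ by \eqref{eq:ballmeasurecon}, and summing the geometric series $\sum_i 2^{i(\alpha-S)}$ requires $S>\alpha$ (always arrangeable since $h\in C^\infty$ permits any $S$). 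This yields exactly the decay $n^{2-\alpha}(nd)^{\alpha-S}$.

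The probabilistic step is where I expect the main obstacle, because $\MM$ is an arbitrary compact metric space with no covering control off the support. The resolution is that every sample lies in $\XX$, so for any $x$ with $\dist(x,\XX)=d$ and any $i\ge 0$ one has $\BB(x,2^id)\cap\XX\subseteq\BB(x^\ast,2^{i+1}d)$ for a nearest point $x^\ast\in\XX$; hence all empirical mass estimates reduce to \emph{balls centered in $\XX$}, where lower detectability \eqref{eq:ballmeasurelow} forces the covering number at scale $\delta$ to be $\lesssim\delta^{-\alpha}$, i.e.\ polynomial. I would then fix a net of $\XX$ at scale $\sim 1/n$ and a dyadic grid of radii in $[1/n,\pi]$, control $\hat\mu_M(\BB(z,s))$ at each net point and radius by a Bernstein (Chernoff) estimate — the summands are Bernoulli with mean $p\lesssim\kappa_1 s^\alpha$, so the failure probability is $\exp(-cMp)\le\exp(-cMn^{-\alpha})$ for $s\gtrsim 1/n$ — and absorb the non-Lipschitz dependence of $\hat\mu_M(\BB(\cdot,\cdot))$ on center and radius by slight inflation (using monotonicity in $s$ and $\mu(\BB(z,s+\delta))\lesssim s^\alpha$). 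A union bound over the $\lesssim n^\alpha\log n$ (net point, scale) pairs, together with $M\gtrsim n^\alpha\log n$, makes the total failure probability $\lesssim c_1/M^{c_2}$. Re-running the deterministic annular estimate with these empirical mass bounds transfers both the lower bound $F_n(x)\gtrsim n^{2-\alpha}$ on $\XX$ and the decaying upper bound to $F_n$; the same argument (with the diagonal term $\Psi_n(x_k,x_k)/M\lesssim n^2/M\lesssim n^{2-\alpha}$ handled by $M\gtrsim n^\alpha$) shows $\max_k F_n(x_k)\sim n^{2-\alpha}$.

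Finally I would combine these. For $\XX\subseteq\mathcal{G}_n(\Theta)$: for $x\in\XX$ we have $F_n(x)\ge c_3 n^{2-\alpha}$ while $\max_k F_n(x_k)\le c_4 n^{2-\alpha}$, so any $\Theta<C:=c_3/c_4$ keeps $x$ in $\mathcal{G}_n(\Theta)$. For $\mathcal{G}_n(\Theta)\subseteq\BB(\XX,r(\Theta)/n)$: if $\dist(x,\XX)=d>r/n$ then $F_n(x)\le c_5 n^{2-\alpha}r^{\alpha-S}$ while $\max_k F_n(x_k)\ge c_6 n^{2-\alpha}$, and the threshold inequality $(c_5/c_6)r^{\alpha-S}<\Theta$ is solved precisely by $r(\Theta)=\left(c_5/(c_6\Theta)\right)^{1/(S-\alpha)}\sim\Theta^{-1/(S-\alpha)}$; points with $\dist(x,\XX)$ bounded below by a constant are discarded by the crude bound $F_n(x)\lesssim n^{2-S}\ll n^{2-\alpha}$. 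This gives \eqref{eq:thm1} on the high-probability event, with $C$ arising as the ratio of the two universal constants and $r(\Theta)$ from matching the kernel's polynomial decay to the threshold.
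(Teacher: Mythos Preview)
Your argument is correct and yields the theorem, but the probabilistic step differs from the paper's. You control empirical ball masses $\hat\mu_M(\BB(z,s))$ on a net of $\XX$ crossed with a dyadic grid of radii, then re-run the annular decomposition with $\hat\mu_M$ in place of $\mu$ to transfer the two one-sided bounds directly to $F_n$. The paper instead proves the stronger uniform estimate $\sup_{x\in\MM}|F_n(x)-\sigma_n(x)|\le\beta I_n$ (Lemma~\ref{lem:Inbounds}): it partitions $\XX$ into pieces $\{Y_k\}$ of diameter $\lesssim 1/n$ and measure $\sim n^{-\alpha}$ (Proposition~\ref{prop:partition}), shows via Bernstein concentration that $|\D\cap Y_k|/M\approx\mu(Y_k)$ uniformly in $k$ (Lemma~\ref{lemma:discpartition}), and introduces the reweighted operator $H_n(x)=\sum_k\frac{\mu(Y_k)}{|\D_k|}\sum_{x_j\in\D_k}\Psi_n(x,x_j)$. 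The trigonometric Bernstein inequality (Corollary~\ref{cor:kernelbern}) gives $|\Psi_n(x,y)-\Psi_n(x,y')|\lesssim n^3\rho(y,y')$, which forces $|H_n-\sigma_n|\le(\gamma/2)I_n$ deterministically (Lemma~\ref{lemma:discprelim}); the concentration of $|\D_k|$ then forces $F_n\approx H_n$. This uniform bound is used to sandwich $\mathcal{G}_n(\Theta)$ between two continuum level sets $\mathcal{S}_n(\theta_1)\subseteq\mathcal{G}_n(\Theta)\subseteq\mathcal{S}_n(\theta_2)$ (Theorem~\ref{thm:GnSnbound}), after which the theorem follows from the continuum result Theorem~\ref{thm:suppmu}. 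Your route is more elementary in that it uses only the localization bound \eqref{eq:metrickernloc} and never invokes the trigonometric Bernstein inequality, so it would port to other localized kernels with minimal change; the paper's route is more modular, since the uniform bound and the $\mathcal{S}_n$--$\mathcal{G}_n$ sandwich are reused verbatim in the proof of Theorem~\ref{thm:class_separation}.
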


Our second theorem additionally assumes the fine-structure condition on the measure, and gives conditions so that for any satisfactory $\eta$, the support estimation set $\mathcal{G}_n(\Theta)$ splits into $K_\eta$ subsets each with separation $\eta$, thus solving the machine learning classification problem in theory.

\begin{theorem}\label{thm:class_separation}
Suppose, in addition to the assumptions of Theorem~\ref{thm:fullsuportdet}, that $\mu$ has a fine structure, $n\gtrsim 1/(\eta\Theta^{1/(S-\alpha)})$, and $\mu(\mathbf{S}_{K_{\eta}+1,\eta})\lesssim \Theta n^{-\alpha}$. Define
\be\label{eq:thm2-1}
\mathcal{G}_{k,\eta,n}(\Theta)\coloneqq \mathcal{G}_n(\Theta)\cap \mathbb{B}(\mathbf{S}_{k,\eta},r(\Theta)/n).
\ee
Then, with probability at least $1-c_1/M^{c_2}$, $\{\mathcal{G}_{k,\eta,n}(\Theta)\}_{k=1}^{K_\eta}$ is a partition of $\mathcal{G}_{n}(\Theta)$ such that 
\be\label{eq:thm2-2}
\operatorname{dist}(\mathcal{G}_{j,\eta,n}(\Theta),\mathcal{G}_{k,\eta,n}(\Theta))\geq \eta\qquad j\neq k,
\ee 
and in this case, there exists $c<1$ such that
\be\label{eq:thm2-3}
\mathbb{X}\cap \mathbb{B}(\mathbf{S}_{k,\eta},cr(\Theta)/n)\subseteq \mathcal{G}_{k,\eta,n}(\Theta)\subseteq \mathbb{B}\left(\mathbf{S}_{k,\eta},r(\Theta)/n\right).
\ee
\end{theorem}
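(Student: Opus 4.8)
The plan is to derive Theorem~\ref{thm:class_separation} from Theorem~\ref{thm:fullsuportdet} together with the two new hypotheses, splitting the claim into three assertions: the pairwise $\eta$-separation \eqref{eq:thm2-2}, the fact that $\{\mathcal{G}_{k,\eta,n}(\Theta)\}_{k=1}^{K_\eta}$ exhausts $\mathcal{G}_n(\Theta)$ (so it is genuinely a partition), and the two-sided inclusion \eqref{eq:thm2-3}. Throughout I would condition on the high-probability event of Theorem~\ref{thm:fullsuportdet}, on which $\XX\subseteq\mathcal{G}_n(\Theta)\subseteq\BB(\XX,r(\Theta)/n)$ holds together with the uniform two-sided bounds $F_n(x)\sim\int_\MM\Psi_n(x,y)\,d\mu(y)$ produced in its proof; all conclusions then inherit the probability $1-c_1/M^{c_2}$. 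The separation is then geometric: since $r(\Theta)\sim\Theta^{-1/(S-\alpha)}$, the hypothesis $n\gtrsim 1/(\eta\Theta^{1/(S-\alpha)})$ says exactly $r(\Theta)/n\lesssim\eta$, and absorbing the constant I may assume $r(\Theta)/n\le\eta/2$. With the cluster separation \eqref{eq:minsep}, the triangle inequality gives $\rho(a,b)\ge 2\eta-2r(\Theta)/n\ge\eta$ whenever $a\in\BB(\mathbf{S}_{j,\eta},r(\Theta)/n)$, $b\in\BB(\mathbf{S}_{k,\eta},r(\Theta)/n)$ with $j\neq k$; since $\mathcal{G}_{k,\eta,n}(\Theta)\subseteq\BB(\mathbf{S}_{k,\eta},r(\Theta)/n)$ by \eqref{eq:thm2-1}, this yields \eqref{eq:thm2-2} and the pairwise disjointness at once. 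The right inclusion in \eqref{eq:thm2-3} is immediate from \eqref{eq:thm2-1}, and the left one is nearly so: if $x\in\XX\cap\BB(\mathbf{S}_{k,\eta},cr(\Theta)/n)$ with $c<1$, then $x\in\mathcal{G}_n(\Theta)$ by Theorem~\ref{thm:fullsuportdet} and $x\in\BB(\mathbf{S}_{k,\eta},r(\Theta)/n)$, hence $x\in\mathcal{G}_{k,\eta,n}(\Theta)$.

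The substance of the argument is the covering, i.e. that no point of $\mathcal{G}_n(\Theta)$ is attached solely to the overlap set $\mathbf{S}_{K_\eta+1,\eta}$. By Theorem~\ref{thm:fullsuportdet} and $\XX=\bigcup_{k=1}^{K_\eta+1}\mathbf{S}_{k,\eta}$, every $x\in\mathcal{G}_n(\Theta)$ lies within $r(\Theta)/n$ of some $\mathbf{S}_{k,\eta}$; the only problematic case is a point $x$ with $\dist(x,\mathbf{S}_{k,\eta})>r(\Theta)/n$ for all $k\le K_\eta$ yet $\dist(x,\mathbf{S}_{K_\eta+1,\eta})\le r(\Theta)/n$. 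For such $x$ I would bound $F_n(x)$ by splitting $\int_\MM\Psi_n(x,y)\,d\mu(y)$ into $\BB(x,r(\Theta)/n)$ and its complement. On the ball all $\mu$-mass comes from $\mathbf{S}_{K_\eta+1,\eta}$ (the true clusters being farther), so with $\Psi_n\lesssim n^2$ this part is $\lesssim n^2\mu(\mathbf{S}_{K_\eta+1,\eta})\lesssim\Theta n^{2-\alpha}$ by the new hypothesis. On the complement, a dyadic-annulus estimate using \eqref{eq:metrickernloc} and the upper ball bound \eqref{eq:ballmeasurecon} gives a geometric sum $\lesssim n^{2-\alpha}r(\Theta)^{-(S-\alpha)}\sim\Theta n^{2-\alpha}$. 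Hence $F_n(x)\lesssim\Theta n^{2-\alpha}$, whereas for any sample $x_k\in\XX$ the lower bound \eqref{eq:ballmeasurelow} forces $\max_k F_n(x_k)\gtrsim n^{2-\alpha}$.

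The delicate point—and the main obstacle—is that both $F_n(x)$ and the threshold $\Theta\max_k F_n(x_k)$ are of the same order $\Theta n^{2-\alpha}$, so excluding $x$ demands a comparison of constants, not merely of orders. I would close the gap by tracking constants explicitly: the tail contribution carries the constant built into $r(\Theta)$ by Theorem~\ref{thm:fullsuportdet} (the radius at which the kernel tail meets the threshold), while the ball contribution is proportional to the implied constant in $\mu(\mathbf{S}_{K_\eta+1,\eta})\lesssim\Theta n^{-\alpha}$. Taking that implied constant small enough forces $F_n(x)<\Theta\max_k F_n(x_k)$, so $x\notin\mathcal{G}_n(\Theta)$, a contradiction; this establishes the covering and completes the partition claim. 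I expect the constant calibration, and in particular verifying that the tail constant from Theorem~\ref{thm:fullsuportdet} leaves room for the overlap contribution, to be the technically hardest step. A cleaner alternative worth checking is whether applying \eqref{eq:ballmeasurelow} at a point of $\mathbf{S}_{K_\eta+1,\eta}$ forces the overlap set to hug the true clusters (within $\lesssim\Theta^{1/\alpha}/n\ll r(\Theta)/n$), which would yield the covering more directly, though only after a slight enlargement of the radius that one must then reconcile with the exact $r(\Theta)/n$ in \eqref{eq:thm2-3}.
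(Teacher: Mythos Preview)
Your approach is correct in outline and would go through, but it differs from the paper's route in one structural way that makes the ``delicate point'' you flagged much easier. The paper does not work directly with $F_n$ for the covering argument. Instead it first proves a continuous analogue (Theorem~\ref{thm:partmu}) for the sets $\mathcal{S}_n(\theta)=\{x:\sigma_n(x)\ge 4\theta I_n\}$, where the calibration is clean: for $x$ outside $\BB(\mathbf{S},d(\theta)/n)$ one gets $\sigma_n(x)\le C_0n^2\mu(\mathbf{S}_{K_\eta+1,\eta})+C_1n^{2-\alpha}d(\theta)^{\alpha-S}\le 2C_2\theta n^{2-\alpha}\le 2\theta I_n$, strictly below the threshold $4\theta I_n$ with an explicit factor-of-two slack. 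The discretization is then handled once and for all by a sandwich $\mathcal{S}_n(\theta_1)\subseteq\mathcal{G}_n(\Theta)\subseteq\mathcal{S}_n(\theta_2)$ with $\theta_1=(1+C^*)\Theta/4$, $\theta_2=C^*\Theta/8$ (Theorem~\ref{thm:GnSnbound}), and since $d(\theta_2)=r(\Theta)$ exactly, the covering, separation, and right inclusion for $\mathcal{G}_n(\Theta)$ follow immediately from those for $\mathcal{S}_n(\theta_2)$. The constant $c$ in \eqref{eq:thm2-3} is then the explicit ratio $d(\theta_1)/d(\theta_2)=(C^*/(2(1+C^*)))^{1/(S-\alpha)}<1$.

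Your direct argument buys a shorter path but pays for it exactly where you anticipated: you must compare $F_n(x)$ to $\Theta\max_kF_n(x_k)$ with both sides of order $\Theta n^{2-\alpha}$, while simultaneously absorbing the $\beta I_n$ discretization error. This is doable---choose the implied constant in $\mu(\mathbf{S}_{K_\eta+1,\eta})\lesssim\Theta n^{-\alpha}$ and the $\beta$ in Lemma~\ref{lem:Inbounds} small enough---but the bookkeeping is messier than the paper's two-layer decomposition. Your observation that the left inclusion in \eqref{eq:thm2-3} is immediate from $\XX\subseteq\mathcal{G}_n(\Theta)$ for any $c\le 1$ is correct and in fact simpler than what the paper writes there; the paper routes it through $\mathcal{S}_n(\theta_1)$ only to produce the specific value of $c$. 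Your alternative idea at the end (forcing $\mathbf{S}_{K_\eta+1,\eta}$ to hug the true clusters via \eqref{eq:ballmeasurelow}) is not needed and would not mesh with the exact radius $r(\Theta)/n$, as you suspected.
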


\begin{remark}\label{rem:meshnorm}
If $\mathcal{C}=\{z_1,\cdots, z_M\}$ is a random sample from $\mu$, $n\ge 1$ and $M\gs n^\alpha\log n$, then it can be shown (cf. \cite[Lemma~7.1]{mhaskar2020kernel}) that for any point $x\in\XX$, there exists some $z\in\mathcal{C}$ such that $\rho(x,z)\le 1/n$. 
Hence, the Hausdorff distance between $\XX$ and $\mathcal{C}$ is $\le 1/n$. 
If $\mu$ has a fine structure in the classical sense, and $n\gs \eta^{-1}$, then this implies that a correct clustering of $\mathcal{C}$ would give rise to a correct classification of every point in $\XX$.
This justifies our decision to construct the algorithm in Section~\ref{sec:alg} to classify only the points in $\mathcal{C}$. 
On the other hand, the use of the localized kernel as in the theorems above guide us about the choice of the points at which to query the label. \qed
\end{remark}

In Figure~\ref{fig:twomoons} we illustrate Theorem~\ref{thm:class_separation} applied to a simple two-moons data set. We see that the support estimation set, shown in yellow, covers the data points as well as their nearby area, predicting the support of the distribution from which the data came from. Furthermore, we show in the figure a motivating idea: by querying a single point in each component for its class label we can extend the label to the other points in order to classify the whole data set. This is how we utilize the active learning paradigm in our algorithm discussed in Section~\ref{sec:alg}.

\begin{figure}[!ht]
\begin{center}
\includegraphics[width=.4\textwidth]{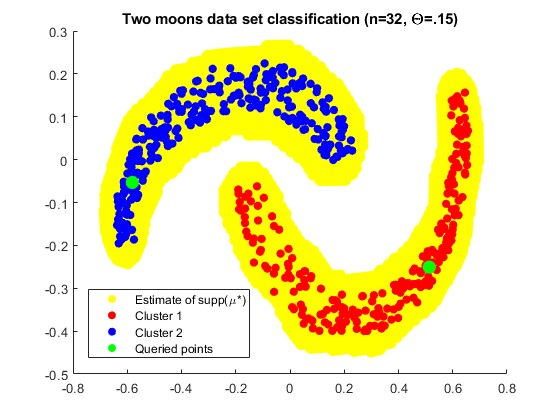}
\end{center}
\caption{Demonstration of the support estimation set $\mathcal{G}_{32}(0.15)$ (yellow) applied to a simple two-moons data set from \cite{twomoons} (blue and red). By querying one point from each component of the support estimation set and extending the label to the other points in the same component, we can classify the entire data set with 100\% accuracy.}
\label{fig:twomoons}
\end{figure}

Our final result examines the fidelity of our classification scheme in terms of the asymptotics of the F-score associated with our support estimation theorems as $\eta\to 0$.
 We show that our support estimation setup asymptotically approaches the ideal F-score of $1$.
   
\begin{theorem}\label{thm:fscore}
Suppose the assumptions of Theorem~\ref{thm:class_separation} are satisfied and that
\be
\lim_{\eta\to 0^+} \max_{0\leq k\leq K_\eta}\left(\frac{\mu(\mathbf{S}_{K_{\eta+1},\eta})}{\mu(\mathbf{S}_{k,\eta})}\right)=0.
\ee
Then, with probability at least $1-c_1/M^{c_2}$, we have
\be
\lim_{\eta\to 0^+} \mathcal{F}_\eta\left(\{\mathcal{G}_{k,\eta,n}(\Theta)\}_{k=1}^{K_\eta}\right)=1.
\ee
where $\mathcal{F}_\eta$ is the $F$-score with respect to $\mathbf{S}_\eta$.
\end{theorem}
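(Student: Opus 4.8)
The plan is to reduce the global F-score to a uniform per-cluster estimate, and to show that each estimated cluster $\mathcal{G}_{k,\eta,n}(\Theta)$ agrees with its true support $\mathbf{S}_{k,\eta}$ up to a set that can only be absorbed into the overlap region $\mathbf{S}_{K_\eta+1,\eta}$. Everything is carried out deterministically on the event of probability $\ge 1-c_1/M^{c_2}$ on which the conclusions of Theorem~\ref{thm:class_separation} hold, so the stated probability is inherited directly.

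First I would fix $k\in\{1,\dots,K_\eta\}$ and examine the symmetric difference of $\mathcal{G}_{k,\eta,n}(\Theta)$ and $\mathbf{S}_{k,\eta}$. Since $\mathbf{S}_\eta$ partitions $\mathbb{X}$ we have $\mathbf{S}_{k,\eta}\subseteq\mathbb{X}$, and trivially $\mathbf{S}_{k,\eta}\subseteq\mathbb{B}(\mathbf{S}_{k,\eta},cr(\Theta)/n)$, so the left containment in \eqref{eq:thm2-3} gives $\mathbf{S}_{k,\eta}\subseteq\mathcal{G}_{k,\eta,n}(\Theta)$. Thus the symmetric difference collapses to the one-sided boundary layer $\mathcal{G}_{k,\eta,n}(\Theta)\setminus\mathbf{S}_{k,\eta}$. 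Because $\mu$ is supported on $\mathbb{X}$, only the intersection with $\mathbb{X}$ carries mass, and the right containment in \eqref{eq:thm2-3} places this intersection inside $\bigl(\mathbb{B}(\mathbf{S}_{k,\eta},r(\Theta)/n)\setminus\mathbf{S}_{k,\eta}\bigr)\cap\mathbb{X}$.

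Next comes the key separation step. Any point $x\in\mathbb{X}$ within distance $r(\Theta)/n$ of $\mathbf{S}_{k,\eta}$ belongs to some partition piece $\mathbf{S}_{l,\eta}$. The hypothesis $n\gtrsim 1/(\eta\Theta^{1/(S-\alpha)})$ together with $r(\Theta)\sim\Theta^{-1/(S-\alpha)}$ forces $r(\Theta)/n\lesssim\eta<2\eta$, so by the cluster minimal separation \eqref{eq:minsep} such an $x$ cannot lie in $\mathbf{S}_{l,\eta}$ for any $l\le K_\eta$ with $l\ne k$. Hence $x\in\mathbf{S}_{k,\eta}\cup\mathbf{S}_{K_\eta+1,\eta}$, and after removing $\mathbf{S}_{k,\eta}$ only the overlap piece survives. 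This yields the uniform bound $\mu\bigl(\mathcal{G}_{k,\eta,n}(\Theta)\setminus\mathbf{S}_{k,\eta}\bigr)\le\mu(\mathbf{S}_{K_\eta+1,\eta})$. Feeding this into the symmetric-difference identity of Remark~\ref{rem:fscore} (choosing the index $k$ itself in the maximum that defines $\mathcal{F}_\eta(\mathcal{G}_{k,\eta,n}(\Theta))$) gives
\[
1-\mathcal{F}_\eta(\mathcal{G}_{k,\eta,n}(\Theta))\le\frac{\mu(\mathbf{S}_{K_\eta+1,\eta})}{\mu(\mathcal{G}_{k,\eta,n}(\Theta))+\mu(\mathbf{S}_{k,\eta})}\le\frac{\mu(\mathbf{S}_{K_\eta+1,\eta})}{\mu(\mathbf{S}_{k,\eta})}\le\varepsilon_\eta,
\]
where $\varepsilon_\eta\coloneqq\max_{0\le k\le K_\eta}\mu(\mathbf{S}_{K_\eta+1,\eta})/\mu(\mathbf{S}_{k,\eta})$, the bound being uniform in $k$.

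Finally I would push this through the definition \eqref{eq:fscoredef}: the global score $\mathcal{F}_\eta\bigl(\{\mathcal{G}_{k,\eta,n}(\Theta)\}_{k=1}^{K_\eta}\bigr)$ is a convex combination of the per-cluster scores with the nonnegative weights $\mu(\mathcal{G}_{k,\eta,n}(\Theta))$, hence it is itself $\ge 1-\varepsilon_\eta$. Combined with the upper bound $\mathcal{F}_\eta\le1$ from Remark~\ref{rem:fscore}, this squeezes the global score into $[1-\varepsilon_\eta,1]$, and letting $\eta\to0^+$ drives $\varepsilon_\eta\to0$ by the standing hypothesis, giving the limit $1$. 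I expect the only genuinely delicate point to be the separation step: one must verify that the boundary layer cannot leak into an adjacent true cluster, which rests on correctly balancing the scale relation $r(\Theta)/n\lesssim\eta$ against the $2\eta$-gap with the constants already fixed in Theorem~\ref{thm:class_separation}; the remainder is bookkeeping on top of the containments \eqref{eq:thm2-3}.
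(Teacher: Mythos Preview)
Your proof is correct and follows essentially the same approach as the paper: both establish $\mathbf{S}_{k,\eta}\subseteq\mathcal{G}_{k,\eta,n}(\Theta)$ and bound the excess mass by $\mu(\mathbf{S}_{K_\eta+1,\eta})$, then pass to the global score as a convex combination of per-cluster scores. You are actually more explicit than the paper about the separation step (showing the boundary layer $\mathcal{G}_{k,\eta,n}(\Theta)\setminus\mathbf{S}_{k,\eta}$ cannot leak into an adjacent $\mathbf{S}_{l,\eta}$), which the paper simply asserts via the measure inequality $\mu(\mathcal{G}_{k,\eta,n})\le\mu(\mathbf{S}_{k,\eta})+\mu(\mathbf{S}_{K_\eta+1,\eta})$ without spelling it out.
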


\section{MASC algorithm}
\label{sec:alg}

\subsection{Algorithm description}

In the following paragraphs we describe the motivation and intuition of the algorithm MASC (Algorithm~\ref{alg:MASC}). Throughout this section we will refer to line numbers associated with Algorithm~\ref{alg:MASC}.

One obvious way to embed a data into a metric space with diameter $\le \pi$ is just to rescale it.
If the data is a compact subset of an ambient Euclidean space $\RR^q$, we may project the data on the unit sphere $\SS^q\subset \RR^{q+1}$ by a suitable inverse stereographic projection. 
The metric space $\SS^q$, equipped with the geodesic distance $\arccos(\circ,\circ)$ has diameter $\pi$ by construction. In any case, we assume that we have access to $\rho(x_i,x_j)$ for all $x_i,x_j\in \mathcal{D}$.

One of the main obstacles we must overcome in an implementation of our theory is the following. 
In practice, we often do not know the minimal separation $\eta$ of the data classes beforehand, nor do we know optimal values for $\Theta,n$.
Taking a machine learning perspective, we develop a multiscale approach to remedy these technical challenges: treat $n,\Theta$ as hyperparameters of the model and increment $\eta$. 
Firstly, MASC will threshold out any data points not belonging to $\mathcal{G}_n(\Theta)$ (line 2).
For each value of $\eta$ (initialize while loop in line 4) we construct a (unweighted) graph where an edge goes between two points $x_i,x_j$ if and only if $\rho(x_i, x_j)<\eta$ (line 5). 
At this point, we have a method for unsupervised clustering by simply examining graph components (line 6, see below for discussion on $p$). 
The idea to implement active learning is to then query a modal point of each graph component (line 11), also referred to in this section as a cluster, with respect to $\Psi_n$ and extend that label to the rest of the cluster (line 13). 
A trade-off associated with this idea is the following: if we initialize $\eta$ too small (respectively, $n$ too large) then each point in the data set will be its own cluster and we will simply query the whole data set, whereas if we initialize $\eta$ too large (respectively, $n$ too small) then the whole data set will belong to a single cluster destroying any classification accuracy. 
Therefore, we initialize $\eta$ small and introduce a minimum cluster size threshold value $p$ to avoid this issue. Any cluster of size $<p$ will be removed from consideration (line 6), so we will not query any points until $\eta$ is large enough to produce a cluster of size $p$ or greater.

After the label extension is done in each cluster of size $\ge p$, we keep track of which points we queried (line 12), increment $\eta$ (line 16), and repeat (line 4). 
Sometime after the first incrementation of $\eta$, we will experience the combination of clusters which were previously disconnected. 
When this occurs we check whether each of the previously queried points in the new cluster have the same label (line 14). If so, then we extend it to the new cluster (line 15).
 Otherwise, we halt the extension of labels for all points in that cluster. 
 In this way, the method proceeds by a cautious clustering to avoid labeling points that are either 1) in a too-low density region, or 2) within a cluster where we have queried multiple points with contradicting labels.

Once $\eta$ is large enough that the data set all belongs to a single cluster, we will not gain any new information by incrementing $\eta$ further, and hence MASC will halt the iterations of $\eta$ (lines 7 and 8). The final process is to implement a method for estimating the labels of points that did not receive a predicted label in the first part, either because they belonged to a low-density region and were thresholded out or because they belonged to a cluster with conflicting queried points. The remaining task is equivalent to the semi-supervised regime of classification and we acknowledge that there is a vast variety of semi-supervised learning methods to choose from. In MASC, we have elected to use a traditional $\overline{k}$-nearest neighbors approach.

For a data point $x_j$, we denote the set of its nearest $\overline{k}$ neighbors which already have labels $\hat{y}(x_j)$ estimated from MASC by $\mathcal{A}_{j,\overline{k}}$. The $\overline{k}$-nearest neighbors formula to estimate the label of $x_j$ is then given by:
\be
\underset{k\in [K]}{\argmax} |\{x_i\in A_{j,\overline{k}}: \hat{y}(x_j)=k\}|,
\ee
with some way to decide on the choice of $k$ in the event of a tie. In binary classification tasks, the value of $\overline{k}$ can be chosen as an odd value to prevent ties. Otherwise, a tie can be broken by choosing the label of the nearest point with a tied label, a hierarchical ordering of the labels, at random, etc. In our Python implementation of the algorithm used to produce the figures in this paper, we use the \verb|scipy.stats.mode| function, which returns the first label in the list of tied labels upon such a tie.

MASC will collect all points which do not yet have predicted labels (line 17), and apply the nearest-neighbors approach as described above to each of these points (lines 19 and 20). At this point, every element in the data set will have a predicted label, so the algorithm will return the list of labels (line 21).


In MASC, we require defining a starting $\eta$ and $\eta_{\operatorname{step}}$. Once the matrix with entries given by $\Psi_n(x_i,x_j)$ is calculated, one may search for the range of $\eta$ values which give non-trivial clusters of size $\geq p$ with relative ease. If $\eta$ is too small, no cluster will contain a sufficient number of points and if $\eta$ is too large, every point will belong to the same cluster, both of which we consider a ``trivial" case. Then $\eta_{\operatorname{step}}$ may be chosen to satisfy some total number of iterations across this domain. The values $n,\Theta,p,\overline{k}$ are considered hyperparameters.

\begin{algorithm}
\caption{Multiscale Active Super-resolution Classification (MASC)}
\label{alg:MASC}
\KwIn{Data set $X$, kernel degree $n$, threshold parameter $\Theta$, $\eta$ initialization, step size $\eta_{\text{step}}>0$, cluster size minimum $p$, oracle $f$, neighbor parameter $\overline{k}$.}
\KwOut{Predicted labels $\hat{y}$ for all points in $X$.}

$\mathcal{A} \leftarrow \emptyset$ \Comment*{Initialize queried point set}
    $V\gets \{x_i\in X: x_i\in \mathcal{G}_n(\Theta)\}$ \Comment*{Prune data to consider only those in threshold set \eqref{eq:support_est_set_def}}
$\operatorname{STOP}\gets \operatorname{FALSE}$ \;

\While{$\operatorname{STOP}=\operatorname{FALSE}$}{
	$E\gets \{(x_i,x_j)\in V\times V:\rho(x_i,x_j)<\eta, x_i\neq x_j\}$ \Comment*{Edge set consisting of points within $\eta$ distance from each other}
    $\{C_{\eta,\ell}\}_{\ell=1}^{K_\eta}\gets$ connected components of $G=(V,E)$ with size $\ge p$ \;
     \If{$|C_{\eta,1}|=|V|$}{
     $\operatorname{STOP}\gets\operatorname{TRUE}$ \Comment*{End while loop once $G$ is connected}}

    \For{$\ell = 1$ \KwTo $K_n$}{
        \If{$C_{\eta,\ell} \cap \mathcal{A} = \emptyset$}{
            $x_i \leftarrow \underset{{x \in C_{\eta,\ell}}}{\argmax} \sum_{j=1}^{M} \Psi_n( x, x_j)$ \Comment*{Locate maximizer of $F_n$ (cf. \eqref{eq:support_estimator_def}) in $C_{\eta,\ell}$ without any queried points}
            $\mathcal{A} \leftarrow \mathcal{A}\bigcup \{x_i\}$ \Comment*{Append maximizer to queried point set}
            {\small $\hat{y}(x_j) \leftarrow f(x_i)$ for all $x_j \in C_{\eta,\ell}$} \Comment*{Query point and extend label to all of $C_{\eta,\ell}$}
        }
        \ElseIf{$\forall x_i,x_j \in C_{\eta,\ell} \cap \mathcal{A}, f(x_i) = f(x_j)=:c_{\eta,\ell}$}{
            {\small $\hat{y}(x_j) \leftarrow c_{\eta,\ell}$ for all $x_j \in C_{\eta,\ell}$ \Comment*{If all queried points in component have same label, extend label to entire component}} 
        }
    }

        $\eta \leftarrow \eta + \eta_{\text{step}}$ \;

}

%

%

$\mathcal{C}_{\text{uncertain}}\gets \{x\in X : \hat{y}(x_j)=\text{DNE}\}$ \Comment*{Set of points which do not have a predicted label}

\For{$x_j \in \mathcal{C}_{\text{uncertain}}$}{
$\mathcal{A}_{j,\overline{k}}\gets \{x\in X\setminus \mathcal{C}_{\text{uncertain}}: x \text{ is the $\overline{k}$th closest element to $x_j$ or closer, with respect to $\rho$}\}$ \;
    
    $\hat{y}(x_j)\gets \underset{k\in [K]}{\argmax} |\{x_i\in A_{j,\overline{k}}: y_j=k\}|$ \Comment*{$\overline{k}$-nearest neighbors approach to estimate labels for uncertain points}
}

\Return $\hat{y}$.
\end{algorithm}

\subsection{Comparison with CAC and SCALe}
\label{subsec:CACcomparison}

In \cite{cloningercluster}, a similar theoretical approach to this paper except on the Euclidean space was developed and an algorithm we will call ``Cautious Active Clustering" (CAC) was introduced. MASC and CAC are both multiscale algorithms using $\mathcal{G}_n(\Theta)$ to threshold the data set, then constructing graphs to query points and extend labels. The main difference between the algorithms is the following. In CAC, $\eta,\Theta$ are considered hyperparameters while $n$ is incremented, whereas in MASC, $n,\Theta$ are considered hyperparameters while $\eta$ is incremented. 
This adjustment serves three purposes:
\ben
\item It connects the algorithm closer to the theory, which states that a single $n,\Theta$ value will suffice for the right value of $\eta$. We do not know $\eta$ in advance, but by incrementing $\eta$ until all of the data belongs to a single cluster, we will attain a value close to the true value at some step. At this step, we will query points belonging roughly to the ``true" clusters and that information will be carried onward to the subsequent steps.

\item Consistency in query procedure: we use the same function to decide which points to query at each level, rather than it changing as the algorithm progresses.

\item It improves computation times since computing the $\Psi_{n}$ matrix for varying values of $n$ tends to take more time than incrementing $\eta$ and checking graph components.
\een
In MASC, we have the additional parameter $p$ specifying the minimum size of the graph component to allow a query. While this is new compared to CAC, the main purpose is to reduce the total number of queries to just those that contain more information. One could implement such a change to CAC as well for similar effect. A further difference is that CAC uses a localized summability kernel approach to classify uncertain samples, whereas MASC uses a nearest-neighbors approach.

SCALe, as introduced in \cite{mhaskar-odowd-tsoukanis} is an even more similar algorithm to MASC. The main difference between MASC and SCALe is the final step, where in the present method we use a nearest-neighbors approach to extend labels to uncertain points while in SCALe the choice was to use a  function approximation technique developed in \cite{mhaskarodowd}. Both methods have their pros and cons. Compared to SCALe, the nearest-neighbors approach of MASC:
\ben
\item  works in arbitrary metric spaces, without requiring a summability kernel as in SCALe.

\item extends labels to uncertain points (sometimes much) faster, reducing computation time while usually providing comparable or better results with sufficiently many queries, but

\item reduces accuracy in extremely sparse query setting, where the function estimation method with the manifold assumption empirically seems to extend labels more consistently.
\een

\section{Numerical examples}
\label{sec:numerical}

In this section, we look at the performance of the MASC algorithm applied to 1) a synthetic data set with overlapping class supports (Section~\ref{subsec:circleellipse}), 2) a document data set (Section~\ref{sec:document}), and 3) two different hyperspectral imaging data sets: Salinas (Section~\ref{sec:salinas}) and Indian Pines (Section~\ref{sec:indianpines}). In each case, we project the (potentially pre-processed) data to the sphere and use $\rho=\arccos(\circ\cdot \circ)$ as the metric for graph construction. This guarantees that the metric space has diameter $\leq\pi$. On the Hyperspectral data sets, we compare our method with two other algorithms for active learning: LAND and LEND (Section~\ref{sec:comparison}). 

For hyperparameter selection on our model as well as the comparisons, we have not done any validation but rather optimized the hyperparameters for each model on the data itself. So the results should be interpreted as being near-best-possible for the models applied to the data sets in question rather than a demonstration of generalization capabilities. While this approach is non-traditional for unsupervised/supervised learning, it has been done for other active learning research (\cite{murphy-lend}, for example) so we have elected to follow the same procedure in this paper. Further, an exhaustive grid search was not conducted but rather local minima among grid values were selected for each hyperparameter. For MASC, we looked at $n$ in powers of $2$ and $\overline{k}$ values in multiples of $5$. For LAND we looked at $K,t$ at increments of $10$, and with LEND we used the same parameters from LAND and looked at integer $J$ values and $\alpha$ values in increments of $0.1$. For $\Theta$, we tried values less rigorously, meaning that better $\Theta$ values may exist than the ones chosen. Due to the nature of the algorithm, increasing $\Theta$ will increase the number of samples that the nearest-neighbors approach has to estimate, while reducing the number of labeled neighbors it has to do so. However, increasing $\Theta$ can also reduce the number of queries used, sometimes without deterioration in accuracy. So there may be some tradeoff, but we generally see the best results when $\Theta$ is chosen to threshold a small portion of the initial data (outlier removal). In Table~\ref{tab:parameters}, we summarize the choice of parameters for each of the data sets in the subsequent sections.

\begin{table}[htbp]
    \centering
    \textbf{MASC hyperparameter selection for each data set}\\
    \begin{tabular}{|l|c|c|c|c|}
        \hline
        Dataset & $\Theta$ & $\eta$ & $p$ & $\overline{k}$ \\
        \hline
        Circle+Ellipse & 0.12 & $[0.006, 0.036]$ & 15 & 5 \\
        (Section~\ref{subsec:circleellipse})               &      & (step size 0.005) &    &   \\
        \hline
        Document & 0.51 & $[0.08, 0.15]$ & 3 & 25 \\
        (Section~\ref{sec:document})         &      & (step size 0.002) &    &   \\
        \hline
        Salinas & 0.32 & $[0.21, 0.27]$ & 3 & 25 \\
        (Section~\ref{sec:salinas})       &      & (step size 0.005) &    &   \\
        \hline
        Indian Pines & 0.08 & $[0.03, 0.13]$ & 5 & 15 \\
        (Section~\ref{sec:indianpines})             &      & (step size 0.005) &    &   \\
        \hline
    \end{tabular}
        \caption{Selected hyperparameter values for our MASC algorithm applied to the data sets in the subsequent sections.}
    \label{tab:parameters}
\end{table}

\subsection{Circle on ellipse data}
\label{subsec:circleellipse}

Although the theory in this paper focuses on the case where the supports of the classes are separated (or at least satisfy a fine-structure condition), our MASC algorithm still performs well at classification tasks of data with overlapping supports in the regions without overlap. 
To illustrate this, we generated a synthetic data set of 1000 points sampled along the arclength of a circle and another 1000 sampled along the arclength of an ellipse with eccentricity 0.79. 
For each data point, normal noise with standard deviation 0.05 was additively applied independently to both components. Figure~\ref{fig:circleellipse} shows the true class label for each of the points on the left and the estimated class labels on the right. 
We can see that the misclassifications are mostly localized to the area where the supports of the two measures overlap.
 Near the intersection points of the circle and ellipse the classification problem becomes extremely difficult due to a high probability that a data point could have been sampled from either the circle or ellipse.

\begin{figure}
\begin{center}
\begin{subfigure}[t]{.45\textwidth}
\begin{center}
\includegraphics[width=\textwidth]{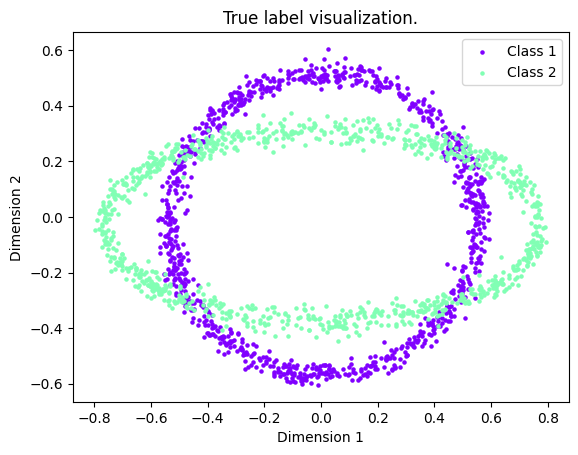}
\subcaption{True labels of the circle and ellipse data.}
\end{center}
\end{subfigure}
\begin{subfigure}[t]{.45\textwidth}
\begin{center}
\includegraphics[width=\textwidth]{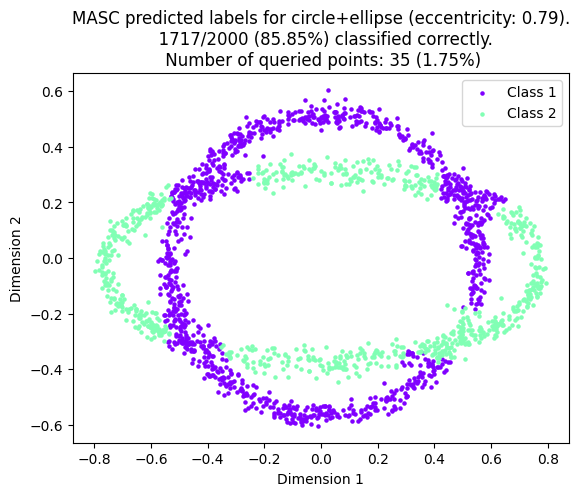}
\subcaption{Predicted labels using MASC with 35 queries, achieving 83\% accuracy.}
\end{center}
\end{subfigure}
\caption{This figure illustrates the result of applying MASC to a synthetic circle and ellipse data set. On the left are true labels of the given data, and on the right is the estimation attained by MASC.}
\label{fig:circleellipse}
\end{center}
\end{figure}

\subsection{Document data}
\label{sec:document}

This numerical example uses the document data set provided by Jensen Baxter through Kaggle \cite{documentdata}. The data set contains 1000 documents total, 100 each belonging to a particular category from: business, entertainment, food, graphics, historical, medical, politics, space, sport, and technology. For prepossessing we run the data through the Python sklearn package's TfidfVectorizer function to convert the documents into vectors of length 1684. Then we implement MASC.

In Figure~\ref{fig:doc1} we see the results of applying MASC on the document data in two steps. On the left we see the classification task by MASC paused at line 17 of Algorithm~\ref{alg:MASC}, before labels have been extended via the nearest neighbor  portion at the end of the algorithm. On the right we see the result of the density estimation extension. In Figure~\ref{fig:doc2} we see on the left a confusion matrix for the result shown in Figure~\ref{fig:doc1}, allowing us to see which classes were classified the most accurately versus which ones had more trouble. We see the largest misclassifications had to do with documents that were truly ``entertainment" but got classified as either ``sport" or ``technology", and documents which were actually ``graphics" but got classified as ``medical". On the right of Figure~\ref{fig:doc2} we have a plot indicating the resulting accuracy vs. the number of queries which MASC was allowed to do. Naturally as the number of queries approaches 1000 this plot will gradually increase to 100\% accuracy. Lastly, in Figure~\ref{fig:doc3} we see a side-by-side comparison of the true labels for the document data set vs. the predicted labels.

\begin{figure}
\begin{center}
\begin{subfigure}[t]{.45\textwidth}
\begin{center}
\includegraphics[width=\textwidth]{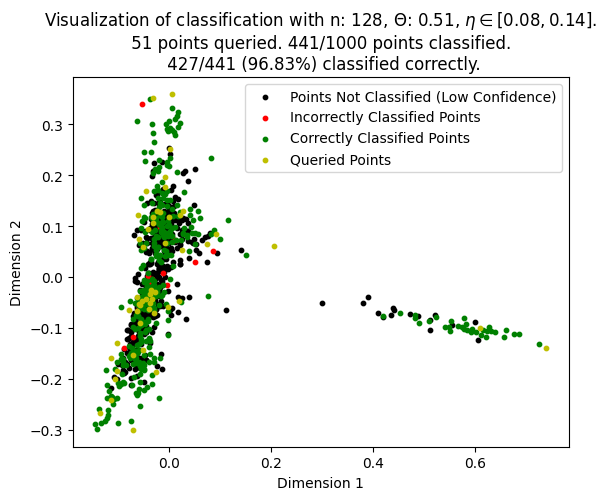}
\subcaption{Classification of certain points in MASC algorithm (before density estimation extension).}
\end{center}
\end{subfigure}
\begin{subfigure}[t]{.45\textwidth}
\begin{center}
\includegraphics[width=\textwidth]{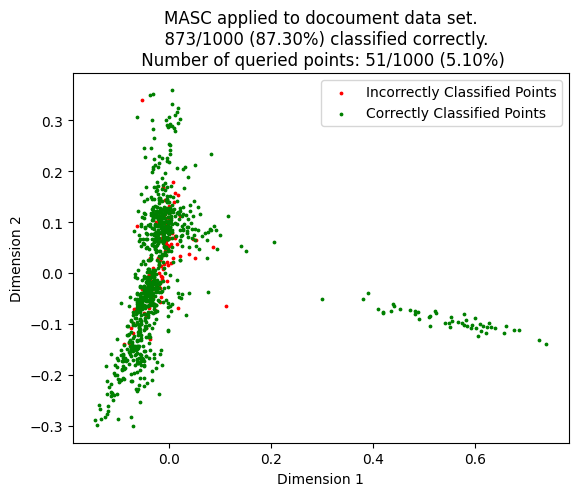}
\subcaption{Classification of remainder points using density estimation extension.}
\end{center}
\end{subfigure}
\caption{This figure illustrates the classification process undergone MASC on the document data set at two points. On the left, we see the classification of points before the $\overline{k}$-nearest neighbors extension. On the right, we see the result after $\overline{k}$-nearest neighbors extension. Figure dimensions are the directions associated with the largest 2 singular values of the data matrix.}
\label{fig:doc1}
\end{center}
\end{figure}

\begin{figure}[!ht]
\begin{center}
\begin{subfigure}[t]{.45\textwidth}
\begin{center}
\includegraphics[width=\textwidth]{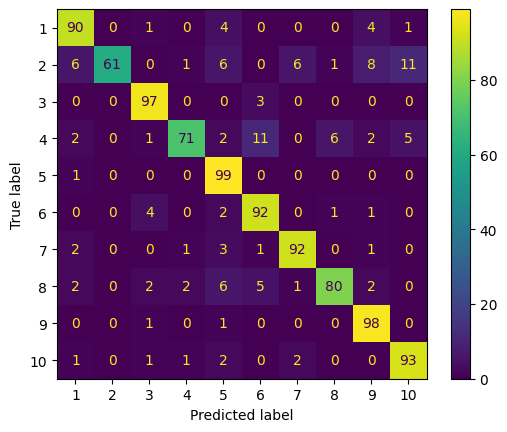}
\subcaption{Confusion matrix.}
\end{center}
\end{subfigure}
\begin{subfigure}[t]{.45\textwidth}
\begin{center}
\includegraphics[width=\textwidth]{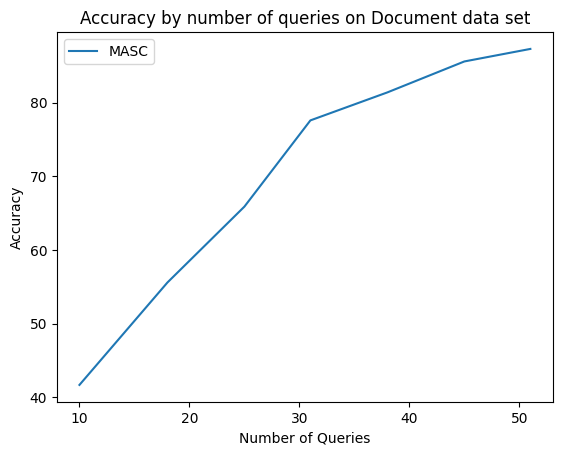}
\subcaption{Plot of MASC accuracy vs. number of allowed query points}
\end{center}
\end{subfigure}
\caption{Further details on the classification results for the document data set. (Left) Confusion matrix for single run of MASC algorithm. (Right) Accuracy of MASC algorithm vs. the number of queries used.}
\label{fig:doc2}
\end{center}
\end{figure}

\begin{figure}[!ht]
\begin{center}
\begin{subfigure}[t]{.45\textwidth}
\begin{center}
\includegraphics[width=\textwidth]{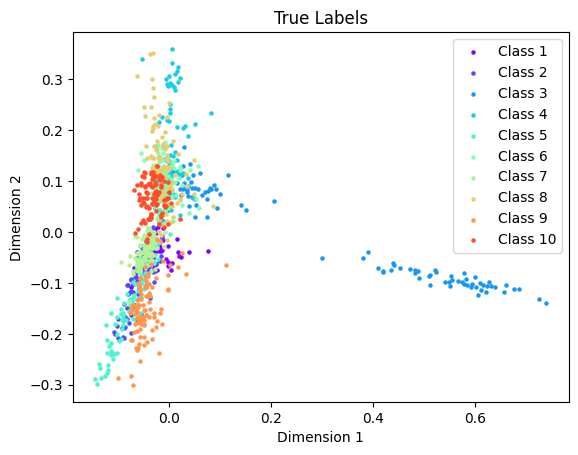}
\caption{True labels.}
\end{center}
\end{subfigure}
\begin{subfigure}[t]{.45\textwidth}
\begin{center}
\includegraphics[width=\textwidth]{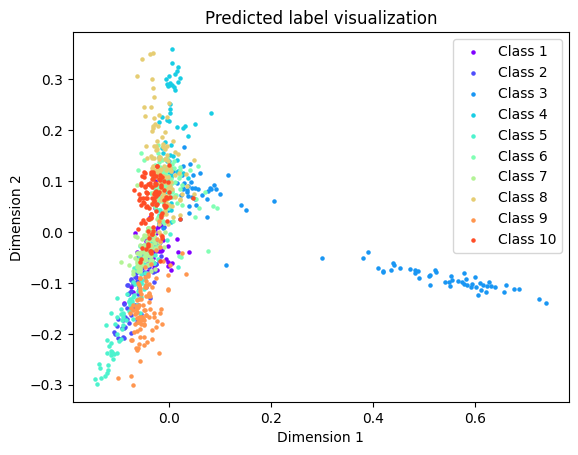}
\caption{Predicted labels.}
\end{center}
\end{subfigure}
\caption{Visual comparison of true labels (left) versus predicted labels output by the model (right) for the document data set.}
\label{fig:doc3}
\end{center}
\end{figure}

\subsection{Salinas hyperspectral data}
\label{sec:salinas}

This numerical example is done on a subset of the Salinas hyperspectral image data set from \cite{hsidata}. The full data set is visualized in Figure~\ref{fig:sal_gt}. Our subset of the Salinas data set consists of 20034 data vectors of length 204 belonging to 10 classes of the 16 original classes. Specifically, we took half of the data points at random from each of the first 10 classes of the original data set. For preprocessing we ran PCA and kept the first 50 components. Then we implemented MASC.

\begin{figure}[!ht]
\begin{center}
\includegraphics[width=.4\textwidth]{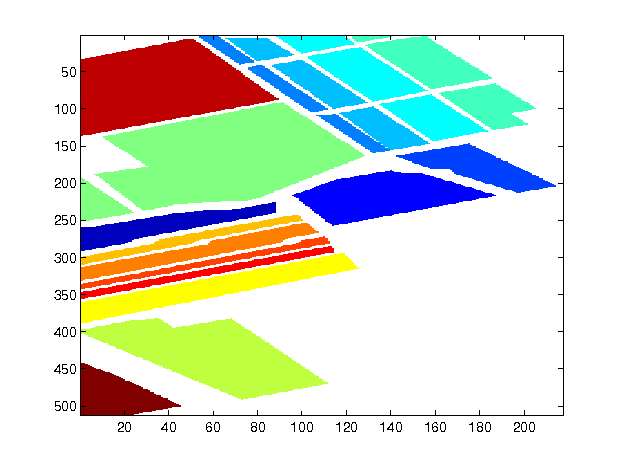}
\caption{Visualization of the full Salinas data set ground truth labels by geographic location. Image sourced from \cite{hsidata}.}
\label{fig:sal_gt}
\end{center}
\end{figure}

In Figure~\ref{fig:sal1} we see the results of applying MASC on the Salinas data in two steps. 
On the left we see the classification task by MASC paused at line 17 of Algorithm~\ref{alg:MASC}, before labels have been extended via the nearest neighbor  portion at the end of the algorithm. 
At this stage, our algorithm has classified 1518 points with 99.60\% accuracy using 261 queries. 
On the right we see the result of the $\overline{k}$-nearest neighbors extension, where all 20034 points have been classified with 97.11\% accuracy. 
In Figure~\ref{fig:sal2} we see a confusion matrix for the result shown in Figure~\ref{fig:sal1}, allowing us to see which classes were classified the most accurately versus which ones had more trouble. We see the largest misclassification involved our predicted class 5, which included points from several other classes. Lastly, in Figure~\ref{fig:sal3} we see a side-by-side comparison of the true labels for the Salinas data set versus the predicted labels.

\begin{figure}[!ht]
\begin{center}
\begin{subfigure}[t]{.45\textwidth}
\begin{center}
\includegraphics[width=\textwidth]{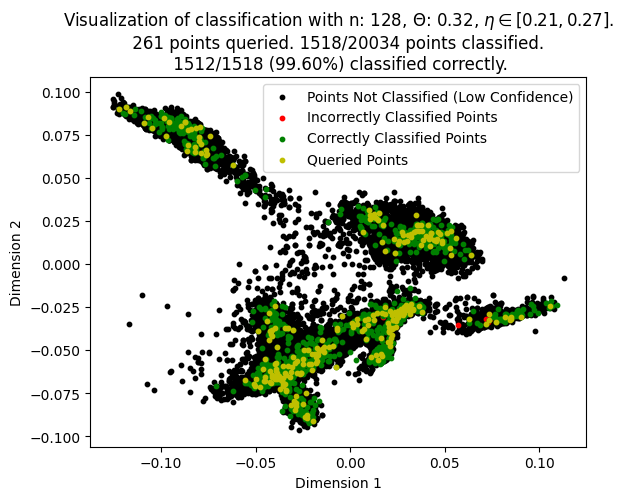}
\subcaption{Classification of certain points in MASC algorithm (before $\overline{k}$-nearest neighbors extension).}
\end{center}
\end{subfigure}
\begin{subfigure}[t]{.45\textwidth}
\begin{center}
\includegraphics[width=\textwidth]{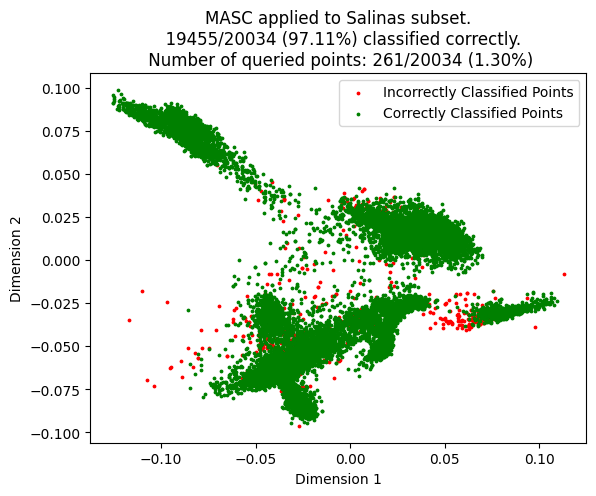}
\subcaption{Classification of remainder points using $\overline{k}$-nearest neighbors extension.}
\end{center}
\end{subfigure}
\caption{This figure illustrates the classification process undergone by MASC at two points on the Salinas hyperspectral data set. On the left, we see the classification of points before the $\overline{k}$-nearest neighbors extension. On the right, we see the result after $\overline{k}$-nearest neighbors extension. Dimensions correspond to the second and third directions of greatest variance according to the PCA decomposition.}
\label{fig:sal1}
\end{center}
\end{figure}

\begin{figure}[!ht]
\begin{center}
\includegraphics[width=.45\textwidth]{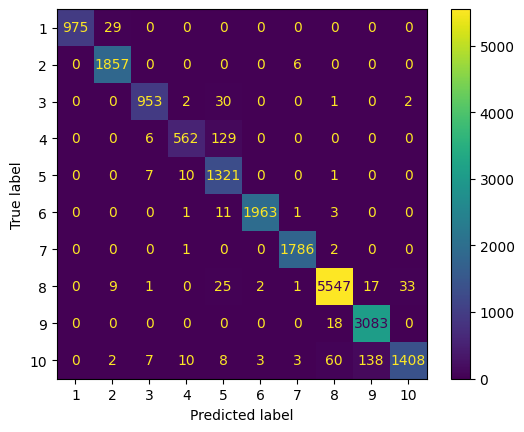}
\caption{Confusion matrix for single run of MASC algorithm on Salinas.}
\label{fig:sal2}
\end{center}
\end{figure}

\begin{figure}[!ht]
\begin{center}
\begin{subfigure}[t]{.45\textwidth}
\begin{center}
\includegraphics[width=\textwidth]{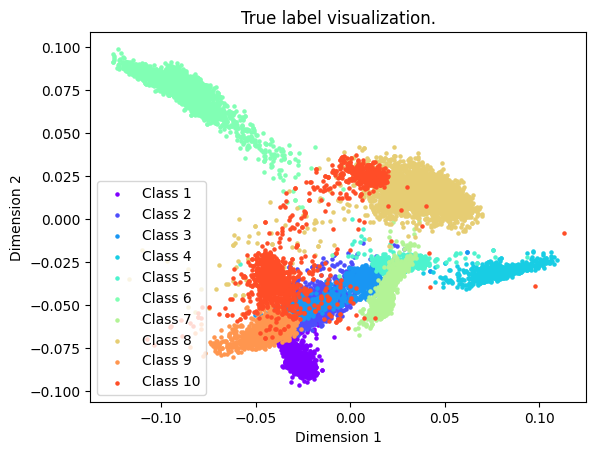}
\caption{True labels.}
\end{center}
\end{subfigure}
\begin{subfigure}[t]{.45\textwidth}
\begin{center}
\includegraphics[width=\textwidth]{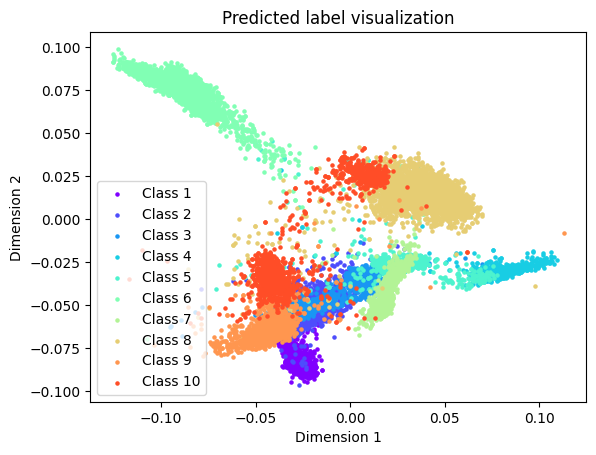}
\caption{Predicted labels.}
\end{center}
\end{subfigure}
\caption{Visual comparison of true labels (left) versus predicted labels output by the model (right) for the Salinas hyperspectral data set.}
\label{fig:sal3}
\end{center}
\end{figure}

\subsection{Indian Pines hyperspectral data}
\label{sec:indianpines}

This numerical example is done on a 5-class subset of the Indian Pines hyperspectral image data set from \cite{hsidata}. The full data set is visualized in Figure~\ref{fig:ip_gt}. Our subset of the Indian Pines data set consists of 5971 data vectors of length 200 belonging to classes number 2,6,11,14,16 of the 16 original classes. For preprocessing we normalized each vector. Then we implement MASC.

\begin{figure}[!ht]
\begin{center}
\includegraphics[width=.4\textwidth]{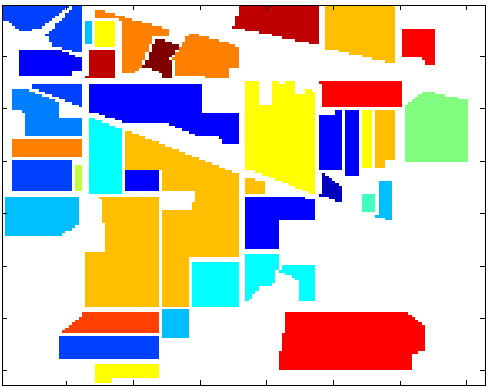}
\caption{Visualization of the full Indian Pines data set ground truth labels by geographic location. Image sourced from \cite{hsidata}.}
\label{fig:ip_gt}
\end{center}
\end{figure}

In Figure~\ref{fig:ip1} we see the results of applying MASC on the Indian Pines data in two steps. On the left we see the classification task by MASC paused at line 17 of Algorithm~\ref{alg:MASC}, before labels have been extended via the nearest neighbor portion at the end of the algorithm. On the right we see the result of the $\overline{k}$-nearest neighbors extension. In Figure~\ref{fig:ip2} we see a confusion matrix for the result shown in Figure~\ref{fig:ip1}, allowing us to see which classes were classified the most accurately versus which ones had more trouble. As we can see from the confusion matrix, the largest error comes from distinguishing class 2 from 11 and vise versa. These classes correspond to portions of the images belonging to corn-notill and soybean-mintill. Lastly, in Figure~\ref{fig:ip3} we see a side-by-side comparison of the true labels for the Indian Pines data set versus the predicted labels.

Lastly in Figure~\ref{fig:ip4}, we show how points are assigned estimated labels over several sequential iterations of the algorithm. This figure demonstrates the multiscale approach of the algorithm, wherein high density points tend to be assigned labels early on in the iterations and low density points tend to be assigned labels later. Black points in the figure correspond to those not yet given labels after the shown iterations. Some of the black points correspond to very low density points which may have been thresholded out at the beginning of the algorithm process, while others correspond to points of potential label conflict to be resolved after the iterations.

\begin{figure}[!ht]
\begin{center}
\begin{subfigure}[t]{.45\textwidth}
\begin{center}
\includegraphics[width=\textwidth]{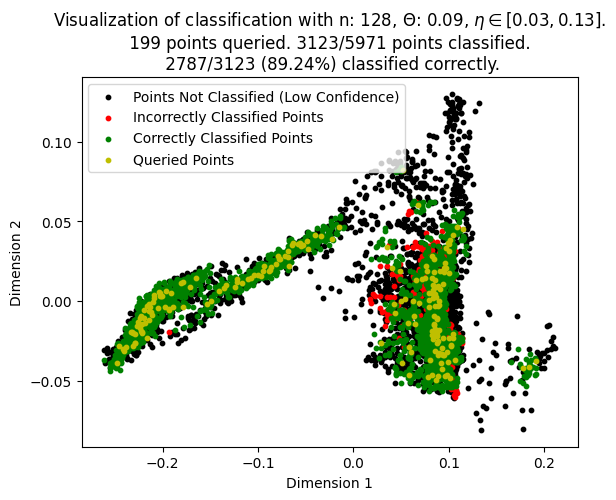}
\subcaption{Classification of certain points in MASC algorithm (before $\overline{k}$-nearest neighbors extension).}
\end{center}
\end{subfigure}
\begin{subfigure}[t]{.45\textwidth}
\begin{center}
\includegraphics[width=\textwidth]{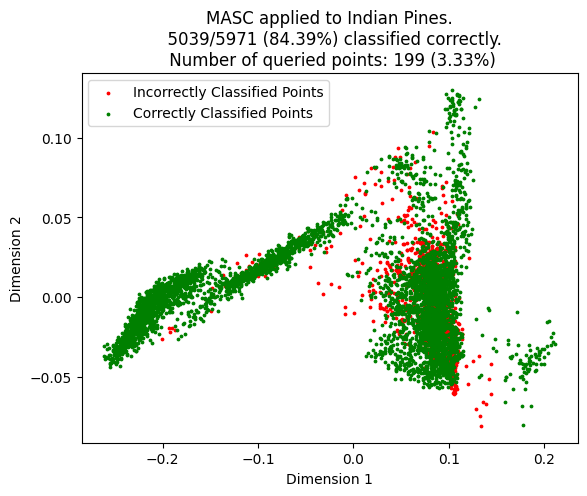}
\subcaption{Classification of remainder points using $\overline{k}$-nearest neighbors extension.}
\end{center}
\end{subfigure}
\caption{This figure illustrates the classification process undergone by MASC at two points on the Salinas hyperspectral data set. On the left, we see the classification of points before the $\overline{k}$-nearest neighbors extension. On the right, we see the result after $\overline{k}$-nearest neighbors extension. Figure dimensions are the directions associated with the largest 2 singular values of the data matrix.}
\label{fig:ip1}
\end{center}
\end{figure}

\begin{figure}[!ht]
\begin{center}
\includegraphics[width=.45\textwidth]{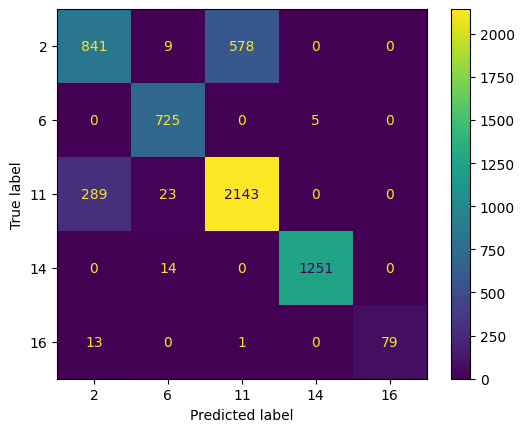}
\caption{Confusion matrix for result of MASC applied to Indian Pines.}
\label{fig:ip2}
\end{center}
\end{figure}

\begin{figure}[!ht]
\begin{center}
\begin{subfigure}[t]{.45\textwidth}
\begin{center}
\includegraphics[width=\textwidth]{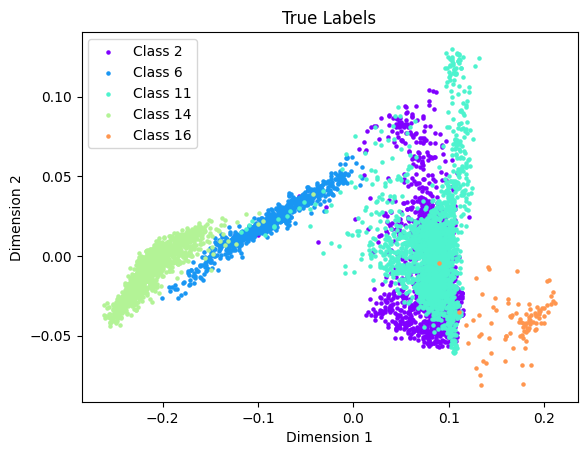}
\caption{True labels.}
\end{center}
\end{subfigure}
\begin{subfigure}[t]{.45\textwidth}
\begin{center}
\includegraphics[width=\textwidth]{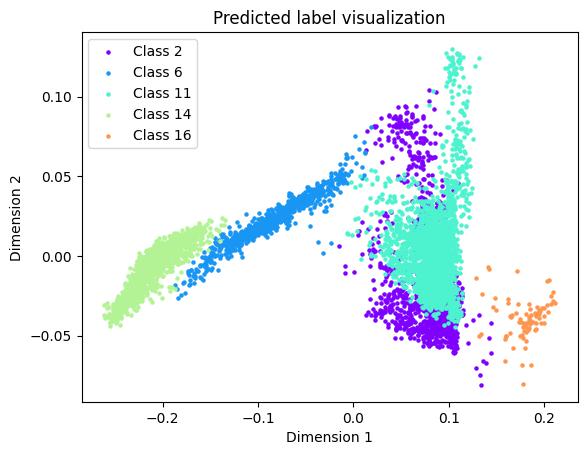}
\caption{Predicted labels.}
\end{center}
\end{subfigure}
\caption{Visual comparison of true labels (left) versus predicted labels output by the model (right) for the Indian Pines hyperspectral data set.}
\label{fig:ip3}
\end{center}
\end{figure}

\begin{figure}[!ht]
\begin{center}
\includegraphics[width=.6\textwidth]{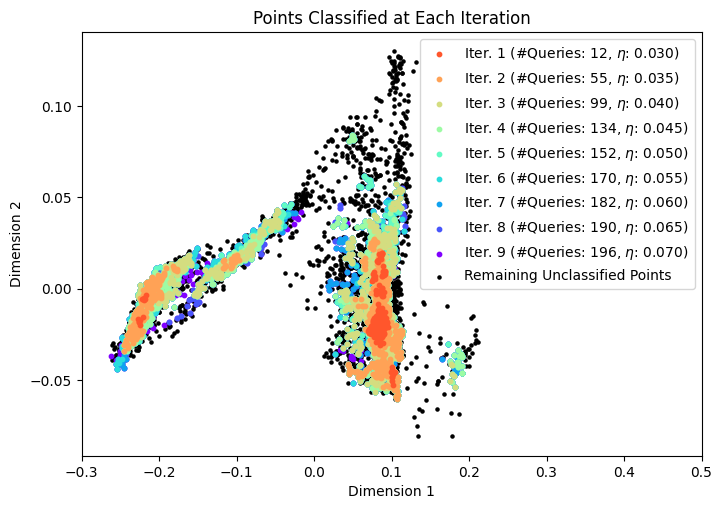}
\caption{Visualization of label assignment by iteration. Shown are the points given new estimated labels by the time MASC has reached the iteration with the shown $\eta$ value. Also depicted in the legend of the plot is the number of queries used by said iteration. The iterations are shown sequentially ($\eta$ increasing) but not consecutively (intermediate iterations not shown and new points from those iterations lumped into the shown iterations).}
\label{fig:ip4}
\end{center}
\end{figure}

\subsection{Comparison with LAND and LEND}
\label{sec:comparison}

We compare our method with the LAND \cite{murphy-land} algorithm and its boosted variant, LEND \cite{murphy-lend}. 
In Figure~\ref{fig:comparison}, we see the resulting accuracy that each algorithm achieves on both Salinas and Indian Pines for various query budgets. 
On the left, we observe that our method achieves a comparable accuracy to both LAND and LEND at around 50 queries, then gradually surpasses the accuracy of LAND as the number of queries surpasses around 200. 
On the right, our method achieves a lower accuracy for a small number of queries, but then outperforms both LAND and LEND after the budget exceeds about 60 queries.

The query budgets were decided by how many queries were used at various $\eta$ levels of while loop in the MASC Algorithm~\ref{alg:MASC}. We then forced the nearest-neighbors portion of the MASC algorithm to extend labels to the remainder of the data set at each such level, which is shown in the plot.

A separate aspect of comparison involves the run-time of both algorithms. In Table~\ref{tab:salcompare}, we see that while LEND has the highest accuracy on the Salinas data set with 261 queries, it takes significantly longer than the other two methods to attain this result. Of the three methods, MASC has the quickest run-time at 110.8s, achieving a better accuracy than LAND in less time. In Table~\ref{tab:ipcompare}, we see that MASC produces both the best result and has the fastest run-time for the case of 211 queries on the Indian Pines data set.

When deciding which algorithm to use for an active learning classification task, one has to consider the trade off between query budget/cost, computation time, and accuracy. Our initial results indicate that if the query cost is not so high compared to the run-time of the algorithm, then one may elect to use MASC with its lower run-time and simply query more points. However, if the query cost is high compared to the run-time, then one may instead elect to use an algorithm like LEND instead. The comparison results in this section are not meant to give an exhaustive depiction of which algorithm to use in any case, only illustrate that in two data sets of interest, MASC performs competitively with the existing methods in terms of either or both accuracy and run-time.

\begin{figure}[!ht]
\begin{center}
\begin{subfigure}[t]{.45\textwidth}
\begin{center}
\includegraphics[width=\textwidth]{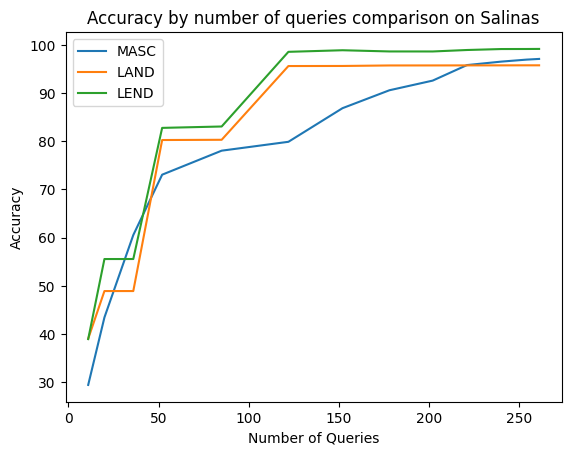}
\subcaption{Plot of accuracy vs. number of query points for Salinas.}
\end{center}
\end{subfigure}
\begin{subfigure}[t]{.45\textwidth}
\begin{center}
\includegraphics[width=\textwidth]{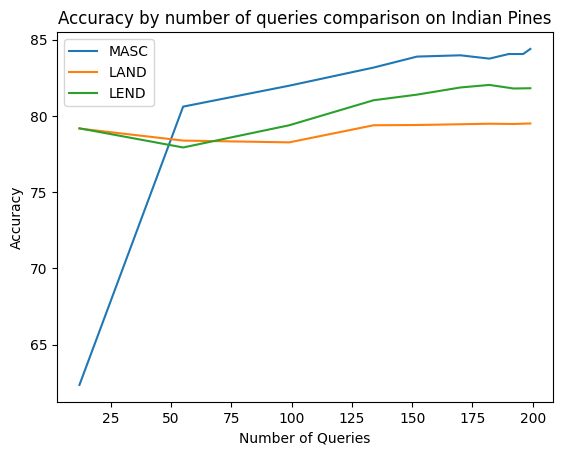}
\subcaption{Plot of accuracy vs. number of query points for Indian Pines.}
\end{center}
\end{subfigure}
\caption{Plots indicating the accuracy of MASC, LAND, and LEND for different query budgets, for both Salinas (left) and Indian Pines (right).}
\label{fig:comparison}
\end{center}
\end{figure}

\begin{table}
\begin{center}
\textbf{Comparison of MASC with LAND and LEND on Salinas subset}
\begin{tabular}{|c|c|c|c|}
\hline
Salinas & MASC & LAND & LEND\\
\hline
Accuracy & 97.1\% & 95.7\% & \textbf{99.2\%}\\
\hline
Run-time & \textbf{110.8s} & 190.0s & 669.1s\\
\hline
\end{tabular}
\end{center}
\caption{Comparison between MASC, LAND, and LEND on the Salinas data set using 261 queries.}
\label{tab:salcompare}
\end{table}

\begin{table}
\begin{center}
\textbf{Comparison of MASC with LAND and LEND on Indian Pines subset}
\begin{tabular}{|c|c|c|c|}
\hline
 & MASC & LAND & LEND\\
\hline
Accuracy & \textbf{84.4\%} & 79.5\% & 82.8\%\\
\hline
Run-time & \textbf{15.5s} & 19.6s & 97.6s\\
\hline
\end{tabular}
\end{center}
\caption{Comparison between MASC, LAND, and LEND on the Indian Pines data set using 211 queries.}
\label{tab:ipcompare}
\end{table}

\section{Proofs}
\label{sec:proofs}

In this section we give proofs for our main results in Section~\ref{sec:mainresults}. We assume that $\mathbb{X}\coloneqq \operatorname{supp}(\mu)\subseteq \mathbb{M}$ and $n\geq 1$ is given. Essential to our theory is the construction of an integral support estimator:
\be\label{eq:sigmametricdef}
\sigma_n(x)\coloneqq \int_{\mathbb{X}}\Psi_n(x,y)d\mu(y).
\ee
We also define the following two associated values which will be important:
\be\label{eq:InJndef}
I_n\coloneqq \max_{x\in\MM}|\sigma_n(x)|,\qquad J_n\coloneqq \min_{x\in\mathbb{X}}\abs{\sigma_n(x)}.
\ee
Informally, we expect the evaluation of $\sigma_n(x)/I_n$ to give us an estimation on whether or not the point $x$ belongs to $\mathbb{X}$. We encode this intuition by setting a thresholding (hyper)parameter $\theta>0$ in a support estimation set:
\be\label{eq:Sn}
\mathcal{S}_n(\theta)\coloneqq\left\{x\in\mathbb{M}:\sigma_n(x)\geq 4\theta I_n\right\}.
\ee
When the measure $\mu$ is detectable, we show that $\mathcal{S}_n(\theta)$ is an estimate to the support of $\mu$ (Theorem~\ref{thm:suppmu}). When the measure $\mu$ has a fine structure, we show that $\mathcal{S}_n(\theta)$ is partitioned exactly into $K_\eta$ separated components and each component estimates the support of the corresponding partition $\mathbf{S}_{k,\eta}$ (Theorem~\ref{thm:partmu}). These results then give us give us the ability to estimate the classification ability in the discrete setting via probabilistic results, as we investigate in Section~\ref{sec:discreteproofs}.

\subsection{Measure support estimation}
\label{sec:measureproofs}

In this section we develop key results to estimate the supports of measures defined on a continuum.
We first start with a useful lemma giving upper and lower bounds on $I_n,J_n$ respectively. Additionally for any given $x\in\mathbb{M}$, we determine a bound for the integral of $\Psi_n$ taken over points away from $x$.

\begin{lemma}\label{lem:integralest}
Let $n\geq 1$ and $S>\alpha$. Then there exist $C_1,C_2>0$ (depending on $\alpha,S,h$) such that
\be\label{eq:In}
I_n=\max_{x\in\MM}|\sigma_n(x)| \leq C_1n^{2-\alpha}
\ee
and
\be\label{eq:Jn}
J_n=\min_{x\in\mathbb{X}}\abs{\sigma_n(x)}\geq C_2n^{2-\alpha}.
\ee
\yadi{$C_1$}{Constant in the upper bound for $\sigma_n$, \eqref{eq:In}}
\yadi{$C_2$}{Constant in a lower bound for $\sigma_n$, \eqref{eq:Jn}}
In particular, $C_1\geq C_2$. For $d>0$ and any $x\in\mathbb{M}$,
\be\label{eq:Kn}
\int_{\mathbb{M}\setminus \mathbb{B}(x,d)}\Psi_n(x,y)d\mu(y)\leq C_1\frac{n^{2-\alpha}}{\max(1,(nd)^{S-\alpha})}.
\ee
\end{lemma}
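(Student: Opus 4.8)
The plan is to derive all three estimates from the pointwise localization bound \eqref{eq:metrickernloc} together with the two detectability inequalities \eqref{eq:ballmeasurecon}--\eqref{eq:ballmeasurelow}. The upper bound \eqref{eq:In} and the tail bound \eqref{eq:Kn} are essentially the same computation, namely a dyadic decomposition of the domain of integration into annuli centered at $x$, while the lower bound \eqref{eq:Jn} instead exploits the positivity of $\Psi_n$ together with a near-diagonal lower bound on $\Phi_n$.

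For the upper bound, fix $x\in\MM$ and partition the domain into the central ball $\mathbb{B}(x,1/n)$ and the dyadic annuli $A_j\coloneqq\mathbb{B}(x,2^j/n)\setminus\mathbb{B}(x,2^{j-1}/n)$ for $j\ge1$; since $\mathsf{diam}(\MM)=\pi$, only $j\lesssim\log(n)$ annuli are nonempty. On the central ball, \eqref{eq:metrickernloc} gives $\Psi_n\le cn^2$ and \eqref{eq:ballmeasurecon} gives $\mu(\mathbb{B}(x,1/n))\le\kappa_1 n^{-\alpha}$, contributing $\lesssim n^{2-\alpha}$. On $A_j$ one has $\rho(x,y)>2^{j-1}/n$, so \eqref{eq:metrickernloc} yields $\Psi_n\lesssim n^2 2^{-jS}$, while $\mu(A_j)\le\mu(\mathbb{B}(x,2^j/n))\le\kappa_1 2^{j\alpha}n^{-\alpha}$; hence the $j$-th annulus contributes $\lesssim n^{2-\alpha}2^{-j(S-\alpha)}$. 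Because $S>\alpha$, summing this geometric series over $j\ge1$ gives a bound independent of $n$, which establishes \eqref{eq:In}.

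For the lower bound, fix $x\in\XX$ and discard everything outside a small central ball using $\Psi_n\ge0$, so that $\sigma_n(x)\ge\int_{\mathbb{B}(x,\delta/n)}\Psi_n(x,y)\,d\mu(y)$. The crux is that $\Phi_n$ is large near the origin: since $\Phi_n(0)=\sum_{|k|<n}h(k/n)\gtrsim n$ (the $\sim n$ terms with $|k|\le n/2$ each equal $1$) and $|\Phi_n'(t)|\le\sum_{|k|<n}|k|\,|h(k/n)|\lesssim n^2$, the mean value theorem gives $\Phi_n(t)\ge\Phi_n(0)-\|\Phi_n'\|_\infty|t|\gtrsim n$ for $|t|\le\delta/n$ provided $\delta$ is chosen small enough. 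Consequently $\Psi_n(x,y)=\Phi_n(\rho(x,y))^2\gtrsim n^2$ whenever $\rho(x,y)\le\delta/n$. Shrinking $\delta$ further so that $\delta\le r_0$ (which forces $\delta/n\le r_0$ for all $n\ge1$), the lower detectability bound \eqref{eq:ballmeasurelow} gives $\mu(\mathbb{B}(x,\delta/n))\ge\kappa_2(\delta/n)^\alpha$, whence $\sigma_n(x)\gtrsim n^2\cdot n^{-\alpha}=n^{2-\alpha}$, which is \eqref{eq:Jn}. The inequality $C_1\ge C_2$ is then immediate from $J_n\le I_n$, since a minimum over $\XX\subseteq\MM$ cannot exceed the maximum over $\MM$.

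Finally, \eqref{eq:Kn} follows the same annular scheme, now anchored at radius $d$. If $d\le1/n$ the claimed bound reduces to $C_1 n^{2-\alpha}$ and is a consequence of \eqref{eq:In}, because $\int_{\MM\setminus\mathbb{B}(x,d)}\Psi_n\,d\mu\le\sigma_n(x)\le I_n$. If $d>1/n$, decompose $\MM\setminus\mathbb{B}(x,d)$ into annuli $\mathbb{B}(x,2^{j+1}d)\setminus\mathbb{B}(x,2^j d)$, $j\ge0$; on each one, $\Psi_n\lesssim n^2(n2^j d)^{-S}$ (legitimate since $n2^j d>1$) and $\mu\lesssim(2^j d)^\alpha$, so the $j$-th term is $\lesssim n^{2-\alpha}(nd)^{-(S-\alpha)}2^{-j(S-\alpha)}$; summing over $j\ge0$ and noting that $(nd)^{-(S-\alpha)}=1/\max(1,(nd)^{S-\alpha})$ in this regime yields \eqref{eq:Kn}. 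I expect the main obstacle to be the lower bound \eqref{eq:Jn}: unlike the other two parts, it cannot be obtained from the decay estimate \eqref{eq:metrickernloc} alone, and instead requires the near-diagonal lower estimate $\Phi_n\gtrsim n$ on a ball of radius $\sim1/n$, for which one must control the oscillation of the trigonometric polynomial $\Phi_n$ through its derivative.
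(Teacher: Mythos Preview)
Your proposal is correct and follows essentially the same plan as the paper: dyadic annular decomposition for \eqref{eq:In} and \eqref{eq:Kn}, and a near-diagonal lower bound on $\Phi_n$ combined with \eqref{eq:ballmeasurelow} for \eqref{eq:Jn}. The only notable difference is in how the near-diagonal estimate $\Phi_n(t)\gtrsim n$ for $|t|\lesssim 1/n$ is obtained: the paper invokes a Bernstein-type inequality (Lemma~\ref{lemma:bernstein}, specifically \eqref{eq:triglowbd}, packaged as Corollary~\ref{cor:kernelbern}) giving the cosine lower bound $|T(x)|\ge\|T\|\cos(2nx)$ near the maximum, whereas you obtain it by the mean value theorem from the explicit derivative bound $|\Phi_n'|\lesssim n^2$. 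Your argument is more elementary and self-contained; the paper's route is slightly sharper and reusable for other kernel estimates later on (e.g.\ \eqref{eq:kernbern}), but for this lemma either works equally well.
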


In order to prove this lemma, we first recall a consequence of the Bernstein inequality for trigonometric polynomials (\cite{paibk}, Chapter~III, Section~3, Theorems~1 and Lemma~5).
\begin{lemma}\label{lemma:bernstein}
Let $T$ be a trigonometric polynomial of order $<2n$. 
Then
\be\label{eq:trigbern}
\|T\|=\max_{x\in\TT}|T'(x)|\le 2n\max_{x\in\TT}|T(x)|.
\ee
Moreover, if $|T(x_0)|=\|T\|$ then
\be\label{eq:triglowbd}
|T(x)|\ge \|T\|\cos(2nx), \qquad |(x-x_0) \mod 2\pi|\le \pi/(2n).
\ee
\end{lemma}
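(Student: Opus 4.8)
The plan is to derive both assertions from a single sharp estimate of Szeg\H{o} type. Write $L=\max_{t\in\TT}|T(t)|$ and let $N$ be the order of $T$, so $N<2n$. The core step is to prove that for every \emph{real} trigonometric polynomial $T$ of order at most $N$,
\be\label{eq:szego}
T'(x)^2+N^2T(x)^2\le N^2L^2,\qquad x\in\TT.
\ee
Granting \eqref{eq:szego}, the Bernstein bound \eqref{eq:trigbern} is immediate: discarding the nonnegative term $N^2T(x)^2$ gives $|T'(x)|\le NL\le 2nL$. The complex-valued case reduces to the real one by projecting at the point of interest: one applies the real inequality to $u=\mathrm{Re}(e^{-i\gamma}T)$, where $\gamma=\arg T'(x)$, noting that $u$ has order $\le N$, that $u'(x)=|T'(x)|$, and that $\max_t|u(t)|\le L$.

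For \eqref{eq:szego} I would compare $T$ with a pure sinusoid and then count zeros, which I expect to be the main obstacle because of the bookkeeping of multiplicities. Fix $x$ and set $U(t)=T(x)\cos(N(t-x))+\tfrac{T'(x)}{N}\sin(N(t-x))$, a trigonometric polynomial of order $N$ that agrees with $T$ in both value and derivative at $x$ and has amplitude $R=\sqrt{T(x)^2+(T'(x)/N)^2}$. Suppose, for contradiction, that $R>L$. At the $N$ maxima of $U$ one has $U=R>L\ge T$, and at the $N$ minima $U=-R<-L\le T$; hence $W:=U-T$ changes sign across each of the $2N$ arcs between consecutive extrema of $U$, producing at least $2N$ zeros of $W$ on $\TT$. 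Since $W(x)=W'(x)=0$ as well, a short case analysis (the point $x$ cannot be an extremum of $U$, for that would force $|T(x)|=R>L$) shows that $W$ has strictly more than $2N$ zeros counted with multiplicity. As $W$ has order $\le N$, this forces $W\equiv0$, whence $\max_t|T(t)|=R>L$, a contradiction. Therefore $R\le L$, which is precisely \eqref{eq:szego}.

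Finally, \eqref{eq:triglowbd} follows from \eqref{eq:szego} by a differential-inequality argument. Reducing again to the real case, I may assume $T$ is real with $T(x_0)=L=\max_t|T(t)|$ (for complex $T$, replace $T$ by $\mathrm{Re}(e^{-i\psi}T)$ with $\psi=\arg T(x_0)$, which only decreases the modulus). Put $\phi(x)=\arccos(T(x)/L)$, so that $\phi(x_0)=0$; wherever $|T|<L$,
\be\label{eq:phideriv}
\phi'(x)=-\frac{T'(x)}{\sqrt{L^2-T(x)^2}},
\ee
and \eqref{eq:szego} rewritten as $|T'(x)|\le N\sqrt{L^2-T(x)^2}$ gives $|\phi'(x)|\le N$. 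Integrating from $x_0$ yields $\arccos(T(x)/L)\le N|x-x_0|$, and applying the decreasing function $\cos$ on $[0,\pi]$ gives $T(x)\ge L\cos(N(x-x_0))$ for $|x-x_0|\le \pi/N$. Since $N<2n$ and $\cos$ is decreasing, this implies $|T(x)|\ge L\cos(2n(x-x_0))$ on the smaller range $|x-x_0|\le\pi/(2n)$, which is \eqref{eq:triglowbd}. The only points needing care here are the degree/constant bookkeeping (order $<2n$ versus the stated factor $2n$, with the cosine argument measured as the distance from the maximizer $x_0$) and the reduction of the complex modulus bound to the real inequality.
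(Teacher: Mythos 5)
Your proof is correct, but there is no in-paper argument to compare it with: the paper does not prove Lemma~\ref{lemma:bernstein} at all, it simply recalls it as a known result from \cite{paibk} (Chapter~III, Section~3, Theorem~1 and Lemma~5). What you have written is a self-contained reconstruction of the classical argument that this citation points to: the Bernstein--Szeg\H{o} inequality $T'(x)^2+N^2T(x)^2\le N^2L^2$, with $L=\max_{t\in\TT}|T(t)|$, for real trigonometric polynomials of order $\le N$, proved by comparing $T$ with the osculating sinusoid and counting zeros (a nontrivial trigonometric polynomial of order $\le N$ has at most $2N$ zeros per period, counted with multiplicity), followed by the complex-to-real reductions and integration of the differential inequality for $\arccos(T/L)$. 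Both conclusions follow, and along the way you correctly resolve the two misprints in the statement: in \eqref{eq:trigbern} the left-hand side should read $\max_{x\in\TT}|T'(x)|$ rather than $\|T\|$, and in \eqref{eq:triglowbd} the cosine argument should be $2n(x-x_0)$, exactly as you read it. Two details deserve to be spelled out in a final write-up, though neither is a gap in the approach. First, in the zero count, the arc between consecutive extrema of the sinusoid that contains $x$ must contribute at least three zeros with multiplicity: either the zero at $x$ has even multiplicity, so the difference does not change sign there and the forced sign change over that arc yields an additional zero, or the zero at $x$ has odd multiplicity, hence multiplicity $\ge 3$; this is precisely what pushes the total to $2N+2>2N$ and forces the difference to vanish identically, contradicting $R>L$. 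Second, $\phi=\arccos(T/L)$ fails to be differentiable at points where $|T|=L$, so ``integrating from $x_0$'' requires the observation that this exceptional set is finite when $T$ is nonconstant (the constant case being trivial), and that a continuous function whose derivative is bounded by $N$ off a finite set is $N$-Lipschitz; that Lipschitz estimate is the inequality you actually integrate.
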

The following corollary gives a consequence of this lemma for the kernel $\Psi_n$, which will be used often in this paper.
\begin{corollary}\label{cor:kernelbern}
Let $x, y, z, w\in \MM$, $n\ge 1$. Then
there are constants $c, C_0$ such that
\be\label{eq:kernmax}
cn^2\le \Psi_n(x,x)\le C_0 n^2.
\ee
\yadi{$C_0$}{Upper bound for $\Psi_n(x,x)/n^2$, \eqref{eq:kernmax}}
Moreover,
\be\label{eq:kernupbd}
\Psi_n(x,y)\le \Psi_n(x,x)\sim n^2,
\ee
\be\label{eq:kernbern}
|\Psi_n(x,y)-\Psi_n(z,w)|\ls n^3\left\{\rho(x,z)+\rho(y,w)\right\}.
\ee
and
\be\label{eq:kernlowbd}
|\Psi_n(x,y)| \gs n^2, \qquad \mbox{ for } \rho(x,y)\le \pi/(6n).
\ee
\end{corollary}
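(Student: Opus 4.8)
The plan is to reduce each assertion to a one-variable statement about $\Phi_n$ or its square $G_n\coloneqq\Phi_n^2$ on $\TT$ and then invoke Lemma~\ref{lemma:bernstein}. The basic observation is that the coefficients $h(k/n)$ of $\Phi_n$ are nonnegative (as $0\le h\le 1$), so $|\Phi_n(t)|\le\sum_{|k|<n}h(k/n)=\Phi_n(0)$ for every $t$, with equality at $t=0$; hence $\max_t|\Phi_n(t)|=\Phi_n(0)$ and $t=0$ is a global maximizer of $|\Phi_n|$. Since $h=1$ on $[-1/2,1/2]$ forces at least $\sim n$ of the $2n-1$ summands to equal $1$, while each summand is at most $1$, we also get $\Phi_n(0)\sim n$ (in particular $\Phi_n(0)>0$). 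Because $\Psi_n(x,y)=\Phi_n(\rho(x,y))^2$ and $\rho(x,x)=0$, this immediately yields $\Psi_n(x,x)=\Phi_n(0)^2\sim n^2$, which is \eqref{eq:kernmax}, together with $\Psi_n(x,y)=\Phi_n(\rho(x,y))^2\le\Phi_n(0)^2=\Psi_n(x,x)$, which is \eqref{eq:kernupbd}.

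For the Lipschitz estimate \eqref{eq:kernbern} I would work with $G_n=\Phi_n^2$, a (real) trigonometric polynomial of order $\le 2(n-1)<2n$. Lemma~\ref{lemma:bernstein} then gives $\max_t|G_n'(t)|\le 2n\max_t|G_n(t)|=2n\,\Phi_n(0)^2\lesssim n^3$, so $G_n$ is Lipschitz on $\TT$ with constant $\lesssim n^3$. Writing $\Psi_n(x,y)-\Psi_n(z,w)=G_n(\rho(x,y))-G_n(\rho(z,w))$ and noting that $\rho(x,y),\rho(z,w)\in[0,\pi]$, so that their geodesic distance on $\TT$ equals $|\rho(x,y)-\rho(z,w)|$, the mean-value bound yields $|\Psi_n(x,y)-\Psi_n(z,w)|\lesssim n^3|\rho(x,y)-\rho(z,w)|$. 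Two applications of the triangle inequality in $\MM$ give $|\rho(x,y)-\rho(z,w)|\le\rho(x,z)+\rho(y,w)$, which completes \eqref{eq:kernbern}.

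For the lower bound \eqref{eq:kernlowbd} I would apply the second part of Lemma~\ref{lemma:bernstein} to $\Phi_n$ itself (of order $<2n$) at its maximizer $x_0=0$: for $|t|\le\pi/(2n)$ we have $|\Phi_n(t)|\ge\Phi_n(0)\cos(2nt)$. When $\rho(x,y)=t\le\pi/(6n)\le\pi/(2n)$ we have $\cos(2nt)\ge\cos(\pi/3)=1/2$, so $|\Phi_n(t)|\ge\Phi_n(0)/2\gtrsim n$ and hence $\Psi_n(x,y)=\Phi_n(t)^2\gtrsim n^2$. None of these steps is a genuine obstacle; the only points requiring care are bookkeeping, namely verifying that $\deg G_n<2n$ so that Lemma~\ref{lemma:bernstein} applies, and observing that the normalization $\mathsf{diam}(\MM)=\pi$ keeps every distance fed into $\Phi_n$ or $G_n$ inside $[0,\pi]$, so that the mod-$2\pi$ distance on $\TT$ coincides with the ordinary absolute difference and no wraparound correction is needed.
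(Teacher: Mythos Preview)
Your proposal is correct and follows essentially the same route as the paper's proof: both compute $\Psi_n(x,x)=\Phi_n(0)^2\sim n^2$ from the filter properties, deduce \eqref{eq:kernupbd} from $|\Phi_n(t)|\le\Phi_n(0)$, obtain \eqref{eq:kernbern} from the derivative bound $\|(\Phi_n^2)'\|\lesssim n^3$ (which you justify explicitly via Lemma~\ref{lemma:bernstein}) together with the triangle inequality in $\MM$, and derive \eqref{eq:kernlowbd} from \eqref{eq:triglowbd} at the maximizer $x_0=0$. The only cosmetic differences are that the paper inserts the intermediate argument $\rho(z,y)$ in the Lipschitz step whereas you bound $|\rho(x,y)-\rho(z,w)|\le\rho(x,z)+\rho(y,w)$ directly, and that your assumption $0\le h\le 1$ (used to get $|\Phi_n(t)|\le\Phi_n(0)$) is implicit rather than stated in the paper.
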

\begin{proof} \ 
The estimate \eqref{eq:kernmax} follows from the fact that
$$
\Psi_n(x,x)=\Phi_n(0)^2 =\left(\sum_\ell h(\ell/n)\right)^2 \sim n^2,
$$ 
where the last estimate is easy to see using Riemann sums for $\int h(t)dt$.
We observe that $\Phi_n^2$ is a trigonometric polynomial, and it is clear that 
$$
|\Phi_n(t)|^2\le \Phi_n(0)^2.
$$
Consequently, \eqref{eq:kernupbd} follows from the definition of $\Psi_n$. 
The estimate \eqref{eq:kernbern} is easy to deduce from the fact that $\|(\Phi_n^2)'\|\ls n^3$, so that
$$
\ba
|\Psi_n(x,y)-\Psi_n(z,w)|&\le |\Phi_n^2(\rho(x,y))-\Phi_n^2(\rho(z,w))|\le |\Phi_n^2(\rho(x,y))-\Phi_n^2(\rho(z,y))|+|\Phi_n^2(\rho(z,y))-\Phi_n^2(\rho(z,w))|\\
&\ls n^3\left\{|\rho(x,y)-\rho(z,y)|+|\rho(z,y)-
\rho(z,w)|\right\}\ls n^3\left\{\rho(x,z)+\rho(y,w)\right\}.
\ea
$$
The estimate \eqref{eq:kernlowbd} follows from \eqref{eq:triglowbd} and the definition of $\Psi_n$.
\end{proof}
 
\noindent\textit{Proof of Lemma~\ref{lem:integralest}.} We proceed by examining concentric annuli. Let $x\in \mathbb{M}$ be fixed, and set $A_0=\mathbb{B}(x,d)$ and $A_k=\mathbb{B}(x,2^kd)\setminus \mathbb{B}(x,2^{k-1}d)$ for every $k\geq 1$. First suppose that $nd\geq 1$. Then by \eqref{eq:metrickernloc}~and~\eqref{eq:ballmeasurecon}, we deduce
\be\label{eq:annuli}
\ba
\int_{\mathbb{M}\setminus \mathbb{B}(x,d)}\Psi_n(x,y)d\mu(y)=&\sum_{k=1}^\infty\int_{A_k}\Psi_n(x,y)d\mu(y)
\lesssim\sum_{k=1}^\infty \frac{\mu(A_k)n^2}{\max(1,2^{k-1}dn)^S}\\
&\lesssim\sum_{k=1}^\infty \frac{2^{k\alpha} d^\alpha n^2}{2^{S(k-1)}(dn)^S}
\lesssim n^{2-\alpha}(nd)^{\alpha-S}\sum_{k=1}^\infty 2^{k(\alpha-S)}
\lesssim  n^{2-\alpha}(nd)^{\alpha-S}.
\ea
\ee
If $nd=1$, we observe
\be\label{eq:annuli2}
\int_{A_0}\Phi_{n}(\rho(x,y))^2d\mu(y)\lesssim \mu(A_0)n^2\lesssim d^{\alpha}n^2=n^{2-\alpha}.
\ee
Combining \eqref{eq:annuli}~and~\eqref{eq:annuli2} when $nd=1$ yields \eqref{eq:In}. When $dn\leq 1$, we see 
\be
\int_{\mathbb{M}\setminus\mathbb{B}(x,d)}\Psi_n(x,y)d\mu(y)\leq I_n\lesssim n^{2-\alpha}.
\ee
Together with \eqref{eq:annuli}, this completes the proof of \eqref{eq:Kn}. 
There is no loss of generality in using the same constant $C_1$ in both of these estimates.
We see, in view of \eqref{eq:kernupbd}, \eqref{eq:kernlowbd}, and the detectability of $\mu$, that if $x\in\mathbb{X}$ it follows that
\be
\int_{\mathbb{X}} \Psi_n(x,y)d\mu(y)\gtrsim \int_{\mathbb{B}(x,\pi/(6n))} n^2 d\mu(y)\gtrsim n^{2-\alpha},
\ee
demonstrating \eqref{eq:Jn} and completing the proof.
\qed

\begin{theorem}\label{thm:suppmu}
Let $\mu$ be detectable and $S>\alpha$. 
If $\theta\leq C_2/(4C_1)$, then by setting
\be\label{eq:dtheta}
d(\theta)=\left(\frac{C_1}{C_2\theta}\right)^{1/(S-\alpha)},
\ee
it follows that (cf. \eqref{eq:Sn})
\be\label{eq:thm1inc}
\mathbb{X}\subseteq \mathcal{S}_{n}(\theta)\subseteq \mathbb{B}(\mathbb{X},d(\theta)/n).
\ee
\end{theorem}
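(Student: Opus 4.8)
The plan is to establish the two inclusions in \eqref{eq:thm1inc} separately, in each case drawing on the three estimates packaged in Lemma~\ref{lem:integralest}: the uniform upper bound $I_n \le C_1 n^{2-\alpha}$ from \eqref{eq:In}, the support lower bound $J_n \ge C_2 n^{2-\alpha}$ from \eqref{eq:Jn}, and the tail estimate \eqref{eq:Kn}. Since all the genuine analysis is already concentrated in that lemma, the theorem should reduce to a careful comparison of constants against the threshold $4\theta I_n$ appearing in the definition \eqref{eq:Sn}.

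For the left inclusion $\mathbb{X} \subseteq \mathcal{S}_n(\theta)$, I would fix $x \in \mathbb{X}$ and bound $\sigma_n(x) \ge J_n \ge C_2 n^{2-\alpha}$. On the other side the threshold satisfies $4\theta I_n \le 4\theta C_1 n^{2-\alpha}$, and the hypothesis $\theta \le C_2/(4C_1)$ gives $4\theta C_1 \le C_2$, so $4\theta I_n \le C_2 n^{2-\alpha} \le \sigma_n(x)$. This is precisely the membership condition, so the inclusion is immediate once the constants are aligned.

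For the right inclusion $\mathcal{S}_n(\theta) \subseteq \mathbb{B}(\mathbb{X}, d(\theta)/n)$, I would argue by contrapositive. If $\dist(x,\mathbb{X}) > d(\theta)/n$, then every $y \in \mathbb{X}$ satisfies $\rho(x,y) > d(\theta)/n$, so $\mathbb{X} \subseteq \mathbb{M} \setminus \mathbb{B}(x, d(\theta)/n)$ and, by nonnegativity of $\Psi_n$, the tail bound \eqref{eq:Kn} with $d = d(\theta)/n$ gives $\sigma_n(x) \le C_1 n^{2-\alpha}/\max(1, d(\theta)^{S-\alpha})$. The role of the specific choice \eqref{eq:dtheta} is that $d(\theta)^{S-\alpha} = C_1/(C_2\theta)$, which collapses the bound to $\sigma_n(x) \le C_2\theta n^{2-\alpha}$. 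Since $I_n \ge J_n \ge C_2 n^{2-\alpha}$, this is strictly below $4\theta I_n$, so $x \notin \mathcal{S}_n(\theta)$, which is the desired contrapositive.

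The single point needing care is that \eqref{eq:Kn} only yields genuine power decay once $nd \ge 1$, i.e. once $d(\theta) \ge 1$. I would verify this directly from the hypothesis: $\theta \le C_2/(4C_1)$ forces $C_1/(C_2\theta) \ge 4$, hence $d(\theta) = (C_1/(C_2\theta))^{1/(S-\alpha)} > 1$, so the $\max$ in \eqref{eq:Kn} indeed selects the power term. I expect this constant-chasing, together with the (harmless, since the inequality is strict) passage from $\dist(x,\mathbb{X}) > d(\theta)/n$ to $\mathbb{X} \cap \mathbb{B}(x, d(\theta)/n) = \emptyset$, to be the only bookkeeping obstacle; there is no deeper difficulty, as the quantitative content is entirely inherited from Lemma~\ref{lem:integralest}.
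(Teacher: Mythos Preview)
Your proposal is correct and follows essentially the same approach as the paper's proof: both inclusions are obtained directly from Lemma~\ref{lem:integralest}, with the left inclusion coming from comparing $J_n$ against $4\theta I_n$ and the right inclusion from the tail bound \eqref{eq:Kn} after checking $d(\theta)>1$. Your treatment is in fact slightly more explicit about the $\max$ in \eqref{eq:Kn} and the verification $d(\theta)>1$ than the paper's own proof.
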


\begin{proof}
From \eqref{eq:In}~and~\eqref{eq:Jn}, we see that for any $x\in\mathbb{X}$,
\be\label{eq:Jnbound}
\sigma_n(x)\geq J_n \geq \frac{C_2I_n}{C_1}.
\ee
With our assumption of $\theta\leq C_2/(4C_1)$, this proves the inclusion
\be
\mathbb{X}\subseteq \mathcal{S}_n(\theta).
\ee
Note that $C_1^2/C_2^2\ge 1 >1/4$, so that $\theta\leq C_2/(4C_1)< C_1/C_2$, and hence, $d(\theta)>1$. Then, for any $x\in \mathbb{M}$ such that $\dist(x,\mathbb{X})\geq d(\theta)/n$, we have by \eqref{eq:Kn} that
\be
\sigma_n(x)\leq \int_{\mathbb{M}\setminus \mathbb{B}(x,d(\theta)/n)}\Psi_n(x,y)d\mu(y)\leq C_1n^{2-\alpha}/d(\theta)^{S-\alpha}\leq \theta C_2n^{2-\alpha}\leq \theta I_n.
\ee
This demonstrates the inclusion
\be
\mathcal{S}_n(\theta)\subseteq \mathbb{B}(\mathbb{X},d(\theta)/n),
\ee
completing the proof.
\end{proof}

\begin{theorem}\label{thm:partmu}
Assume the setup of Theorem~\ref{thm:suppmu} and suppose $\mu$ has a fine structure. Define
\be\label{eq:Skn}
\mathcal{S}_{k,\eta,n}(\theta)\coloneqq \mathcal{S}_n(\theta)\cap \mathbb{B}(\mathbf{S}_{k,\eta},d(\theta)/n).
\ee
Let $n\geq 2d(\theta)/\eta$, $\mu(\mathbf{S}_{K_{\eta}+1,\eta})\leq \frac{C_2}{C_0}\theta n^{-\alpha}$, and $j,k=1,\dots,K_\eta$ with $j\neq k$.  Then 
\be\label{eq:Snpartition}
\mathcal{S}_n(\theta)=\bigcup_{k=1}^{K_\eta} \mathcal{S}_{k,\eta,n}(\theta)
\ee
and,
\be\label{eq:minsep2}
\operatorname{dist}(\mathcal{S}_{j,\eta,n}(\theta),\mathcal{S}_{k,\eta,n}(\theta))\geq \eta.
\ee
Furthermore,
\be\label{eq:thm2inc}
\mathbb{X}\cap \mathbb{B}(\mathbf{S}_{k,\eta},d(\theta)/n)\subseteq \mathcal{S}_{k,\eta,n}(\theta)\subseteq \mathbb{B}(\mathbf{S}_{k,\eta},d(\theta)/n).
\ee
\end{theorem}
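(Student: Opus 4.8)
The plan is to derive all four conclusions from the support-estimation Theorem~\ref{thm:suppmu} together with the integral bounds of Lemma~\ref{lem:integralest}, exploiting the fact that the threshold in $\mathcal{S}_n(\theta)$ carries a factor of $4$ that leaves room for two separate error contributions. I would begin with the separation \eqref{eq:minsep2}, which is the easiest piece and needs no information about $\sigma_n$. Since by definition $\mathcal{S}_{j,\eta,n}(\theta)\subseteq \mathbb{B}(\mathbf{S}_{j,\eta},d(\theta)/n)$ and likewise for $k$, any $x,y$ in these two sets satisfy $\rho(x,y)\ge \operatorname{dist}(\mathbf{S}_{j,\eta},\mathbf{S}_{k,\eta})-2d(\theta)/n$ by the triangle inequality; the cluster minimal separation \eqref{eq:minsep} gives $\operatorname{dist}(\mathbf{S}_{j,\eta},\mathbf{S}_{k,\eta})\ge 2\eta$, and the hypothesis $n\ge 2d(\theta)/\eta$ forces $2d(\theta)/n\le \eta$, so $\rho(x,y)\ge \eta$. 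This simultaneously yields \eqref{eq:minsep2} and the disjointness needed to call the cover a partition.

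The crux is the covering inclusion $\mathcal{S}_n(\theta)\subseteq \bigcup_{k=1}^{K_\eta}\mathcal{S}_{k,\eta,n}(\theta)$, for which I would argue by contradiction. Suppose $x\in\mathcal{S}_n(\theta)$ but $\operatorname{dist}(x,\mathbf{S}_{k,\eta})>d(\theta)/n$ for every $k=1,\dots,K_\eta$. Splitting $\sigma_n(x)=\int_\XX\Psi_n(x,y)d\mu(y)$ over the fine-structure partition $\XX=\bigcup_{k=1}^{K_\eta+1}\mathbf{S}_{k,\eta}$, the combined contribution of the $K_\eta$ principal pieces is controlled because their union lies entirely outside $\mathbb{B}(x,d(\theta)/n)$; applying the tail bound \eqref{eq:Kn} together with the identity $d(\theta)^{S-\alpha}=C_1/(C_2\theta)$ from \eqref{eq:dtheta} bounds this sum by $C_2\theta n^{2-\alpha}\le \theta I_n$, exactly as in the proof of Theorem~\ref{thm:suppmu}. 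The leftover piece $\mathbf{S}_{K_\eta+1,\eta}$ is handled by the crude pointwise bound $\Psi_n(x,y)\le \Psi_n(x,x)\le C_0 n^2$ from \eqref{eq:kernupbd}--\eqref{eq:kernmax} together with the hypothesis $\mu(\mathbf{S}_{K_\eta+1,\eta})\le (C_2/C_0)\theta n^{-\alpha}$, which again yields a bound of $C_2\theta n^{2-\alpha}\le \theta I_n$. Summing, $\sigma_n(x)\le 2\theta I_n<4\theta I_n$, contradicting $x\in\mathcal{S}_n(\theta)$; hence $x$ lies in some $\mathbb{B}(\mathbf{S}_{k,\eta},d(\theta)/n)$ and therefore in $\mathcal{S}_{k,\eta,n}(\theta)$. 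The reverse inclusion $\bigcup_k \mathcal{S}_{k,\eta,n}(\theta)\subseteq\mathcal{S}_n(\theta)$ is immediate from the definition, so \eqref{eq:Snpartition} follows.

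Finally, for the sandwich \eqref{eq:thm2inc} the right-hand inclusion is definitional, since $\mathcal{S}_{k,\eta,n}(\theta)=\mathcal{S}_n(\theta)\cap \mathbb{B}(\mathbf{S}_{k,\eta},d(\theta)/n)$; the left-hand inclusion follows because any $x\in\XX$ already lies in $\mathcal{S}_n(\theta)$ by Theorem~\ref{thm:suppmu}, so intersecting with $\mathbb{B}(\mathbf{S}_{k,\eta},d(\theta)/n)$ reproduces exactly $\mathcal{S}_{k,\eta,n}(\theta)$. I expect the main obstacle to be the bookkeeping in the contradiction step: one must confirm that the two error contributions each land below $\theta I_n$ using the precise calibration of $d(\theta)$ and the measure hypothesis, and it is essential that the \emph{union} of principal clusters---rather than each cluster separately---be compared against the single tail integral \eqref{eq:Kn}, since summing $K_\eta$ separate tail bounds would introduce an uncontrolled factor of $K_\eta$. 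The factor of $4$ in the definition \eqref{eq:Sn} is precisely what absorbs the two contributions of size $\theta I_n$.
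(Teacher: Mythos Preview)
Your proposal is correct and mirrors the paper's proof almost exactly: the paper also obtains \eqref{eq:thm2inc} directly from Theorem~\ref{thm:suppmu} and the definition, deduces the separation \eqref{eq:minsep2} from $2d(\theta)/n\le\eta$ and the cluster minimal separation, and proves the nontrivial inclusion in \eqref{eq:Snpartition} by bounding $\sigma_n(x)$ for $x\notin\mathbb{B}(\bigcup_{k\le K_\eta}\mathbf{S}_{k,\eta},d(\theta)/n)$ via the same two-term split (tail bound \eqref{eq:Kn} on the union of principal clusters plus the crude $\Psi_n\le C_0 n^2$ bound on $\mathbf{S}_{K_\eta+1,\eta}$), each piece giving $C_2\theta n^{2-\alpha}\le\theta I_n$ and summing to $2\theta I_n<4\theta I_n$. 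Your remark that the union of principal clusters must be treated as a single set in \eqref{eq:Kn}---rather than summing $K_\eta$ separate tail estimates---is exactly right and is implicit in the paper's argument.
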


\begin{proof}
The first inclusion in \eqref{eq:thm2inc} is satisfied from \eqref{eq:thm1inc} and the second is satisfied by the definition of $\mathcal{S}_{k,\eta,n}$. In view of the assumption that $\eta\geq 2d(\theta)/n$ and Definition~\ref{def:finestructure}, we see that 
\be
\dist(\mathbb{B}(\mathbf{S}_{j,\eta},d(\theta)/n),\mathbb{B}(\mathbf{S}_{k,\eta},d(\theta)/n))\geq \eta,
\ee
for any $j\neq k$. Since $\mathcal{S}_{k,\eta,n}(\theta)\subseteq \mathbb{B}(\mathbf{S}_{k,\eta},d(\theta)/n)$, it follows that the separation condition \eqref{eq:minsep2} must also be satisfied.
Now it remains to show \eqref{eq:Snpartition}.
Let us define, in this proof only,
\be
\mathbf{S}=\bigcup_{k=1}^{K_\eta}\mathbf{S}_{k,\eta}.
\ee
It is clear from \eqref{eq:Skn} that $\bigcup_{k=1}^{K_\eta} \mathcal{S}_{k,\eta,n}(\theta)\subseteq\mathcal{S}_n(\theta)$. 
We note that for any $x\in\mathbb{M}\setminus \mathbb{B}(\mathbf{S},d(\theta)/n)$, we have $\dist(x,\mathbf{S})\geq d(\theta)/n$ and as a result
\be
\ba
\sigma_n(x)=& \int_{\mathbf{S}_{K_\eta+1}}\Psi_n(x,y)d\mu(y)+\int_{\mathbf{S}}\Psi_n(x,y)d\mu(y)\\
\leq& C_0n^2\mu(\mathbf{S}_{K_{\eta}+1})+\int_{\mathbf{S}\setminus \mathbb{B}(x,d(\theta)/n)}\Psi_n(x,y)d\mu(y)&\text{(By \eqref{eq:kernupbd})}\\
\leq&C_2n^{2-\alpha}\theta+C_1n^{2-\alpha}d(\theta)^{\alpha-S}&\text{(By the assumption on $\mu(\mathbf{S}_{K_\eta+1})$ and \eqref{eq:Kn})}\\
\leq& 2C_2n^{2-\alpha}\theta \le 2J_n\theta &\text{(By \eqref{eq:dtheta})}\\
\leq&2\theta I_n.
\ea
\ee
Thus, $x\notin \mathcal{S}_n(\theta)$ and, equivalently, $\mathcal{S}_n(\theta)\subseteq \mathbb{B}(\mathbf{S},d(\theta)/n)$. Therefore we have shown $\mathcal{S}_n(\theta)=\mathbb{B}(\mathbf{S},d(\theta)/n)$, completing the proof.
\end{proof}

\subsection{Discretization}
\label{sec:discreteproofs}

In this section we relate the continuous support estimator and estimation sets to the discrete cases based on randomly sampled data. The conclusion of this section will be the proofs to the theorems from Section~\ref{sec:mainresults}. To aid us in this process we first state a consequence of the Bernstein Concentration inequality as a proposition.

\begin{proposition}\label{prop:concentration}
Let $X_1,\cdots, X_M$ be independent real valued random variables such that for each $j=1,\cdots,M$, $|X_j|\le R$, and $\mathbb{E}(X_j^2)\le V$. Then for any $t>0$,
\be\label{bernstein_concentration}
\mathsf{Prob}\left( \left|\frac{1}{M}\sum_{j=1}^M (X_j-\mathbb{E}(X_j))\right| \ge Vt/R\right) \le 2\exp\left(-\frac{MVt^2}{2R^2(1+t)}\right).
\ee
\end{proposition}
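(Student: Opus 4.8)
The plan is to obtain the stated inequality as a direct specialization of the classical two-sided Bernstein concentration inequality for bounded independent summands, after which only a short comparison of denominators remains. First I would center the variables by setting $Y_j=X_j-\mathbb{E}(X_j)$, so that each $Y_j$ has mean zero. Since $\abs{X_j}\le R$ forces $\abs{\mathbb{E}(X_j)}\le R$, we get $\abs{Y_j}\le 2R$ almost surely; moreover $\mathbb{E}(Y_j^2)=\mathbb{E}(X_j^2)-(\mathbb{E}(X_j))^2\le\mathbb{E}(X_j^2)\le V$, so that $\sum_{j=1}^M\mathbb{E}(Y_j^2)\le MV$. These are exactly the hypotheses needed to feed into Bernstein's inequality, and the event in the claim is unchanged under centering since $\frac1M\sum_{j=1}^M(X_j-\mathbb{E}(X_j))=\frac1M\sum_{j=1}^M Y_j$.

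Next I would invoke the classical Bernstein bound: for independent mean-zero $Y_j$ with $\abs{Y_j}\le b$ and $\sum_{j}\mathbb{E}(Y_j^2)\le\Sigma^2$, one has for every $s>0$
$$
\mathsf{Prob}\left(\abs{\sum_{j=1}^M Y_j}\ge s\right)\le 2\exp\left(-\frac{s^2/2}{\Sigma^2+bs/3}\right).
$$
This may simply be cited, since the proposition is explicitly framed as a consequence of Bernstein's inequality; alternatively it follows from the standard exponential-moment argument, namely the bound $\mathbb{E}(e^{\lambda Y_j})\le\exp(\mathbb{E}(Y_j^2)(e^{\lambda b}-1-\lambda b)/b^2)$, multiplied over $j$ by independence, combined with Markov's inequality applied to $e^{\lambda\sum_j Y_j}$, optimized in $\lambda>0$, and the elementary estimate $(1+u)\log(1+u)-u\ge \tfrac{u^2/2}{1+u/3}$, with the two-sided form obtained by symmetry.

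Finally I would substitute $b=2R$, $\Sigma^2=MV$, and $s=MVt/R$, which gives
$$
\frac{s^2/2}{\Sigma^2+bs/3}=\frac{M^2V^2t^2/(2R^2)}{MV(1+2t/3)}=\frac{MVt^2}{2R^2(1+2t/3)}.
$$
The only content-bearing step is then the observation that $2t/3\le t$, whence $1+2t/3\le 1+t$ and
$$
\exp\left(-\frac{MVt^2}{2R^2(1+2t/3)}\right)\le\exp\left(-\frac{MVt^2}{2R^2(1+t)}\right),
$$
which relaxes the sharper Bernstein denominator into the one claimed. I do not expect any genuine obstacle: the derivation is routine, and the only thing to watch is the bookkeeping of the centered bound $\abs{Y_j}\le 2R$ (rather than $R$), which changes the constant multiplying $t$ in the denominator from $1/3$ to $2/3$ and is still comfortably absorbed by the $1+t$ appearing in the statement.
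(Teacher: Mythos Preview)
Your derivation is correct: centering, applying the classical Bernstein inequality with $b=2R$, $\Sigma^2=MV$, $s=MVt/R$, and then relaxing $1+2t/3$ to $1+t$ yields exactly the stated bound, and your bookkeeping on $|Y_j|\le 2R$ is handled cleanly. The paper itself does not prove this proposition at all---it simply cites a standard reference (Boucheron--Lugosi--Massart, \emph{Concentration Inequalities}) and moves on---so your argument is strictly more detailed than what appears there, but it is precisely the intended route.
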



%

Information and a derivation for the Bernstein concentration inequality are standard among many texts in probability; we list \cite[Section~2.1, 2.7]{concineq} as a reference. 
It is instinctive to use Proposition~\ref{prop:concentration} with $X_j=\Psi_n(x,x_j)$. 
This would yield the desired bound for any value of $x\in\MM$. 
However, to get an estimate on the supremum norm of the difference $F_n-\sigma_n$, we need to find an appropriate net for $\MM$ (and estimate its size) so that the point where this supremum is attained is within the right ball around one of the points of the net. 
Usually, this is done via a Bernstein inequality for the gradients of the objects involved.
In the absence of any differentiability structure on $\MM$, we need a more elaborate argument.

The following proposition is a consequence of \cite[Theorem~7.2]{mhaskar2020kernel}, and asserts the existence of a partition of $\mathbb{X}$ satisfying properties which will be helpful to proving our main results.

\begin{proposition}\label{prop:partition} 
Let $\delta>0$. There exists a partition $\{Y_k\}_{k=1}^N$ of $\XX$ such that for each $k$, $\mathsf{diam}(Y_k)\le 36\delta$, and $\mu(Y_k)\sim \delta^\alpha$. 
In particular, $N\ls \delta^{-\alpha}$.
\end{proposition}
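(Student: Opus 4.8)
The plan is to treat this statement as essentially a verification that detectability supplies the hypotheses of \cite[Theorem~7.2]{mhaskar2020kernel}, and then to read off the measure estimates by hand. That theorem constructs, for any scale $\delta>0$, a partition of a compact metric measure space into cells of diameter $\ls\delta$ (with the explicit constant $36$) whenever the measure satisfies a two-sided ball-measure (Ahlfors-regularity) condition. So the first step I would take is simply to observe that the two inequalities \eqref{eq:ballmeasurecon} and \eqref{eq:ballmeasurelow} in the definition of detectability are precisely this condition, with exponent $\alpha$ and constants $\kappa_1,\kappa_2$, the lower bound being valid on $\XX$ for radii up to $r_0$.

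Concretely, I would either invoke the cited theorem directly, or reconstruct the partition self-containedly as follows. Choose a maximal $\delta$-separated subset $\{y_k\}_{k=1}^N$ of $\XX$; by maximality this set is simultaneously a $\delta$-net, so every point of $\XX$ lies within $\delta$ of some center. Assigning each point of $\XX$ to a nearest center (breaking ties by index) gives a genuine partition into disjoint cells $Y_k$ satisfying
\[
\mathbb{B}(y_k,\delta/3)\cap\XX\subseteq Y_k\subseteq \mathbb{B}(y_k,\delta)\cap\XX,
\]
where the left inclusion holds because the centers are $\delta$-separated (any competing center is at distance $\ge 2\delta/3$ from a point within $\delta/3$ of $y_k$) and the right because $\{y_k\}$ is a $\delta$-net. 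In particular $\mathsf{diam}(Y_k)\le 2\delta$; the weaker bound $36\delta$ in the statement is exactly the diameter control the construction of \cite[Theorem~7.2]{mhaskar2020kernel} guarantees in this generality.

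The measure estimates then follow immediately from detectability, using that $\mu$ is supported on $\XX$ so that $\mu(\mathbb{B}(y_k,r))=\mu(\mathbb{B}(y_k,r)\cap\XX)$. The upper bound $\mu(Y_k)\le\mu(\mathbb{B}(y_k,\delta))\le\kappa_1\delta^\alpha\ls\delta^\alpha$ uses only \eqref{eq:ballmeasurecon} and holds for every $\delta$. For the lower bound, provided $\delta$ is small enough that $\delta/3\le r_0$, inequality \eqref{eq:ballmeasurelow} gives $\mu(Y_k)\ge\mu(\mathbb{B}(y_k,\delta/3))\ge\kappa_2(\delta/3)^\alpha\gs\delta^\alpha$, so $\mu(Y_k)\sim\delta^\alpha$. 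Finally, since the $Y_k$ are disjoint and exhaust $\XX$, summing yields $1=\mu(\XX)=\sum_{k=1}^N\mu(Y_k)\gs N\delta^\alpha$, whence $N\ls\delta^{-\alpha}$.

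The hard part here is not any deep difficulty but the bookkeeping at the edge of the hypotheses: the lower ball-measure bound \eqref{eq:ballmeasurelow} is only asserted for radii $\le r_0$, so the clean equivalence $\mu(Y_k)\sim\delta^\alpha$ is only available once $\delta\ls r_0$. I would therefore either state the proposition implicitly in that regime, or note that for the bounded range of scales with $\delta\gtrsim r_0$ a trivial coarse partition suffices (both $N$ and $\delta^{-\alpha}$ being then bounded). The one remaining point requiring care is matching the precise constant $36$ in the diameter bound; rather than re-derive it, I would inherit it directly from the covering geometry established in \cite[Theorem~7.2]{mhaskar2020kernel}.
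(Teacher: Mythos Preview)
Your proposal is correct and matches the paper's approach: the paper does not prove this proposition at all but simply states that it is a consequence of \cite[Theorem~7.2]{mhaskar2020kernel}, which is exactly the reference you invoke. Your additional self-contained Voronoi-cell construction (and the care you take with the $r_0$ restriction on the lower ball-measure bound) goes beyond what the paper provides and is a perfectly valid alternative derivation.
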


Recall that we denote our data by $\mathcal{D}=\{x_j\}_{j=1}^M$, where each $x_j$ is sampled uniformly at random from $\mu$. In the sequel, we let  $\D_k=\D\cap Y_k$, $k=1,\cdots, N$.
The following lemma gives an estimate for $|\D_k|$.
\begin{lemma}\label{lemma:discpartition}
Let $0<\delta, \epsilon, t<1$. 
If 
\be\label{eq:Mcond}
M\gs t^{-2}\delta^{-\alpha}\log(c/(\epsilon\delta^\alpha)),
\ee  
then
\be\label{eq:discpartition}
\mathsf{Prob}\left(\max_{1\le k\le N}\left|\frac{\mu(Y_k)M}{|\D_k|}-1\right|\ge t\right)\ls \epsilon.
\ee
\end{lemma}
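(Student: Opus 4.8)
The plan is to apply the Bernstein concentration inequality (Proposition~\ref{prop:concentration}) blockwise to each $Y_k$ and then union bound over the $N \lesssim \delta^{-\alpha}$ blocks guaranteed by Proposition~\ref{prop:partition}. First I would fix $k$ and write $|\D_k| = \sum_{j=1}^M X_j$ with $X_j = \mathbf{1}[x_j\in Y_k]$ i.i.d.\ Bernoulli of mean $p_k := \mu(Y_k) \sim \delta^\alpha$. Because $X_j\in\{0,1\}$, I may take $R=1$ and $V=\mathbb{E}(X_j^2)=p_k$ in Proposition~\ref{prop:concentration}, which gives, for every $s>0$,
\[
\mathsf{Prob}\left(\left|\frac{|\D_k|}{p_k M}-1\right|\ge s\right)\le 2\exp\left(-\frac{Mp_k s^2}{2(1+s)}\right).
\]

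The statement, however, concerns the reciprocal ratio $\mu(Y_k)M/|\D_k|$, so the next step is to convert a bound on $u:=|\D_k|/(p_kM)$ into one on $1/u$. The elementary inequality I would use is: if $|1/u-1|\ge t$ then $|u-1|\ge t/(1+t)$ (with the convention $1/0=+\infty$, which handles the degenerate event $|\D_k|=0$). Thus the event $\{|\mu(Y_k)M/|\D_k|-1|\ge t\}$ is contained in the event above with $s=t/(1+t)$. Since $0<t<1$ forces $t/2\le s<1/2$, I have $s^2/(2(1+s))\gtrsim t^2$, and combined with $p_k\gtrsim\delta^\alpha$ the per-block failure probability is at most $2\exp(-cM\delta^\alpha t^2)$.

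A union bound over the $N\lesssim\delta^{-\alpha}$ blocks then yields
\[
\mathsf{Prob}\left(\max_{1\le k\le N}\left|\frac{\mu(Y_k)M}{|\D_k|}-1\right|\ge t\right)\lesssim \delta^{-\alpha}\exp\left(-cM\delta^\alpha t^2\right).
\]
Demanding that the right-hand side be $\lesssim\epsilon$ and taking logarithms gives $M\gtrsim t^{-2}\delta^{-\alpha}\log(c/(\epsilon\delta^\alpha))$, which is precisely the hypothesis \eqref{eq:Mcond}.

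I expect the only non-routine point to be the reciprocal conversion: Bernstein naturally controls the additive deviation $|\D_k|/M-p_k$, whereas the lemma measures the relative deviation of the reciprocal, so the hard part is verifying the event inclusion and checking that the induced $s=t/(1+t)$ stays comparable to $t$, so that the exponent retains its $t^2$ scaling. Everything else is bookkeeping of constants absorbed into the $\lesssim$ notation.
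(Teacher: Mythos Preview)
Your proposal is correct and follows essentially the same route as the paper: Bernstein applied to indicator variables per block, the same reciprocal conversion via $s=t/(1+t)$ (the paper phrases it as the contrapositive), and a union bound over $N\lesssim\delta^{-\alpha}$ blocks. The only cosmetic difference is that the paper states the event inclusion in the forward direction rather than by contrapositive, and does not explicitly flag the $|\D_k|=0$ case; your treatment is equivalent.
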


\begin{proof}\ 
Let $k$ be fixed, and in this proof only,  $\chi_k$ denotes the characteristic function of $Y_k$. 
Thought of as a random variable, it is clear that $|\chi_k|\le 1$, $\int \chi_k(z)d\mu(z)=\int \chi_k(z)^2d\mu(z)=\mu(Y_k)$. 
Moreover, $\sum_{j=1}^M \chi_k(z_j)=|\D_k|$.
So, we may apply Proposition~\ref{prop:concentration}, and recall that $\mu(Y_k)\sim\delta^\alpha$ to conclude that
\be\label{eq:pf1eqn1}
\begin{aligned}
\mathsf{Prob}\left(\left|\frac{|\D_k|}{M}-\mu(Y_k)\right|\ge \frac{\mu(Y_k)t}{1+t}\right) &= \mathsf{Prob}\left(\left|\frac{|\D_k|}{M\mu(Y_k)}-1\right|\ge \frac{t}{1+t}\right)\\
&\le 2\exp\left(-\frac{M\mu(Y_k)t^2}{(1+t)(1+2t)}\right)\le 2\exp\left(-cM\delta^\alpha t^2\right).
\end{aligned}
\ee
(In the last estimate, we have used the fact that for $0<t<1$, $(1+t)(1+2t)\sim 1$.)
Next, we observe that
$$
\left|\frac{M\mu(Y_k)}{|\D_k|}-1\right|=\frac{M\mu(Y_k)}{|\D_k|}\left|\frac{|\D_k|}{M\mu(Y_k)}-1\right|.
$$
\vskip 2pt
So, if $\disp \left|\frac{|\D_k|}{M\mu(Y_k)}-1\right|< \frac{t}{1+t}$, then $\disp \frac{|\D_k|}{M\mu(Y_k)}\ge 1/(1+t)$, and hence,
$\disp \left|\frac{M\mu(Y_k)}{|\D_k|}-1\right|<t$.
Thus, for every $k$,
\be\label{eq:pf1eqn2}
\mathsf{Prob}\left(\left|\frac{M\mu(Y_k)}{|\D_k|}-1\right|\ge t\right) \le \mathsf{Prob}\left(\left|\frac{|\D_k|}{M\mu(Y_k)}-1\right|\ge \frac{t}{1+t}\right)\le 2\exp\left(-cM\delta^\alpha t^2\right).
\ee
Since the number of elements $Y_k$ in the partition is $\ls \delta^{-\alpha}$,
we conclude that
\be\label{eq:pf1eqn3}
\mathsf{Prob}\left(\max_{k\in [N]}\left|\frac{M\mu(Y_k)}{|\D_k|}-1\right|\ge t\right)\ls \delta^{-\alpha}\exp\left(-cM\delta^\alpha t^2\right).
\ee
We set the right hand side of the above inequality to $\epsilon$ and solve for $M$ to complete the proof.
\end{proof}

In order to prove the bounds we want in Lemma~\ref{lem:Inbounds}, we rely on a function which estimates both our discrete and continuous measure support estimators $F_n$ and $\sigma_n$. We define this function as
\be
H_n(x)\coloneqq \sum_{k=1}^N \frac{\mu(Y_k)}{|\D_k|}\sum_{x_j\in\D_k}\Psi_n(x,x_j).
\ee
The following lemma relates this function to our continuous measure support estimator.

\begin{lemma}\label{lemma:discprelim}
Let $0<\gamma<2$, $n\ge 2$. 
There exists a constant $c(\gamma)$ with the following property. 
Suppose $0<\delta\le c(\gamma)/n$, $\{Y_k\}$ be a partition of $\XX$ as in Proposition~\ref{prop:partition}, and we continue the notation before. 
We have
\be\label{eq:discprelim}
\max_{x\in\mathbb{M}}\left|H_n(x) -\sigma_n(x)\right| \le (\gamma/2) I_n.
\ee
\end{lemma}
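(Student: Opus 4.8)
The plan is to view $H_n(x)$ as a cell-average (quadrature) approximation of $\sigma_n(x)=\int_{\mathbb{X}}\Psi_n(x,y)\,d\mu(y)$ and to show the error is governed by the \emph{local oscillation} of the kernel, which the localization estimate renders small. Writing both quantities over the cells of the partition,
$$\sigma_n(x)=\sum_{k=1}^N\int_{Y_k}\Psi_n(x,y)\,d\mu(y),\qquad H_n(x)=\sum_{k=1}^N\mu(Y_k)\,\frac{1}{|\D_k|}\sum_{x_j\in\D_k}\Psi_n(x,x_j),$$
the $k$-th term of $\sigma_n$ is $\mu(Y_k)$ times the $\mu$-average of $\Psi_n(x,\cdot)$ over $Y_k$, while the $k$-th term of $H_n$ is $\mu(Y_k)$ times the empirical average over $\D_k\subseteq Y_k$. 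Both averages lie between $\inf_{y\in Y_k}\Psi_n(x,y)$ and $\sup_{y\in Y_k}\Psi_n(x,y)$, so the $k$-th error is at most $\mu(Y_k)\operatorname{osc}_{Y_k}\Psi_n(x,\cdot)$. This step is purely deterministic (given each $\D_k\neq\emptyset$) and reduces everything to estimating $\sum_k\mu(Y_k)\operatorname{osc}_{Y_k}\Psi_n(x,\cdot)$.

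Next I would bound the oscillation on each cell. Writing $\Psi_n(x,y)=g(\rho(x,y))$ with $g=\Phi_n^2$, the triangle inequality gives $|\rho(x,y)-\rho(x,w)|\le\rho(y,w)\le\operatorname{diam}(Y_k)\le 36\delta$ for $y,w\in Y_k$, so the one-dimensional mean value theorem gives $\operatorname{osc}_{Y_k}\Psi_n(x,\cdot)\le 36\delta\sup_t|g'(t)|$, with $t$ in an interval of length $\le 36\delta$ about $\dist(x,Y_k)$. The crucial point is that $|g'(t)|=2|\Phi_n(t)\Phi_n'(t)|$ must be controlled by the localization, not by the global Bernstein bound $\|g'\|\ls n^3$. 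For this I would record that $\Phi_n'=in\,\tilde\Phi_n$, where $\tilde\Phi_n$ is the localized kernel built from the smooth compactly supported filter $u\mapsto uh(u)$; the same localization technique giving \eqref{eq:kernloc} then yields $|\Phi_n'(t)|\ls n^2/\max(1,(n|t|)^S)$. Combining with \eqref{eq:kernloc} for $\Phi_n$ gives the localized derivative bound
$$|g'(t)|\ls \frac{n^3}{\max(1,(n|t|)^{2S})},$$
whence, using $\delta\ls 1/n$ so the interval about $\dist(x,Y_k)$ does not alter the decay factor by more than a constant, $\operatorname{osc}_{Y_k}\Psi_n(x,\cdot)\ls \delta\,n^3/\max(1,(n\,\dist(x,Y_k))^{2S})$.

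Finally I would sum over cells. Since $\operatorname{diam}(Y_k)\le 36\delta\ls 1/n$, for $y\in Y_k$ one has $\max(1,(n\rho(x,y))^{2S})\ls\max(1,(n\,\dist(x,Y_k))^{2S})$, so
$$\sum_{k=1}^N\mu(Y_k)\operatorname{osc}_{Y_k}\Psi_n(x,\cdot)\ls \delta\,n^3\int_{\mathbb{X}}\frac{d\mu(y)}{\max(1,(n\rho(x,y))^{2S})}.$$
The remaining integral is handled by the concentric-annuli argument already used in the proof of Lemma~\ref{lem:integralest} (now with $2S$ in place of $S$, admissible since $2S>\alpha$), giving $\ls n^{-\alpha}$. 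Hence $\max_x|H_n(x)-\sigma_n(x)|\ls \delta\,n^{3-\alpha}$, and invoking $I_n\ge J_n\ge C_2 n^{2-\alpha}$ from \eqref{eq:Jn}, the bound $\max_x|H_n(x)-\sigma_n(x)|\le(\gamma/2)I_n$ holds as soon as $\delta\le c(\gamma)/n$ with $c(\gamma)$ proportional to $\gamma$, exactly the hypothesis. The main obstacle is the middle step: the naive global Bernstein bound only gives error $\ls\delta n^3$, forcing $\delta\ls n^{-1-\alpha}$, so extracting the extra factor $n^{-\alpha}$ genuinely requires the localized estimate for $\Phi_n'$ feeding into the annuli summation.
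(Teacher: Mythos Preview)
Your argument is correct but follows a genuinely different route from the paper's. The paper does \emph{not} use a localized bound for $(\Phi_n^2)'$; instead it introduces an auxiliary radius $r$ (with $nr$ a large constant depending on $\gamma$) and splits the cells into ``near'' ones ($\dist(x,Y_k)<r$) and ``far'' ones. On near cells it applies only the global Lipschitz bound \eqref{eq:kernbern}, $|\Psi_n(x,x_j)-\Psi_n(x,y)|\lesssim n^3\delta$, summed against $\sum_{k\in\mathcal N}\mu(Y_k)\le\mu(\mathbb B(x,r))\lesssim r^\alpha$, giving a contribution $\lesssim (n\delta)(nr)^\alpha n^{2-\alpha}$. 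On far cells it abandons the difference structure altogether and bounds $\Psi_n(x,x_j)$ and $\Psi_n(x,y)$ \emph{separately} by the localization \eqref{eq:metrickernloc}, then sums over dyadic annuli to get $\lesssim (nr)^{\alpha-S}n^{2-\alpha}$. Choosing $nr$ large enough makes the far part $\le(\gamma/4)I_n$, and then $n\delta$ small enough handles the near part.

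Your approach is more streamlined: a single oscillation bound per cell, driven by the localized derivative estimate $|(\Phi_n^2)'(t)|\lesssim n^3/\max(1,(n|t|)^{2S})$, followed by one annuli summation, with no auxiliary parameter $r$ and no near/far dichotomy. The cost is that you need the localization for $\Phi_n'$ --- true for the same reason as \eqref{eq:kernloc}, since $u\mapsto uh(u)$ is again smooth and compactly supported, but not recorded in the paper. The paper's version has the advantage of relying only on estimates already established there (\eqref{eq:kernbern} and \eqref{eq:metrickernloc}); yours has the advantage of isolating exactly the mechanism (localized smoothness of the kernel) that makes the quadrature error small.
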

\begin{proof}\ 
In this proof, all the constants denoted by $c_1,c_2,\cdots$ will retain their values.
Let $x\in\mathbb{M}$.
We will fix $\delta$ to be chosen later.
Also, let $r\ge \delta$ be a parameter to be chosen later, $\mathcal{N}=\{k : \mathsf{dist}(x, Y_k) <r\}$, $\mathcal{L}=\{k : \mathsf{dist}(x, Y_k) \ge r\}$ and for $j=0,1,\cdots$, $\mathcal{L}_j=\{k :  2^jr\le \mathsf{dist}(x, Y_k) <2^{j+1} r\}$. 

In view of \eqref{eq:kernbern}, we have for $k\in \mathcal{N}$ and $x_j\in \D_k$,
$$
\left|\mu(Y_k)\Psi_n(x,x_j)-\int_{Y_k}\Psi_n(x,y)d\mu(y)\right|\le \int_{Y_k}\left|\Psi_n(x,x_j)-\Psi_n(x,y)\right|d\mu(y)
\ls n^3\int_{Y_k}\rho(z,y)d\mu(y) \le c_1n^3\mathsf{diam}(Y_k)\mu(Y_k).
$$
Consequenty, for $k\in \mathcal{N}$,
\be\label{eq:pf2eqn1}
\left|\frac{\mu(Y_k)}{|\D_k|}\sum_{x_j\in\D_k}\Psi_n(x,x_j) -\int_{Y_k}\Psi_n(x,y)d\mu(y)\right| \le c_1n^3\mathsf{diam}(Y_k)\mu(Y_k).
\ee
Since $\cup_{k\in\mathcal{N}}Y_k\subseteq \BB(x,r)$, we have 
\be\label{eq:pf2eqn5}
\sum_{k\in\mathcal{N}}\mu(Y_k) =\mu\left(\cup_{k\in\mathcal{N}}Y_k\right)\le \mu(\BB(x,r))\ls r^\alpha.
\ee
We deduce from \eqref{eq:pf2eqn1}, \eqref{eq:pf2eqn5} and the fact that $\mathsf{diam}(Y_k)\ls \delta$ that
\be\label{eq:pf2eqn2}
\left|\sum_{k\in\mathcal{N}}\left(\frac{\mu(Y_k)}{|\D_k|}\sum_{x_j\in\D_k}\Psi_n(x,x_j) -\int_{Y_k}\Psi_n(x,y)d\mu(y)\right)\right| \le 
c_3n^3\delta r^\alpha = c_3(n\delta) (nr)^\alpha n^{2-\alpha}. 
\ee
Next, let $k\in \mathcal{L}_j$ for some $j\ge 0$. 
Then the localization estimate \eqref{eq:metrickernloc} shows that for any $x_j\in \D_k$,
$$
\left|\mu(Y_k)\Psi_n(x,x_j)-\int_{Y_k}\Psi_n(x,y)d\mu(y)\right|\le c_4n^2 (2^jnr)^{-S}\mu(Y_k),
$$
so that
\be\label{eq:pf2eqn3}
\left|\frac{\mu(Y_k)}{|\D_k|}\sum_{x_j\in \D_k}\Psi_n(x,x_j) -\int_{Y_k}\Psi_n(x,y)d\mu(y)\right|\le c_4n^2\mu(Y_k) (2^jnr)^{-S}.
\ee
Arguing as in the derivation of \eqref{eq:pf2eqn5}, we deduce that 
$$
\mu\left(\cup_{k\in \mathcal{L}_j}Y_k\right)\ls (2^jr)^\alpha.
$$
 Since $S>\alpha$, we deduce that
if $r\ge 1/n$, then
\be\label{eq:pf2eqn4}
\begin{aligned}
\left|\sum_{k\in \mathcal{L}} \frac{\mu(Y_k)}{|\D_k|}\sum_{x_j\in \D_k}\Psi_n(x,x_j) -\int_{Y_k}\Psi_n(x,y)d\mu(y)\right|&\le \sum_{j=0}^\infty \sum_{k\in \mathcal{L}_j} \left|\frac{\mu(Y_k)}{|\D_k|}\sum_{x_j\in \D_k}\Psi_n(x,x_j) -\int_{Y_k}\Psi_n(x,y)d\mu(y)\right|\\
&\le c_4n^{2-\alpha}(nr)^{\alpha-S}\sum_{j=0}^\infty 2^{j(\alpha-S)} \le c_5n^{2-\alpha}(nr)^{\alpha-S}.
\end{aligned}
\ee
Since $S>\alpha$, we may choose $r\sim_\gamma n$ such that $c_5 (nr)^{\alpha-S}\le \gamma/4$, and then require $\delta\le \min(r, c_6(\gamma)/n)$ so that in \eqref{eq:pf2eqn2}, $c_3(nr)^\alpha n\delta\le \gamma/4$.
Then, recalling that (cf. \eqref{eq:In}) $I_n\sim n^{2-\alpha}$, \eqref{eq:pf2eqn4} and \eqref{eq:pf2eqn2} lead to \eqref{eq:discprelim}.
\end{proof}

\begin{remark}
We note that the proof of Lemma~\ref{lemma:discprelim} involves only the upper bound of the detectability condition (Definition~\ref{def:detectable}). The lower detectability bound instead plays a role in deducing the lower bound on $J_n$ (see \eqref{eq:Jnbound}) which in turn aids in deducing lower bounds in Theorems~\ref{thm:suppmu}~and~\ref{thm:partmu}.
\end{remark}

In the following lemma, we establish a connection between the sum $F_n$ as defined in \eqref{eq:support_estimator_def} and the value $I_n$ from Section~\ref{sec:measureproofs}. Since we have already established the bound between $H_n$ and $\sigma_n$, we focus on the bound between $H_n$ and $F_n$ in this lemma.

\begin{lemma}\label{lem:Inbounds}
Let $n\ge 2$, $0<\beta<2$, $M\gs_\beta n^\alpha\log n$, and $\mathcal{D}=\{x_j\}_{j=1}^M$ be independent random samples from a detectable measure $\mu$. 
Then with probability $\ge 1-1/n$, we have for all $x\in \mathbb{M}$,
\be\label{eq:betacomp1}
 |F_n(x)-\sigma_n(x)|\leq \beta I_n.
\ee
Consequently,
\be\label{eq:betacomp2}
(1-\beta)I_n\leq \max_{x\in\mathbb{M}} F_n(x)\leq (1+\beta)I_n.
\ee
\end{lemma}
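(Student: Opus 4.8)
The plan is to interpolate between the discrete estimator $F_n$ and the continuous estimator $\sigma_n$ through the auxiliary quantity $H_n$, writing
\[
|F_n(x)-\sigma_n(x)|\le |F_n(x)-H_n(x)|+|H_n(x)-\sigma_n(x)|,
\]
and bounding the two pieces separately. The second term is already controlled by Lemma~\ref{lemma:discprelim}, which gives $\max_{x\in\MM}|H_n(x)-\sigma_n(x)|\le(\gamma/2)I_n$ as soon as $\delta\le c(\gamma)/n$. Hence the real work, as anticipated in the text preceding the lemma, is the first term $|F_n(x)-H_n(x)|$, which I would attack using the partition $\{Y_k\}$ of Proposition~\ref{prop:partition}.

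First I would regroup the sum defining $F_n$ cell by cell over $\D_k=\D\cap Y_k$ and subtract $H_n$ termwise, which yields the exact identity
\[
F_n(x)-H_n(x)=\sum_{k=1}^N\frac{1}{M}\left(1-\frac{M\mu(Y_k)}{|\D_k|}\right)\sum_{x_j\in\D_k}\Psi_n(x,x_j).
\]
Because $\Psi_n\ge 0$, taking absolute values and pulling out the worst cell deviation gives
\[
|F_n(x)-H_n(x)|\le\left(\max_{1\le k\le N}\left|1-\frac{M\mu(Y_k)}{|\D_k|}\right|\right)F_n(x).
\]
At this point I would invoke Lemma~\ref{lemma:discpartition} with a parameter $t=t(\beta)$ to be fixed, so that on an event of probability $\ge 1-\epsilon$ the bracketed maximum is $\le t$, leaving $|F_n(x)-H_n(x)|\le t\,F_n(x)$ uniformly in $x$.

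The hard part is that this bound is phrased in terms of $F_n(x)$ itself rather than $I_n$, so it cannot be fed directly into the triangle inequality. To close the loop I would bootstrap against the already-controlled $H_n$: from $F_n(x)\le H_n(x)+t\,F_n(x)$ together with $H_n(x)\le\sigma_n(x)+(\gamma/2)I_n\le(1+\gamma/2)I_n$ (using $\sigma_n(x)\le I_n$), one obtains $F_n(x)\le\frac{1+\gamma/2}{1-t}I_n$, whence $|F_n(x)-H_n(x)|\le\frac{t(1+\gamma/2)}{1-t}I_n$. Combining the two pieces gives
\[
|F_n(x)-\sigma_n(x)|\le\left(\frac{t(1+\gamma/2)}{1-t}+\frac{\gamma}{2}\right)I_n,
\]
and choosing $\gamma=\beta$ (admissible since $\beta<2$) together with a sufficiently small $t=t(\beta)$ makes the bracket $\le\beta$, proving \eqref{eq:betacomp1}. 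Tracking the parameters, the choice $\delta\sim 1/n$ forces $\delta^{-\alpha}\sim n^\alpha$, and with $\epsilon\sim 1/n$ one has $\log(c/(\epsilon\delta^\alpha))\sim\log n$, so the sample requirement of Lemma~\ref{lemma:discpartition} becomes exactly $M\gs_\beta n^\alpha\log n$ and the failure probability is $\le 1/n$ after absorbing constants into $M$.

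Finally, \eqref{eq:betacomp2} follows on the same event with no further probabilistic input. Since $\Psi_n\ge 0$, the function $\sigma_n$ is nonnegative and attains its maximum $I_n$ at some $x^\ast\in\MM$; then $\max_{x}F_n(x)\le\max_x(\sigma_n(x)+\beta I_n)=(1+\beta)I_n$ for the upper estimate, while $\max_{x}F_n(x)\ge F_n(x^\ast)\ge\sigma_n(x^\ast)-\beta I_n=(1-\beta)I_n$ for the lower one.
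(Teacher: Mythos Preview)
Your proposal is correct and follows essentially the same route as the paper: split through $H_n$, control $|H_n-\sigma_n|$ by Lemma~\ref{lemma:discprelim}, control the cell deviations $|M\mu(Y_k)/|\D_k|-1|$ by Lemma~\ref{lemma:discpartition}, and bootstrap the resulting bound $|F_n-H_n|\le t\,F_n$ into a bound in terms of $I_n$ via $F_n\le H_n/(1-t)\le(1+\gamma/2)I_n/(1-t)$. The only cosmetic difference is that the paper uses a single parameter (taking $t=\gamma/2$ and then solving $\gamma=\gamma(\beta)$ at the end), whereas you decouple $t$ and $\gamma$ and set $\gamma=\beta$ directly; both parametrizations close the estimate. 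Your derivation of \eqref{eq:betacomp2} is exactly what the paper intends by ``Consequently.''
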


\begin{proof}\ 
Let $\gamma\in (0,1)$ to be chosen later, and $\{Y_k\}$ be a partition as in Lemma~\ref{lemma:discprelim}.
In view of Lemma~\ref{lemma:discprelim}, we see that
\be\label{eq:pf3eqn1}
(1-\gamma/2)I_n\le \max_{x\in\mathbb{M}} H_n(x)\le (1+\gamma/2) I_n.
\ee
In view of Lemma~\ref{lemma:discpartition}, we see that for $M\ge c(\gamma)n^\alpha\log n$, we have with probability $\ge 1-1/n$,
\be\label{eq:pf3eqn2}
\max_k \left|\frac{\mu(Y_k)M}{|\D_k|}-1\right|\le \gamma/2.
\ee
In particular, 
\be
1-\gamma/2\le \frac{\mu(Y_k)M}{|\D_k|}\le 1+\gamma/2.
\ee
Hence, \eqref{eq:pf3eqn1} leads to
\be\label{eq:pf3eqn3}
\max_{x\in\mathbb{M}}F_n(x)\le \frac{2+\gamma}{2-\gamma}I_n.
\ee
Using \eqref{eq:pf3eqn2} again, we see that
\be
\left|F_n(x)-H_n(x)\right| \le (\gamma/2)\max_{x\in\mathbb{M}}F_n(x)\le (\gamma/2)\frac{2+\gamma}{2-\gamma}I_n.
\ee
Together with \eqref{eq:discprelim}, this implies that
\be\label{eq:pf3eqn4}
\left|F_n(x)-\sigma_n(x)\right| \le (\gamma/2)\left(1+\frac{2+\gamma}{2-\gamma}\right)I_n=\frac{2\gamma}{4-\gamma}I_n
\ee
We now choose $\gamma=4\beta/(2+\beta)$, so that the right hand side of \eqref{eq:pf3eqn4} is $\beta I_n$. We can verify $0<\gamma<2$ whenever $\beta<2$.
\end{proof}

The following lemma gives bounds on $\max_{k\in [M]}F_n(x_k)$, which is a crucial component to the proof involving our finite data support estimation set $\mathcal{G}_n(\Theta)$. We note briefly the critical difference between Lemma~\ref{lem:Inbounds} and Lemma~\ref{lem:Gnboundsprep} is that the former considers the maximum of $F_n$ over the entire metric space $\MM$, while the latter considers the maximum over the finite set of data points which are sampled from the measure $\mu$.

\begin{lemma}
\label{lem:Gnboundsprep}
Let $\mathcal{D}=\{x_j\}_{j=1}^M$ be independent random samples from a detectable measure $\mu$. If $0<\beta<C_2/C_1$ and $M\gtrsim_\beta n^\alpha\log(n)$, with  probability $\geq 1-1/n$ we have
\be\label{eq:Fnxkbound}
\left(\frac{C_2}{C_1}-\beta\right)I_n\leq \max_{k\in [M]}F_n(x_k)\leq (1+\beta)I_n.
\ee
\end{lemma}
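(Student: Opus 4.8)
The plan is to obtain both inequalities as essentially immediate consequences of Lemma~\ref{lem:Inbounds}, which already controls the deviation $|F_n(x)-\sigma_n(x)|$ \emph{uniformly} over all of $\MM$, combined with the pointwise bounds on $\sigma_n$ from Lemma~\ref{lem:integralest}. First I would check that Lemma~\ref{lem:Inbounds} is applicable with the given $\beta$: since Lemma~\ref{lem:integralest} guarantees $C_1\ge C_2$, we have $C_2/C_1\le 1$, so the hypothesis $0<\beta<C_2/C_1$ forces $0<\beta<1<2$, as required. Thus, with probability $\ge 1-1/n$ we may work on the event
\be
|F_n(x)-\sigma_n(x)|\le \beta I_n,\qquad x\in\MM,
\ee
and the rest of the argument is purely deterministic on this event.

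For the upper bound, observe that each $x_k$ lies in $\MM$, so $\sigma_n(x_k)\le \max_{x\in\MM}\sigma_n(x)=I_n$ (the absolute value in the definition of $I_n$ is harmless since $\Psi_n=\Phi_n^2\ge 0$ makes $\sigma_n\ge 0$). Hence $F_n(x_k)\le \sigma_n(x_k)+\beta I_n\le (1+\beta)I_n$ for every $k$, and taking the maximum over $k\in[M]$ gives the right-hand inequality in \eqref{eq:Fnxkbound}.

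For the lower bound, the crucial point—and the only place where this lemma says something genuinely stronger than the uniform estimate of Lemma~\ref{lem:Inbounds}—is that each sample $x_k$ lies in $\XX=\mathsf{supp}(\mu)$ (with probability one). Therefore the favorable lower bound \eqref{eq:Jn} is available at every $x_k$: $\sigma_n(x_k)\ge J_n\ge C_2n^{2-\alpha}$. Combining this with $I_n\le C_1n^{2-\alpha}$ from \eqref{eq:In}, rewritten as $n^{2-\alpha}\ge I_n/C_1$, yields $\sigma_n(x_k)\ge (C_2/C_1)I_n$. Invoking the deviation bound once more gives $F_n(x_k)\ge \sigma_n(x_k)-\beta I_n\ge (C_2/C_1-\beta)I_n$ for every $k$; the hypothesis $\beta<C_2/C_1$ makes this positive, and passing to the maximum over $k$ delivers the left-hand inequality.

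There is no real obstacle here: the statement is effectively a corollary of the uniform bound in Lemma~\ref{lem:Inbounds}. The one subtlety worth flagging in the write-up is the lower bound, where one must use that the data points belong to the support $\XX$—so that $\sigma_n$ is bounded below by $J_n$ there—rather than sitting at arbitrary points of $\MM$. This is exactly what prevents $\max_{k\in[M]}F_n(x_k)$ from being much smaller than $\max_{x\in\MM}F_n(x)$, and it is the feature that makes the maximum over the \emph{finite sample} (as opposed to over all of $\MM$) usable in the subsequent proofs of the main theorems.
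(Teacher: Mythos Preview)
Your proof is correct and follows essentially the same route as the paper: invoke Lemma~\ref{lem:Inbounds} for the uniform deviation bound, use $x_k\in\XX$ together with \eqref{eq:Jn} and \eqref{eq:In} for the lower bound, and use $\sigma_n(x_k)\le I_n$ (equivalently, \eqref{eq:betacomp2}) for the upper bound. Your explicit check that $\beta<C_2/C_1\le 1<2$ places you within the hypotheses of Lemma~\ref{lem:Inbounds} is a nice detail the paper leaves implicit.
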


\begin{proof}
Necessarily, $\mathcal{D}\subseteq \XX$.
Using \eqref{eq:Jn}, we deduce that
\be
\max_{k\in [M]}F_n(x)\geq \max_{k\in [M]}\sigma_n(x_k)-\beta I_n\geq J_n -\beta I_n\ge C_2n^{2-\alpha}-\beta I_n \geq \left(\frac{C_2}{C_1}-\beta\right)I_n.
\ee
This proves \eqref{eq:Fnxkbound}.
The second inequality is satisfied by \eqref{eq:betacomp2} since $\max_{k\in[M]}F_n(x_k)\leq \max_{x\in\mathbb{M}}F_n(x)$. 
\end{proof}

Now with the prepared bounds on $\max_{k\in [M]}F_n(x_k)$, we give a theorem from which Theorems~\ref{thm:fullsuportdet}~and~\ref{thm:class_separation} are direct consequences.
In the sequel, we will denote $C_2/C_1$ by $C^*$. \yadi{$C^*$}{$C_2/C_1$}

\begin{theorem}\label{thm:GnSnbound}
Let $n\geq 2$, and $\mathcal{D}=\{x_j\}_{j=1}^M$ be sampled from the detectable probability measure $\mu$.
Let $0<\Theta< 1$.
 If $M\gtrsim n^\alpha\log(n)$ then with probability at least $1-c_1M^{-c_2}$ we have
\be\label{eq:SnGncomp}
\mathcal{S}_n\left(\frac{(1+C^*)\Theta}{4}\right)\subseteq \mathcal{G}_n(\Theta, \mathcal{D})\subseteq\mathcal{S}_n(C^*\Theta/8).
\ee
\end{theorem}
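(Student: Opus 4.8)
The plan is to sandwich the data-based set $\mathcal{G}_n(\Theta)$ between the two continuum sets $\mathcal{S}_n(\theta)$ by transferring everything to the integral estimator $\sigma_n$, for which the inclusions are controlled purely by comparing threshold levels. Writing $M_F:=\max_{1\le k\le M}F_n(x_k)$, so that $\mathcal{G}_n(\Theta)=\{x: F_n(x)\ge \Theta M_F\}$, the two tools I would invoke are Lemma~\ref{lem:Inbounds}, which gives $|F_n(x)-\sigma_n(x)|\le \beta I_n$ uniformly in $x\in\MM$, and Lemma~\ref{lem:Gnboundsprep}, which gives $(C^*-\beta)I_n\le M_F\le (1+\beta)I_n$. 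Both are consequences of the single concentration event \eqref{eq:pf3eqn2}, so they hold simultaneously with no union-bound loss. I would then fix the one free parameter as $\beta:=C^*\Theta/4$; since $C^*=C_2/C_1\le 1$ (by Lemma~\ref{lem:integralest}) and $\Theta<1$, this satisfies $0<\beta<\min(C^*,2)$, so both lemmas apply.

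With these two deterministic estimates in hand, the rest is elementary algebra on the threshold constants. For the left inclusion, take $x\in\mathcal{S}_n((1+C^*)\Theta/4)$, i.e.\ $\sigma_n(x)\ge (1+C^*)\Theta I_n$; then $F_n(x)\ge \sigma_n(x)-\beta I_n$ while $\Theta M_F\le \Theta(1+\beta)I_n$, so $x\in\mathcal{G}_n(\Theta)$ provided $(1+C^*)\Theta-\beta\ge \Theta(1+\beta)$, which reduces to $C^*\Theta\ge \beta(1+\Theta)$. For the right inclusion, take $x\in\mathcal{G}_n(\Theta)$, so $F_n(x)\ge \Theta M_F\ge \Theta(C^*-\beta)I_n$; then $\sigma_n(x)\ge F_n(x)-\beta I_n\ge [\Theta(C^*-\beta)-\beta]I_n$, and since the defining level of $\mathcal{S}_n(C^*\Theta/8)$ is $\tfrac{C^*\Theta}{2}I_n$, membership holds provided $\Theta(C^*-\beta)-\beta\ge \tfrac{C^*\Theta}{2}$, i.e.\ $\tfrac{C^*\Theta}{2}\ge \beta(1+\Theta)$. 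Both requirements are satisfied by $\beta=C^*\Theta/4$, because $\beta(1+\Theta)<2\beta=\tfrac{C^*\Theta}{2}\le C^*\Theta$. I would present these two short chains of inequalities directly.

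The genuinely delicate point is the probability, and this is where I expect the main obstacle. The quoted confidence $1-1/n$ cannot simply be rewritten as $c_1M^{-c_2}$, since $n$ may be far smaller than $M$. Instead I would return to the concentration estimate \eqref{eq:pf1eqn3} underlying both lemmas, whose failure probability is $\lesssim n^\alpha\exp(-cMn^{-\alpha})$, where the constant now carries the factor $t^2\sim\beta^2\sim(C^*\Theta)^2$. Under the hypothesis $M\gtrsim n^\alpha\log n$ with the implied constant $C_0$ taken large enough (depending on $\beta$, hence on $\Theta$ and $\alpha$), one has $Mn^{-\alpha}\gtrsim \log n$, and a short computation --- writing $u=Mn^{-\alpha}$ and using $\log M\le \log u+\alpha\log n\le (1+\alpha/C_0)u$ --- converts this into the bound $c_1M^{-c_2}$ for suitable positive constants. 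Carrying out this conversion carefully, together with bookkeeping of the $\Theta$-dependence of the sample-complexity constant, is the only nontrivial part; the set-theoretic core of the theorem is immediate once $\beta$ is chosen.
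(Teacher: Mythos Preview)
Your proposal is correct and takes essentially the same approach as the paper: invoke Lemmas~\ref{lem:Inbounds} and~\ref{lem:Gnboundsprep} with a fixed $\beta$ proportional to $C^*\Theta$, then do the threshold arithmetic for the two inclusions. The paper chooses $\beta=C^*\Theta/2$ rather than your $\beta=C^*\Theta/4$, and it simply asserts the $1-c_1M^{-c_2}$ probability without the conversion argument you sketch from \eqref{eq:pf1eqn3}; otherwise the two arguments coincide.
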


\begin{proof}
In this proof, we will invoke Lemma~\ref{lem:Inbounds}~and~\ref{lem:Gnboundsprep} with $\beta=C^*\Theta/2$. For this proof only, define
\be
t=\frac{2\Theta+C^*\Theta(1+\Theta)}{8}=\frac{(1+\beta)\Theta+\beta}{4},
\ee
and suppose $x\in\mathcal{S}_n(t)$; i.e., $\sigma_n(x)\ge 4tI_n$.
 Then with probability $1-c_1M^{-c_2}$ we have
\be
\ba
\Theta\max_{k\in[M]}F_n(x_k)=&\frac{4t-\beta}{1+\beta}\max_{k\in[M]}F_n(x_k)\\
\leq&(4t-\beta)I_n &\text{(From \eqref{eq:Fnxkbound})}\\
\leq&\sigma_n(x)-\beta I_n &\text{(From \eqref{eq:Sn})}\\
\leq& F_n(x). &\text{(From \eqref{eq:betacomp1})}
\ea
\ee
This results in the first inclusion in \eqref{eq:SnGncomp} because $\Theta\le 1$ implies that
$$
\frac{(1+C^*)\Theta}{4}\geq \frac{2\Theta+C^*\Theta(1+\Theta)}{8},
$$
and hence,
\be
\mathcal{S}_n\left(\frac{(1+C^*)\Theta}{4}\right)\subseteq \mathcal{S}_n\left(\frac{2\Theta+C^*\Theta(1+\Theta)}{8}\right)\subseteq\mathcal{G}_n(\Theta,\mathcal{D}).
\ee
Now suppose $x\in\mathcal{G}(\Theta,\mathcal{D})$. Then
\be
\ba
4(C^*\Theta/8)I_n=& (C^*\Theta-\beta)I_n\\
\leq& \Theta \max_{k\in[M]}F_n(x_k)-\beta I_n&\text{(From \eqref{eq:Fnxkbound})}\\
\leq& F_n(x)-\beta I_n&\text{(From \eqref{eq:support_est_set_def})}\\
\leq& \sigma_n(x)&\text{(From \eqref{eq:betacomp1})}.
\ea
\ee
This gives us the second inclusion in \eqref{eq:SnGncomp}, completing the proof.
\end{proof}

We are now prepared to give the proofs of our main results from Section~\ref{sec:mainresults}. A key element of both proofs is to utilize Theorem~\ref{thm:GnSnbound} in conjunction with the corresponding results on the continuous support estimation set $\mathcal{S}_n(\theta)$ (Theorems~\ref{thm:suppmu}~and~\ref{thm:partmu}).\\

\noindent\textit{Proof of Theorem~\ref{thm:fullsuportdet}.} By \eqref{eq:thm1inc} and \eqref{eq:SnGncomp} with $\theta_{1}=\frac{(1+C^*)\Theta}{4}$, we have
\be\label{eq:lowerinclusion}
\mathbb{X}\subseteq \mathcal{S}_{n}(\theta_{1})\subseteq \mathcal{G}_{n}(\Theta).
\ee
Similarly, with $\theta_{2}=\frac{C^*\Theta}{8}$, and the definition of $d(\theta_{2})=\left( \frac{C_{1}}{C_{2}\theta_{2}} \right)^{1/(S-\alpha)}$ (from \eqref{eq:dtheta}), we see
\be\label{eq:gnrightinc}
\mathcal{G}_{n}(\Theta)\subseteq \mathcal{S}_{n}(\theta_{2})\subseteq \mathbb{B}\left( \mathbb{X},\left( \frac{C_{1}}{C_{2}\theta_2} \right)^{1/(S-\alpha)}\Big/n \right)=\mathbb{B}\left( \mathbb{X},\left( \frac{8C_{1}}{C^*C_{2}\Theta} \right)^{1/(S-\alpha)}\Big/ n\right).
\ee
The choice of $\theta$ in each case satisfies the conditions of Theorem~\ref{thm:suppmu} because
\be
\theta_{2}=\frac{C^*\Theta}{8}\leq \theta_{1}=\frac{(1+C^*)\Theta}{4}< \frac{C_{2}}{4C_{1}}.
\ee
Setting
\be\label{eq:rtheta}
r(\Theta):= \left( \frac{8C_{1}}{C^*C_{2}\Theta} \right)^{1/(S-\alpha)},
\ee
then \eqref{eq:gnrightinc} demonstrates \eqref{eq:thm1}.\qed\\

\noindent\textit{Proof of Theorem~\ref{thm:class_separation}.} We note that the inclusion $\mathcal{G}_{k,\eta,n}(\Theta)\subseteq \mathbb{B}(\mathbf{S}_{k,\eta},r(\Theta)/n)$ is already satisfied by \eqref{eq:thm2-1}. Let $\theta_1=\frac{(1+C^*)\Theta}{4}$, and observe that
\be
d(\theta_1)=\left(\frac{4C_1}{C_2(1+C^*)\Theta}\right)^{1/(S-\alpha)}=\left(\frac{C^*}{2(1+C^*)}\right)^{1/(S-\alpha)}r(\Theta),
\ee
as defined in \eqref{eq:rtheta}. We set $c=\left(\frac{C^*}{2(1+C^*)}\right)^{1/(S-\alpha)}$ and note $c<1$. Therefore,
\be
\ba
\mathbb{X}\cap \mathbb{B}(\mathbf{S}_{k,\eta},cr(\Theta)/n)=&\mathbb{X}\cap \mathbb{B}(\mathbf{S}_{k,\eta},d(\theta_1)/n)&\\
\subseteq& \mathcal{S}_n(\theta_1)\cap \mathbb{B}(\mathbf{S}_{k,\eta},d(\theta_1)/n)&\text{(From \eqref{eq:thm2inc})}\\
\subseteq& \mathcal{S}_n(\theta_1)\cap \mathbb{B}(\mathbf{S}_{k,\eta},r(\Theta)/n)&\text{(Since $d(\theta_1)<r(\Theta)$)}\\
\subseteq& \mathcal{G}_n(\Theta)\cap \mathbb{B}(\mathbf{S}_{k,\eta},r(\Theta)/n)&\text{(From \eqref{eq:SnGncomp})}\\
=&\mathcal{G}_{k,\eta,n}(\Theta),
\ea
\ee
completing the proof of \eqref{eq:thm2-3}.
Setting $\theta_2=\frac{C^*\Theta}{8}$, then $r(\Theta)=d(\theta_2)$ and \eqref{eq:SnGncomp} gives us the inclusion
\be
\mathcal{G}_{k,\eta,n}(\Theta)=\mathcal{G}_n(\Theta)\cap \mathbb{B}(\mathbf{S}_{k,\eta},r(\Theta)/n)\subseteq \mathcal{S}_n(\theta_2)\cap \mathbb{B}(\mathbf{S}_{k,\eta},d(\theta_2)/n)=\mathcal{S}_{k,\eta,n}(\theta_2).
\ee
Then, \eqref{eq:minsep2} implies \eqref{eq:thm2-2}.\qed


\subsection{F-score proof}

In this section we give a proof for Theorem~\ref{thm:fscore}.

\begin{proof}
Observe that under the conditions of Theorem~\ref{thm:class_separation} we have
\be
\mu(\mathbf{S}_{k,\eta})\leq \mu(\mathcal{G}_{k,\eta,n})\leq \mu(\mathbf{S}_{k,\eta})+\mu(\mathbf{S}_{K_\eta+1,\eta}).
\ee
Therefore,
\be
\mathcal{F}_\eta(\mathcal{G}_{k,\eta,n})\geq 2\frac{\mu(\mathbf{S}_{k,\eta})}{2\mu(\mathbf{S}_{k,\eta})+\mu(\mathbf{S}_{K_{\eta}+1,\eta})}\geq 1-\frac{\mu(\mathbf{S}_{K_{\eta}+1,\eta})}{2\mu(\mathbf{S}_{k,\eta})+\mu(\mathbf{S}_{K_{\eta}+1,\eta})}.
\ee
Also, $\{\mathcal{G}_{k,\eta,n}\}_{k=1}^{K_\eta}$ is a partition of $\mathcal{G}_n(\Theta)\supseteq \mathbb{X}$. Then,
\be
\sum_{k=1}^{K_\eta}\mu(\mathcal{G}_{k,\eta,n})\mathcal{F}_\eta(\mathcal{G}_{k,\eta,n})\geq 1-\mu(\mathbf{S}_{K_{\eta}+1,\eta})\sum_{k=1}^{K_\eta}\frac{\mu(\mathcal{G}_{k,\eta,n})}{2\mu(\mathbf{S}_{k,\eta})+\mu(\mathbf{S}_{K_\eta+1,\eta})},
\ee
further implying
\be
1\geq \mathcal{F}_\eta\left(\{\mathcal{G}_{k,\eta,n}\}_{k=1}^{K_\eta}\right)\geq 1-\frac{\mu(\mathbf{S}_{K_{\eta}+1,\eta})}{2\min_{k\in \{1,\dots,K_\eta\}}\mu(\mathbf{S}_{k,\eta})}.
\ee
Thus, by our assumption
\be
\lim_{\eta\to 0^+}\mathcal{F}_\eta\left(\{\mathcal{G}_{k,\eta,n}\}_{k=1}^{K_\eta}\right)\leq \lim_{\eta\to 0^+} 1-\frac{\mu(\mathbf{S}_{K_{\eta}+1,\eta})}{2\min_{k\in \{1,\dots,K_\eta\}}\mu(\mathbf{S}_{k,\eta})}=1.
\ee
\end{proof}

\section{Conclusions}

In this paper, we have introduced a new approach for active learning in the context of machine learning. 
We provide theory for measure support estimation based on finitely many samples from an unknown probability measure supported on a compact metric space. 
With an additional assumption on the measure, known as the fine structure, we then relate this theory to the classification problem, which can be viewed as estimating the supports of a measure of the form $\mu=\sum_{k=1}^K c_k\mu_k$. We have shown that this setup is a generalization of the well-known signal separation problem. Therefore our theory unifies ideas from signal separation with machine learning classification. Since the measures we are considering may be supported on a continuum, our theory additionally relates to the super-resolution regime of the signal separation problem. 

We also give some empirical analysis for the performance of our new algorithm MASC, which was originally introduced in a varied form in \cite{mhaskar-odowd-tsoukanis}. The key focus of the algorithm is on querying high-information points whose labels can be extended to others belonging to the same class with high probability. This is done in a multiscale manner, with the intention to be applied to data sets where the minimal separation between supports of the measures for different classes may be unknown or even zero. We applied MASC to a document data set as well as two hyperspectral data sets, namely subsets of the Indian Pines and Salinas hyperspectral imaging data sets. In the process of these experiments, we demonstrate empirically that MASC is selecting high-information points to query and that it gives competitive performance compared to two other recent active learning methods: LAND and LEND. Specifically, MASC consistently outperforms these algorithms in terms of computation time and exhibits competitive accuracy on Indian Pines for a broad range of query budgets.

 \newpage

\bibliographystyle{abbrv}
\bibliography{refs.bib}

\end{document}